\documentclass{article} 
\usepackage{iclr2026_conference,times}


\usepackage{amsmath,amsfonts,bm}









\def\eqref#1{equation~\ref{#1}}









\def\1{\bm{1}}










\DeclareMathAlphabet{\mathsfit}{\encodingdefault}{\sfdefault}{m}{sl}
\SetMathAlphabet{\mathsfit}{bold}{\encodingdefault}{\sfdefault}{bx}{n}













\usepackage{hyperref}
\usepackage{url}

\usepackage{xcolor}         
\usepackage{amsmath, amsthm, amssymb}
\usepackage{amsfonts} 
\usepackage{subcaption} 

\usepackage{wrapfig}   
\usepackage{lipsum}  
\usepackage{mleftright}
\usepackage{multirow}
\usepackage{ragged2e}

\usepackage{longtable}

\usepackage{graphicx}
\usepackage{algorithm}
\usepackage{algorithmic}
\usepackage{booktabs}   
\usepackage{caption}    
\newtheorem{theorem}{Theorem}
\newtheorem{lemma}{Lemma}
\newtheorem{assumption}{Assumption}
\newtheorem{definition}{Definition}


\newcommand{\cM}{\mathcal{M}} 
\newcommand{\cS}{\mathcal{S}} 




\usepackage{subcaption} %
\usepackage{makecell}

\title{Near-Optimal Online Deployment and Routing for Streaming LLMs}


\author{Shaoang~Li, Jian~Li\\
Stony Brook University 
}

%

\iclrfinalcopy 
\begin{document}

\maketitle

\begin{abstract}
The rapid pace at which new large language models (LLMs) appear, and older ones become obsolete, forces providers to manage a streaming inventory under a strict concurrency cap and per-query cost budgets. We cast this as an online decision problem that couples \textit{stage-wise deployment} (at fixed maintenance windows) with \textit{per-query routing} among live models. We introduce \texttt{StageRoute}, a hierarchical algorithm that (i) optimistically selects up to $M_{\max}$ models for the next stage using reward upper-confidence and cost lower-confidence bounds, and (ii) routes each incoming query by solving a budget- and throughput-constrained bandit subproblem over the deployed set. We prove a regret of $\tilde{\mathcal{O}}(T^{2/3})$ with a matching lower bound, establishing near-optimality, and validate the theory empirically: \texttt{StageRoute} tracks a strong oracle under tight budgets across diverse workloads.
\end{abstract}

\section{Introduction}
\label{sec:introduction}

The proliferation of LLMs has transformed a broad array of applications, delivering unprecedented advances in natural-language understanding and generation \citep{radford2019language,DBLP:conf/nips/BrownMRSKDNSSAA20,wang2023selfconsistency,DBLP:journals/corr/abs-2303-08774,DBLP:journals/jmlr/ChowdheryNDBMRBCSGSSTMRBTSPRDHPBAI23,DBLP:journals/corr/abs-2302-13971}. Yet LLMs differ markedly in both performance and cost: some offer state-of-the-art capabilities at a premium, while others are more affordable but less effective. Practitioners therefore face a continual accuracy-expenditure tradeoff when deciding which models to operate and when to use them. 
This has motivated \emph{LLM routing} \citep{conf/iclr/DingM0SMRLA24,hu2024routerbench}, where a system chooses, query by query, which model to invoke to maximize task quality under cost constraints.
However, focusing solely on per-query routing overlooks a more fundamental decision that precedes it: \textbf{which models are deployed at all. }

In practice, the operational landscape is unusually fluid. New models arrive continuously with  distinct accuracy, latency, and pricing profile \citep{feng2025graphrouter}, while production systems must respect hard limits such as rate ceilings and deployment quotas. For example, Azure OpenAI Service caps each resource at 32 standard and 5 fine-tuned deployments by default, and enforces model-specific rate ceilings (e.g., for GPT-4.1: 1,000 requests per minute (RPM) and 1M tokens per minute (TPM)) \citep{msazure2025quotas}. This confluence of a dynamic model pool and strict operational caps recasts the problem into two timescales: a slower \textit{stage-wise} deployment process that decides which models stay alive under a concurrency cap, and a faster \textit{per-query} routing process that assigns each request among the currently deployed models while meeting budget and throughput constraints. \textit{The deployment choice is foundational, since it determines the entire action space for any routing policy.} Table~\ref{tab:related_work} maps recent LLM routing systems to three axes and highlights a gap in current approaches.

\begin{figure*}[t!]
\centering
\includegraphics[width=0.9\textwidth]{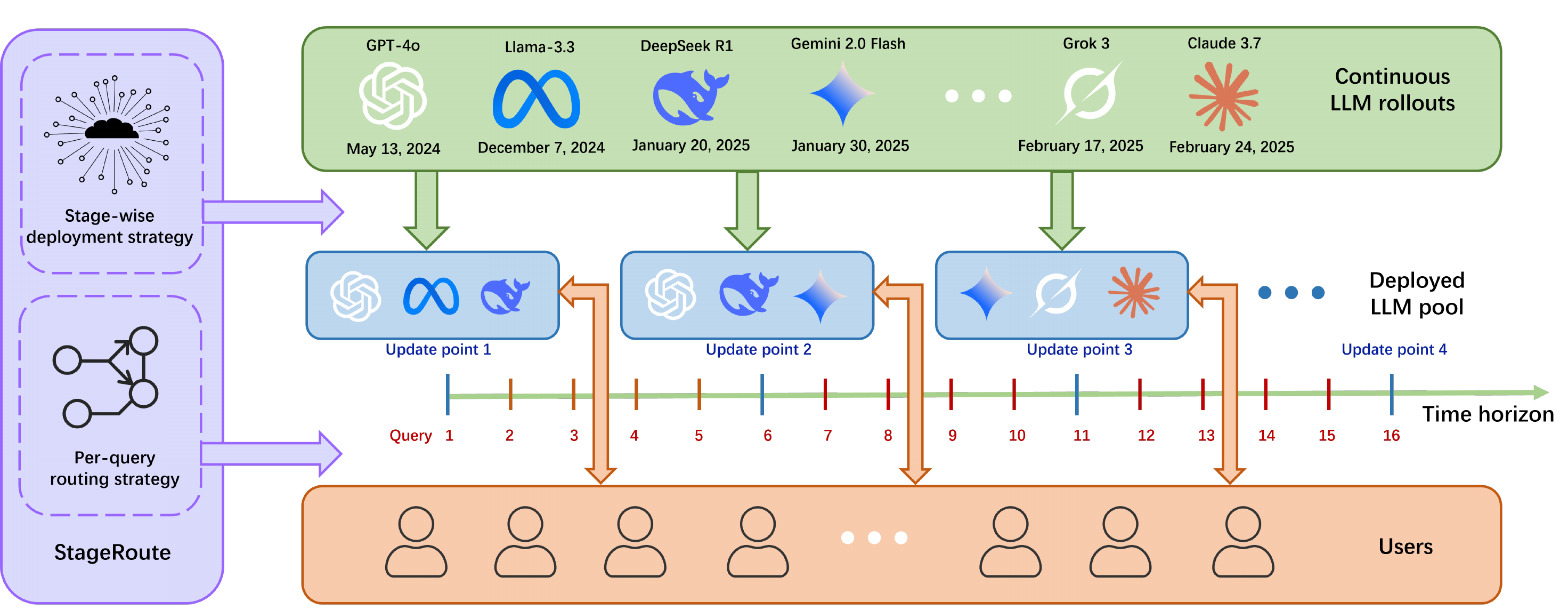}
\caption{The \texttt{StageRoute} workflow. Newly released LLMs (green) continually enter the candidate pool. At each scheduled update point, \texttt{StageRoute} deploys up to $M_{\max}$ models (blue). Between updates, each query is routed among the current deployment (orange). This two-level loop assimilates fresh models, enforces cost/throughput constraints, and adapts routing in real time.}
\label{fig:Flowchart}
\vspace{-0.22in}
\end{figure*}

We study this setting as \emph{an online decision problem} that couples two intertwined choices (Figure~\ref{fig:Flowchart}): (1) \emph{stage-wise deployment} at fixed update points, where the operator decides which models to deploy for the next stage subject to a hard concurrency cap and deployment costs. This high-stakes decision defines the \textit{action set} 
for the subsequent execution of (2) \emph{per-query routing}, where each incoming query is sent to one of the currently deployed models to maximize quality while obeying a long-term cost budget and per-model throughput limits. Unlike approaches that assume a static model pool or rely on fully offline retraining, our framework admits streaming arrivals of new LLMs and enforces \emph{active-set replacement}: admitting a newcomer may require evicting an incumbent for the rest of the stage. This mirrors real service constraints while enabling continual adaptation.

Three features make this problem technically distinct. First, the hard concurrent-deployment cap induces an \emph{irreversible} exploration-exploitation tradeoff: activating an uncertain model can mean dropping a known, reliable one for an entire stage. Second, decisions occur on two timescales: infrequent, strategic deployment choices constrain frequent, tactical per-query routing. Third, the system must jointly respect a long-term cost budget and per-model throughput limits while selecting a small operational subset under uncertainty. Classical multi-armed bandits (MAB), budgeted formulations (BwK), combinatorial bandits (CMAB), and standard streaming bandits each capture \emph{parts} of this picture, but none natively address \textit{the combination of dynamic availability, staged commitment, and an explicit concurrency cap on the active set.}

To address these challenges, we introduce \texttt{StageRoute}, an algorithm that mirrors the problem’s hierarchy (Figure~\ref{fig:Flowchart}). At each update boundary, a \emph{deployment phase} selects the active set for the next stage using optimistic estimates of model quality (UCB) and conservative estimates of cost (LCB), honoring the global budget, per-model throughput limits, and the $M_{\max}$ concurrency cap. Within the stage, each query triggers a \emph{routing phase}: a linear program over the \emph{currently deployment} returns a distribution that maximizes estimated quality under the same constraints, and the query is dispatched by sampling accordingly. \textit{This two-level loop links strategic deployment to fine-grained, adaptive routing, allowing the system to assimilate new information both across and within stages.}\footnote{Relative to nearby bandit frameworks: static-pool routing assumes fixed arms; BwK models consumable budgets but not stage-level active-set replacement; CMAB selects superarms from a fixed base set without streaming arrivals or stage commitment; streaming bandits allow arrivals but do not couple stage-level support selection with per-query routing under both a budget and per-model capacity. See Appendix~\ref{sec:related} for more details.}

Our contributions in this paper are summarized as follows: 

\noindent $\bullet$ \textbf{Problem formulation.} To our knowledge, we are the first to formalize the \emph{online LLM deployment and routing problem with streaming arrivals}, explicitly modeling a hard concurrency cap, one-time deployment costs, per-model throughput limits, and a long-term cost budget, with stage-level commitment and per-query routing.

\noindent $\bullet$ \textbf{Algorithm.} We introduce \texttt{StageRoute}, which (i) selects an active set at each update using optimistic performance (UCB) and conservative cost (LCB) estimates under the budget, throughput limits, and the $M_{\max}$ concurrency cap, and (ii) routes each query by solving a budget–throughput LP over the \emph{currently deployed} models. \textit{The design is modular: the routing step can incorporate contextual estimators when features are available, while the deployment step remains unchanged; throughput limits naturally throttle load to mitigate latency spikes.}

\noindent $\bullet$ \textbf{Theoretical guarantees.} We prove a regret bound of
$\widetilde{\mathcal{O}}\!\big(\sqrt{M_{\max} K T}\big) + \widetilde{\mathcal{O}}\!\big(N T / (M_{\max} K)\big)$, where $T, K, M_{\max}, N$ are the numbers of queries, update stages,  the concurrency cap and arriving models, respectively. 
The first term captures the statistical learning cost of routing within the deployed set; it grows with the number of active models $M_{\max}$, stages $K$, and horizon $T$. The second term is a structural \emph{model-discovery bottleneck} that quantifies the difficulty of discovering strong newcomers when only $M_{\max}$ models can be live across $K$ stages as $N$ models arrive. Balancing the two yields a near-optimal $\widetilde{\mathcal{O}}(T^{2/3})$ order, and we give a matching $\Omega(T^{2/3})$ lower bound via a staged-arrival construction. Analytically, we use LP-duality sensitivity with an explicit \emph{support} (active-set) constraint and a regret decomposition that separates routing and deployment across stages.

\noindent $\bullet$ \textbf{Empirical evaluation.} Simulations show \texttt{StageRoute} tracks a strong oracle under tight budgets and is robust across key parameters. We evaluate on \emph{true per-query} scores and costs (RouterBench) across diverse queries, tasks, and languages, demonstrating effectiveness in realistic settings.

\begin{table}[t]
    \centering
    \caption{Comparison of LLM routing frameworks. \texttt{StageRoute} is the first to address the full real-world setting: a dynamic model pool with streaming arrivals, paired with dynamic stage-wise deployment under a concurrency cap ($M_{\max}$) and  cost- and throughput-aware routing.}
    \label{tab:related_work}
    \resizebox{\linewidth}{!}{%
        \begin{tabular}{@{}lccccc@{}}
        \toprule
        \thead{Approach} & \thead{Streaming \\ LLM Models} & \thead{Dynamic Deployment \\ (with $M_{\max}$ cap)} & \thead{Cost \& Budget \\ Aware} & \thead{Throughput \\ Limits} & \thead{Source} \\
        \midrule
        LLM-Blender        & -- & -- & -- & -- & \citet{conf/acl/Jiang0L23} \\
        AutoMix            & -- & -- & -- & -- & \citet{conf/nips/AggarwalMAPMZGR24} \\
        Hybrid-LLM         & -- & -- & -- & -- & \citet{conf/iclr/DingM0SMRLA24} \\
        Zooter             & -- & -- & -- & -- & \citet{conf/naacl/LuYLLYZZ24} \\
        RouterDC           & -- & -- & -- & -- & \citet{conf/nips/ChenJLK024} \\
        TensorOpera Router & -- & -- & \checkmark & -- & \citet{conf/emnlp/StripelisX0SJYZ24} \\
        RouteLLM           & -- & -- & \checkmark & -- & \citet{ong2025routellm} \\
        MESS+              & -- & -- & \checkmark & -- & \citet{journals/corr/abs-2505-19947} \\
        UniRoute           & \checkmark & -- & \checkmark & -- & \citet{journals/corr/abs-2502-08773} \\
        CSCR               & \checkmark & -- & \checkmark & -- & \citet{journals/corr/abs-2508-12491} \\
        \hline
        \texttt{StageRoute} \textbf{(ours)} & \textbf{\checkmark} & \textbf{\checkmark} & \checkmark & \textbf{\checkmark} & This paper \\
        \bottomrule
        \end{tabular}%
    }
    \vspace{-1em}
\end{table}

\vspace{-0.1in}
\section{System Model}\label{sec:system_model}
\vspace{-0.1in}

We study an \emph{online LLM routing framework}: at each round $t$, a query arrives and must be routed to a suitable LLM. New models may appear at any time, yet they can be \textit{activated} only at discrete deployment intervals (Figure~\ref{fig:Flowchart}). We describe each component of the model in detail.

\textbf{Rounds (Queries).} Let $T$ denote the total number of user queries, indexed by $[T] = \{1,2,\dots,T\}$. At each time step $t \in [T]$, a query arrives and is immediately routed to an LLM $m_t$ chosen from the currently deployed models according to the algorithm's policy.

\textbf{LLM Pool and Deployment Schedule.} Let $\mathcal{M}_t$ be the set of all LLM models that exist and could, in principle, be deployed by time $t$. Each model $m$ has an \emph{availability time} $t_m$; hence $m \in \mathcal{M}_t$ exactly when $t\geq t_m.$  
Deployment changes occur only at discrete intervals, a schedule determined by the algorithm. The algorithm partitions the time horizon $T$ into $K$ equal-length stages, where $K$ is a tunable hyperparameter of the algorithm. Each stage thus consists of $T/K$ rounds (assuming $T$ is divisible by $K$). 
The start of stage $k$ is $\tau_k=(k-1)T/K + 1, k=1,\dots,K.$
Let $M_{\max}$ be the \emph{concurrency cap}, i.e., the maximum number of models that can be deployed simultaneously. At each update point $\tau_k$, the algorithm $\mathcal{A}$ selects a deployed set $\mathcal{D}_k(\mathcal{A}) \subseteq \mathcal{M}_{\tau_k}$ such that $|\mathcal{D}_k(\mathcal{A})| \le M_{\max}$, which then remains fixed for all $t\in[\tau_k,\tau_{k+1}).$ Queries arriving during that stage must be routed to models in the active set $\mathcal{D}_k(\mathcal{A})$. Thus $\mathcal{M}_t$ is \textit{the available pool} at time $t$, while $\mathcal{D}_k(\mathcal{A})$ is \textit{the active subset} that can actually serve queries during stage $k$.

\textbf{Operational Performance and Cost.} 
An LLM’s per-prompt quality varies with the input. However, over a long time horizon, it can be reasonably modeled as a random variable centered around a stable mean \citep{conf/iclr/DingM0SMRLA24}. 
Formally, each model $m$ has an unknown performance distribution $\nu_m(\cdot)$ supported on $[0,1]$. When $m$ is selected at time $t$, the observed score $r_t \in [0,1]$ is drawn from $\nu_m(\cdot)$ with mean
    $\mu_m
    \;=\;
    \mathbb{E}_{x \sim \nu_m(\cdot)}[x].$ 
Invoking model $m$ on a query also incurs an \emph{operational cost} $c_{m_t}$ that combines:
(i) \emph{Input Cost}: $c_{m_t}^{(\mathrm{in})}
    = \bigl(\text{\# tokens of input at time } t\bigr) 
    \times
    p_{\mathrm{in}},$ where $p_{\mathrm{in}}$ is the per-token input price; 
(ii) \emph{Output Cost}:  $c_{m_t}^{(\mathrm{out})}
    =
    \bigl(\text{\#tokens of response by } m \text{ for query } t\bigr) 
    \times
    p_{\mathrm{out}},$ with output length drawn from a model-specific distribution $\xi_m(\cdot)$ and unit price $p_{\mathrm{out}}$.
    
Thus the total cost is $c_{m_t}
    =
    c_{m_t}^{(\mathrm{in})} + c_{m_t}^{(\mathrm{out})}.$ 
Because the output token count $c_{m_t}^{(\mathrm{out})}$ depends on the specific model and query and is sampled from $\xi_m(\cdot)$, the total operational cost $c_{m_t}$ for a query handled by model $m_t$ is itself a \textit{random variable.}
Operational cost $c_{m_t}$ is inherently bounded by per-token pricing and practical limits on sequence length and generation (e.g., context-window and token caps). Hence we assume $c_{m_t} \in [c_1, c_2]$ for known constants $0 < c_1 \le c_2 < \infty$.

\textbf{Constraints: Budget and Throughput.}
When a query arrives at time $t$, with stage index $k$ such that $\tau_k \le t < \tau_{k+1}$, the algorithm $\mathcal{A}$ selects a model $m_t\in \mathcal{D}_k(\mathcal{A})$. It then observes the reward $r_t \sim \nu_{m_t}(\cdot)$ and incurs cost $c_{m_t}$. The goal is to maximize average reward subject to two main constraints: 

(1) \emph{Budget constraint:}
$\mathbb{E}\bigl[ \frac{1}{T} \sum_{t=1}^{T} c_{m_t}\bigr] \leq b + o(1).$ We use an average cost constraint instead of a hard budget constraint because for long-running systems, it provides a degree of flexibility. From a theoretical analysis perspective, since the estimation errors for both performance and cost are governed by the same concentration inequalities, the expected budget violation is upper-bounded by the order of the performance regret. This makes the average cost constraint asymptotically equivalent to a hard constraint for a near-optimal algorithm.

(2) \textit{Per-model throughput limit:} For each deployed LLM model $m$, we specify a throughput limit $\alpha_m$. Let $p_t(m)$ be the probability that the routing policy assigns the query at time $t$ to model $m$. While $t$ lies in stage $k$, the policy must satisfy $p_t(m) \le \alpha_m, \forall m \in \mathcal{D}_k(\mathcal{A}).$ 
This constraint caps the instantaneous load share each model may receive. For a deterministic decision $m_t$, where $p_t(m_t)=1$, the requirement reduces to $\alpha_{m_t}\ge 1$. Such limits reflect real-world restrictions like API rate limits (RPM/TPM), bandwidth, or licensing, preventing any single model from being overwhelmed. Hence the deployed set’s aggregate throughput must be sufficient to serve every arrival, a condition we formalize next.

\begin{assumption}[Feasibility] \label{ass:feasible}
    The constraints are feasible. At every time $t$, there exists a subset $\cS \subseteq \cM_t$ with $|\cS| \le M_{\max}$ such that $\sum_{m \in \cS} \alpha_m \ge 1$. The budget $b$ is also large enough to admit a non-trivial routing policy. When required, we assume Slater's condition holds, guaranteeing strong duality for the associated optimization problems.
\end{assumption}

\textbf{Performance Maximization and Regret.}
The goal is to maximize the \emph{expected cumulative performance}, $\mathbb{E}[\sum_{t=1}^{T} \mu_{m_t}]$ 
subject to: (i) \textit{Deployment Choice:} At each update $\tau_k$, select a deployed set $\mathcal{D}_k(\mathcal{A}) \subseteq \mathcal{M}_{\tau_k}$ with $|\mathcal{D}_k(\mathcal{A})| \leq M_{\max}$; (ii) \textit{Model Selection:} For $t \in [\tau_k, \tau_{k+1})$, choose $m_t \in \mathcal{D}_k(\mathcal{A})$ using probabilities $p_t(m)$ that sum to $1$; (iii)  \textit{Throughput Constraint:} Ensure $p_t(m) \le \alpha_m$ for every deployed model $m$; and (iv) \textit{Budget Constraint:} Maintain $\mathbb{E}\bigl[ \frac{1}{T} \sum_{t=1}^{T} c_{m_t}\bigr] \leq b + o(1)$.

We measure the online policy's performance against an optimal offline benchmark. The foundation of this benchmark is the \emph{Optimal Performance Rate Function}, $V(b, \mathcal{S})$. Given any candidate model set $\mathcal{S}$ and per-query budget $b$, let $V(b, \mathcal{S})$ denote the maximum expected reward per query, which serves as an upper bound for any algorithm operating under these constraints:
\begin{equation}
\begin{aligned}
    V(b, \mathcal{S}) = \max_{ p \in \Delta(\mathcal{S})} \Big\{ &\sum_{m \in \mathcal{S}} \mu_m p(m) \; \Big| \; \sum_{m \in \mathcal{S}} \mathbb{E}[c_m] p(m) \le b, \sum_{m \in \mathcal{S}} p(m) = 1,\\
    & \qquad\qquad 0 \le p(m) \le \alpha_m \text{ for } m \in \mathcal{S}, |\text{supp}(p)| \le M_{\max} \Big\}.
\end{aligned}
\end{equation}
Here, $\Delta(\mathcal{S})$ is the set of probability distributions over $\mathcal{S}$; $p(m)$ is the probability of selecting model $m$; and $\mathbb{E}[c_m]$ is its expected cost. The support constraint $|\text{supp}(p)| \le M_{\max}$ limits the number of models with positive probability to  $M_{\max}$, capturing the combinatorial selection of the best $M_{\max}$ models to use for routing from the entire available pool $\mathcal{M}_{\tau_k}$. If no feasible distribution exists or if $(\mathcal{S}=\varnothing)$, we set $V(b,\mathcal{S})=0$.

The \textit{time-varying offline optimum} is $\text{OPT}^* = \sum_{k=1}^{K} (\tau_{k+1} - \tau_k) \cdot V(b, \mathcal{M}_{\tau_k}),$
where $(\tau_{k+1} - \tau_k)$ is the length (number of queries) of stage $k$.
The regret of an online policy $\mathcal{A}$ is 
\begin{align}\label{eq:regret}
    \text{Regret}(\mathcal{A}) = \text{OPT}^* - \mathbb{E}\biggl[\sum_{t=1}^{T} \mu_{m_t}\biggr],
\end{align}
i.e., the expected performance gap between $\mathcal{A}$ and the clairvoyant benchmark, where the expectation is over the algorithm’s random choices and outcome variability.

\section{\texttt{StageRoute}: Stage-Based LLM Deployment and Routing}
\label{sec:algorithm}

We introduce \texttt{StageRoute} (Algorithm~\ref{alg:integrated_framework}), a two-level hierarchy that unifies deployment and per-query routing in one algorithm: (i) \emph{Strategic layer.} At each discrete update point $\tau_k$, the algorithm decides which models to deploy, adapting to newly available LLMs while respecting the budget and operational constraints. (ii) \emph{Tactical layer.} Between updates, it routes every incoming query in real time among the currently deployed models. The system starts with prior parameter estimates, an empty unexplored-model list, $\tau_0=0$, and an empty initial deployment $\mathcal{D}_0(\mathcal{A})$. It then proceeds through $K$ stages, and at the start of each stage $k$ ($t=\tau_k$), the algorithm executes two phases in sequence: \textbf{model deployment} followed by \textbf{request routing}.

\begin{algorithm}[t]
\caption{\texttt{StageRoute}: stage-based LLM deployment (active-set selection) and online query routing}
\label{alg:integrated_framework}
\begin{algorithmic}[1]
\REQUIRE Update points $\{\tau_1,\dots,\tau_K\}$; budget $b$; concurrency cap $M_{\max}$
\STATE \textbf{Initialize:} Prior parameter estimates; $\tau_0\leftarrow0$; $\mathcal{D}_0(\mathcal{A})\leftarrow\emptyset$
\STATE $//$\textbf{Stage-wise Deployment Phase:}
\FOR{$k=1$ to $K$} 
    \STATE Incorporate newly available models $\{m\mid t_m\le\tau_k < t_m\!+\!(\tau_k\!-\!\tau_{k-1})\}$; initialize their parameters
    \STATE Solve \texttt{DeployOPT}~(\ref{eq:deploy_opt_merged}) for $d^*$ and set $\mathcal{D}_k(\mathcal{A}) \leftarrow \{ m \mid d^*_m > 0 \}$
        \STATE $//$\textbf{Per-query Routing Phase (for query at time $t$):}
    \FOR{$t=\tau_k$ to $\tau_{k+1}-1$ (or to $T$ if $k=K$)} 
        \STATE Compute routing distribution $p_t^*$ by solving \texttt{RouteOPT}~(\ref{eq:routing_opt_merged})
        \STATE Sample $m_t \sim p_t^*$ and route the query to it
        \STATE Observe reward $r_t$ and cost $c_t$; update statistics for $m_t$
    \ENDFOR
\ENDFOR
\end{algorithmic}
\end{algorithm}

\textbf{Model Deployment Phase (Stage Start).} At each update point $\tau_k$, \texttt{StageRoute} first incorporates any newly available models (those with $\tau_{k-1} < t_m \le \tau_k$) and initializes their parameter estimates. With the enlarged pool $\mathcal{M}_{\tau_k}$, \texttt{StageRoute} solves the deployment optimization in Eq.~(\ref{eq:deploy_opt_merged}) to pick the models for stage $k$:
 \begin{equation} \label{eq:deploy_opt_merged}
        \begin{aligned}
          \texttt{DeployOPT:}\quad  &\max_{d \in \Delta(\mathcal{M}_{\tau_k})} \Big\{ \sum_{m \in \mathcal{M}_{\tau_k}} \mu^U_m d_m \; \Big| \; \sum_{m \in \mathcal{M}_{\tau_k}} c^L_{m} d_m \leq b, \sum_{m \in \mathcal{M}_{\tau_k}} d_m = 1, \\ 
          &\qquad\quad  0 \le d_m \le \alpha_m \text{ for } m \in \mathcal{M}_{\tau_k}, |\text{supp}(d)| = \min ( M_{\max}, |\mathcal{M}_{\tau_k}|) \Big\}.
    \end{aligned}
\end{equation}
This optimization problem is a Mixed-Integer Program (MIP). The combinatorial nature arises from the cardinality constraint on the support of $d$, which limits the number of active models. A standard way to formulate this is by introducing a binary activation variable $z_m \in \{0,1\}$ for each model. The full stage-$k$ deployment problem can then be written as:
\begin{align*}
\max_{d,z} \sum_{m\in\mathcal M_{\tau_k}} \mu^U_m d_m, 
\quad\text{s.t.}\quad &\sum_m c^L_m d_m \le b, ~\sum_m d_m = 1, ~0\le d_m \le \alpha_m z_m, \\
&~\sum_m z_m = \min(M_{\max},|\mathcal M_{\tau_k}|), ~z_m\in\{0,1\}.
\end{align*}
Here, the binary variables $z_m$ explicitly select which models are live for the stage, while the continuous variables $d_m$ represent an optimistic deployment mix. The solution to this MIP $d^*$ maximizes an optimistic performance surrogate using UCBs for rewards ($\mu^U_m$) and LCBs for costs ($c^L_m$), which are derived from data up to $\tau_k$.
Specifically, let $\bar{\mu}_m(\tau_k)$ and $\bar{c}_m(\tau_k)$ be the empirical mean reward and cost of model $m$ based on $N_m(\tau_k)$ selections observed up to $\tau_k$. Define the UCBs and LCBs as:
\begin{align}\label{eq:ucb_deploy_frad_example_revised}
\mu^U_m &:= \text{proj}_{[0,1]} \left( \bar{\mu}_m(\tau_k) + 2 f_{rad}(\bar{\mu}_m(\tau_k), N_m(\tau_k) + 1) \right), \\
\label{eq:lcb_deploy_frad_example_revised}
c^L_m &:= \text{proj}_{[c_1, c_2]} \left( \bar{c}_m(\tau_k) - 2 f_{rad}(\bar{c}_m(\tau_k), N_m(\tau_k) + 1) \right),
\end{align}
where $\text{proj}_{[a,b]}$ is a projection function onto the interval $[a,b]$ and $f_{rad}(v,n) = \sqrt{\frac{\gamma v}{n}} + \frac{\gamma}{n}$ (for some $\gamma > 0$) is a confidence radius function.

The deployment optimization in Eq.~(\ref{eq:deploy_opt_merged}) maximizes expected utility subject to the budget $b$, per-model throughput limits $\alpha_m$, and the concurrency cap $|\text{supp}(d)| = \min ( M_{\max}, |\mathcal{M}_{\tau_k}|)$. Its solution $d^*$ determines the active set for stage $k$: 
\begin{align}
\mathcal{D}_k(\mathcal{A}) \leftarrow \{ m \in \mathcal{M}_{\tau_k} \mid d^*_m > 0 \}.
\end{align}
Crucially, the values $d^*_m$ are \emph{not} used as routing probabilities; they serve only to select the most promising feasible models. The deployment set $\mathcal{D}_k(\mathcal{A})$ remains fixed until the next update point $\tau_{k+1}$.

\textbf{Request Routing Phase (Intra-Stage).} For each query arriving at time $t\in[\tau_k,\tau_{k+1})$, \texttt{StageRoute} performs four steps: (1) \emph{Routing LP.} It solves the linear program (LP) in Eq.~(\ref{eq:routing_opt_merged}) to compute the optimal routing distribution $p_t^* = (p_t^*(m))_{m \in \mathcal{D}_k(\mathcal{A})}$ over the \textit{currently deployed models} $\mathcal{D}_k(\mathcal{A})$. 
 \begin{equation} \label{eq:routing_opt_merged}
            \begin{aligned}
              \texttt{RouteOPT:}\quad  \max_{p_t \in \Delta(\mathcal{D}_k(\mathcal{A}))} \Big\{ &\sum_{m \in \mathcal{D}_k(\mathcal{A})} \mu^U_m p_t(m) \; \Big| \; \sum_{m \in \mathcal{D}_k(\mathcal{A})} c^L_m p_t(m) \leq b, \\
                &\quad \sum_{m \in \mathcal{D}_k(\mathcal{A})} p_t(m) = 1, 0 \le p_t(m) \le \alpha_m \text{ for } m \in \mathcal{D}_k(\mathcal{A}) \Big\}.
            \end{aligned}
            \end{equation}
This LP maximizes the expected reward by combining the current UCBs for reward ($\mu^U_m$) and LCBs for cost ($c^L_m$) while enforcing the per-query budget $b$ (through the cost bounds) and each model's throughput limit $\alpha_m$. (2) \emph{Model selection.} Sample $m_t \sim p_t^*$ and serve the query. (3) \emph{Feedback.}  \texttt{StageRoute} observes the realized reward $r_t$ and cost $c_t$. (4) \emph{Update.} Refresh $\bar{\mu}_{m_t}$, $\bar{c}_{m_t}$, $N_{m_t}$ and recompute $\mu^U_{m_t}, c^L_{m_t}$ for subsequent routing and the next deployment decision.

\textbf{Algorithmic Innovations.} While \texttt{StageRoute} builds on the principle of ``optimism in the face of uncertainty'', its architecture is tailored to dynamic LLM deployment and departs from standard bandit formulations. First, it imposes a hierarchical decision structure that mirrors operational practice: the deployment decision is a high-stakes, combinatorial choice whose consequences persist for an entire stage, a form of long-term commitment absent from standard, per-round bandits. Second, staged updates create a structured delay in acting on feedback. Information about non-deployed models cannot influence decisions until the next stage, inducing an exploration-exploitation tradeoff that requires anticipating performance over the whole stage, not just the next round. Finally, \texttt{StageRoute} decouples deployment from routing execution: \texttt{DeployOPT} determines only the active set $\mathcal{D}_k(\mathcal{A})$, while the per-query policy is recomputed online via \texttt{RouteOPT}. This separation enables rapid query-level adaptation even when the underlying infrastructure remains fixed during a stage.

\section{Theoretical Results}
\label{sec:analysis}

We analyze \texttt{StageRoute} (Algorithm~\ref{alg:integrated_framework}) by deriving an upper bound on its cumulative regret and a matching lower bound that applies to any online algorithm for this problem. Together, these results show that \texttt{StageRoute} is near-optimal in the worst case.

\subsection{Upper Bound}

\begin{theorem}
\label{thm:total_regret}
Consider \textnormal{\texttt{StageRoute}} running for $T$ queries divided into $K$ stages, with a concurrency cap $M_{\max}$ and $N=|\mathcal{M}_T|$ total models arriving over time. Set the confidence parameter to $\gamma=\Theta(\log(NT/\delta))$ to obtain overall confidence $1-\delta$. Then the expected regret is bounded by:
\[
\textnormal{Regret}(\textnormal{\texttt{StageRoute}}) \le \mathcal{O}\left( \sqrt{M_{\max} K T \log(NT/\delta)}+ \frac{N T}{M_{\max} K} \right).
\]
Choosing $K=\Theta(T^{1/3})$ and $M_{\max}=\Omega(N^{2/3})$ yields
$\textnormal{Regret}(\textnormal{\texttt{StageRoute}}) \le \widetilde{\mathcal{O}}\left( N^{1/3} T^{2/3} \right)$.
\end{theorem}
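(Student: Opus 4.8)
The plan is to decompose the regret into three conceptually distinct sources and bound each separately, then optimize over the free parameters $K$ and $M_{\max}$. Write $\text{Regret}(\texttt{StageRoute}) = \text{OPT}^* - \E[\sum_t \mu_{m_t}]$ and insert two intermediate quantities: first the sum over stages of the stage-length times the \emph{optimistic} LP value obtained by \texttt{RouteOPT} under the UCB/LCB surrogates, and second the sum over stages of $V(b,\mathcal{D}_k(\mathcal{A}))$, the true optimal rate achievable on the deployed set. This yields three terms: (a) a \textbf{deployment-suboptimality} term $\sum_k \frac{T}{K}\bigl(V(b,\mathcal{M}_{\tau_k}) - V(b,\mathcal{D}_k(\mathcal{A}))\bigr)$ measuring how much is lost by restricting to at most $M_{\max}$ models; (b) an \textbf{optimism} term comparing the true value $V(b,\mathcal{D}_k(\mathcal{A}))$ on the deployed set to the inflated LP objective solved during the stage — this should be nonpositive up to concentration, since replacing $\mu$ by $\mu^U \ge \mu$ and $\E[c]$ by $c^L \le \E[c]$ only relaxes the constraint and raises the objective, so on the good confidence event \texttt{RouteOPT}'s value dominates $V(b,\mathcal{D}_k(\mathcal{A}))$; and (c) an \textbf{estimation/exploration} term, the per-round gap between the optimistic LP objective and the realized reward $\mu_{m_t}$, summed over $t$.

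For term (c), the key observation is that $p_t^*$ is feasible for \texttt{RouteOPT} with the true means only up to the confidence radii, so the per-round gap is controlled by $\sum_{m \in \mathcal{D}_k(\mathcal{A})} p_t^*(m)\cdot\bigl(\mu^U_m - \mu_m\bigr) = O\bigl(\sum_m p_t^*(m) f_{rad}(\mu_m, N_m(t))\bigr)$. Summing over all $T$ rounds and using $\E[\sum_t \mathbb{I}\{m_t=m\}/\sqrt{N_m(t)}] = O(\sqrt{N_m(T)})$ together with the fact that at most $M_{\max}K$ distinct (model, stage) pairs are ever active — more precisely, at most $M_{\max}$ models per stage and $K$ stages, so the usage counts distribute over $\le M_{\max}K$ ``arm-slots'' — a Cauchy–Schwarz step gives $\sum (\text{slots}) \sqrt{N_{\text{slot}}} \le \sqrt{M_{\max}K \cdot T}$, and with the $\sqrt{\gamma}=\sqrt{\Theta(\log(NT/\delta))}$ factor this produces the $\sqrt{M_{\max}KT\log(NT/\delta)}$ term. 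The cost-constraint violation from using $c^L$ instead of $\E[c]$ must also be handled: I would argue that small per-round budget overuse translates (via the bounded cost range $[c_1,c_2]$ and Slater's condition from Assumption~\ref{ass:feasible}, giving bounded dual variables) into only an additive lower-order reward loss, or alternatively fold it in by a standard ``shrunk-budget'' argument.

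For term (a), the deployment gap, the argument is combinatorial: $V(b,\cdot)$ is a concave, monotone function of the available set, and the optimal distribution in $V(b,\mathcal{M}_{\tau_k})$ has support of size at most $M_{\max}$ already by its own cardinality constraint — wait, it does not, since $V$ allows support $\le M_{\max}$ too; the real gap is between what \texttt{DeployOPT} picks using surrogates versus the true optimum. On the good event the UCB/LCB deployment LP picks a set whose true $V$-value is close to optimal \emph{except} for models that have never (or barely) been explored, whose UCB is near $1$. There can be at most $N$ such under-explored models over the whole horizon, and each can cause at most a constant per-round loss until it is either deployed-and-explored or its UCB shrinks; amortizing $N$ ``expensive'' models over the $K$ deployment opportunities, each stage of length $T/K$, and noting that only $\sim M_{\max}$ slots are available per stage, yields the $\frac{NT}{M_{\max}K}$ term. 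I expect \textbf{term (a) to be the main obstacle}: making rigorous the claim that mis-deployment costs at most $O(NT/(M_{\max}K))$ requires carefully charging each poorly-estimated model to the stage in which it is (or should have been) deployed and exploited, and arguing that once a model has been deployed for a full stage its estimates are good enough that it no longer contributes — this interacts delicately with the streaming arrivals, since a model arriving late has had fewer stages to be explored. Finally, plugging $K = \Theta(T^{1/3})$ and $M_{\max} = \Theta(N^{2/3})$ balances $\sqrt{M_{\max}KT} = \sqrt{N^{2/3}T^{4/3}} = N^{1/3}T^{2/3}$ against $NT/(M_{\max}K) = NT/(N^{2/3}T^{1/3}) = N^{1/3}T^{2/3}$, giving the claimed $\tilde{\mathcal{O}}(N^{1/3}T^{2/3})$ bound.
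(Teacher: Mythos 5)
Your proposal follows essentially the same route as the paper: the same decomposition into a deployment gap $\sum_k (T/K)\bigl(V(b,\mathcal{M}_{\tau_k})-V(b,\mathcal{D}_k)\bigr)$ plus a routing term (your terms (b) and (c) together are exactly the paper's routing regret, with the optimism step and the confidence-radius/Cauchy--Schwarz summation over at most $M_{\max}K$ active model-stage pairs yielding $\sqrt{M_{\max}KT\log(NT/\delta)}$), the same amortization of $N$ unexplored models over $K$ stages of $M_{\max}$ slots to get the $NT/(M_{\max}K)$ discovery-bottleneck term, the same bounded-dual-variable treatment of the cost slack, and the same final balancing of $K$ and $M_{\max}$. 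The step you flag as the main obstacle (rigorously charging under-explored models) is handled in the paper at essentially the same level of informality, so your plan is a faithful match.
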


The two terms in the bound reflect complementary sources of difficulty. The first term, $\widetilde{\mathcal{O}}\big(\sqrt{M_{\max} K T}\big)$, is the statistical learning cost of routing within the deployed set. The second term, $\widetilde{\mathcal{O}}\big(NT/(M_{\max}K)\big)$, is a structural \emph{model-discovery bottleneck}: when only $M_{\max}$ models can be active at a time across $K$ stages, exploration is throttled. Strong late-arriving models can be missed unless sufficient deployment slots and update frequency are provisioned. Balancing these two terms gives the near-optimal $\widetilde{\mathcal{O}}(T^{2/3})$ rate, which matches the lower bound in Theorem~\ref{thm:lower_bound_llm}.

\noindent\textbf{Practical guidance.} The bound yields an actionable rule: to approach the optimal rate, set the number of stages to approximately $K \approx T^{1/3}$ (implying a stage length of $T^{2/3}$) and provision a concurrency cap $M_{\max}$ large enough to track new arrivals (ideally $M_{\max}\approx N^{2/3}$ when feasible). However, updating more frequently ($K \gg T^{1/3}$) is also ill-advised. Even if it does not increase regret, it cannot offer further asymptotic improvement due to the lower bound, while needlessly incurring significant computational and operational overhead with each additional deployment stage. This underscores that \emph{exploration capacity}, defined by the concurrency cap and update frequency, is itself a scarce system resource to be optimized. Relying on only a few ``top" models is provably suboptimal in a dynamic model pool.

\noindent\textbf{Why existing analyses do not apply.}
Classical MAB and BwK typically assume per-round choices from a static set and lack a hard \emph{concurrency cap}; CMAB selects superarms from a fixed base set and does not model stage-level commitment; streaming bandits allow arrivals but do not couple stage-committed deployment with budget and per-model throughput constraints. Our setting is distinct due to: (i) a concurrency cap that explicitly constrains the support of the deployment optimization, (ii) stage-committed deployment that induces a structured delay in acting on new information, and (iii) the simultaneous enforcement of a long-term budget and per-model throughput limits.

\noindent\textbf{Proof ideas and new technical elements.}
Our proof introduces a \emph{virtual optimal} deployment set to bridge the offline benchmark and the online policy, yielding a clean regret decomposition into stage-level deployment regret and intra-stage routing regret. The routing term is handled with standard confidence arguments. The deployment term requires two new ingredients: (a) a quantification of the \emph{model-discovery bottleneck} caused by the limited concurrency cap and discrete updates (showing how the $NT/(M_{\max}K)$ term arises), and (b) a support-aware sensitivity analysis of the deployment LP (via its dual), bounding how UCB/LCB estimation errors perturb the optimal active set under the concurrency constraint. Together, these yield Theorem~\ref{thm:total_regret}. These elements differ fundamentally from standard learning-regret analyses and may inform further work on staged, combinatorial online decision problems. Complete details appear in Appendix~\ref{app:upperbound}.

\subsection{Lower Bound}

\begin{theorem}\label{thm:lower_bound_llm}
For any online policy $\mathcal{A}$ and any choice of update frequency $K$ and concurrency cap $M_{\max}$, there exists a stochastic, piecewise-stationary LLM-routing instance such that the expected regret against the time-varying oracle satisfies
\[
\mathrm{Regret}(\mathcal{A}) \geq  \Omega( T^{2/3}).
\]
\end{theorem}

This $\Omega(T^{2/3})$ bound captures the intrinsic difficulty of continually tracking the best model as capabilities evolve. The construction mirrors real LLM ecosystems: in each \emph{batch} a (newer) model is marginally stronger than the rest, and the identity of the strongest model changes over batches. Importantly, the \emph{baseline level} of rewards also drifts upward over time, reflecting that even ``weaker'' new releases can outperform old ones. This is unlike classic lower bounds that keep suboptimal arms at a fixed mean (e.g., $1/2$) and change only the identity of the best arm.

\noindent\textbf{Proof ideas and Intuition.}
We construct a family of ``streaming" instances that mimics a live LLM marketplace:  the time horizon is split into approximately $T^{1/3}$ epochs, in each batch a different model is slightly better than the others, and the whole performance frontier drifts upward across batches (so even ``weaker" newcomers can surpass yesterday’s best). We choose the batch length and performance gaps so that any algorithm cannot reliably identify it with the limited information available before the epoch ends. Because the identity of the best model changes next batch, information gained earlier quickly goes stale. Any policy is thus forced into repeated ``partial discovery'', incurring a nontrivial loss in each batch, and summing over all batches yields total regret on the order of $T^{2/3}$. Full details are given in Appendix~\ref{app:lowerbound}.

\textbf{New technical elements.}
Two aspects differ from standard MAB/CMAB lower bounds: (i) we do not use a fixed baseline where only the identity of the best arm flips. Here the \emph{entire} frontier drifts, matching LLM practice; and (ii) the hardness persists even if the system can redeploy every round and keep all models live, so the rate is intrinsic to tracking an evolving frontier, not an artifact of staging or capacity limits.

\noindent\textbf{Discussion.}
Theorem~\ref{thm:lower_bound_llm} conveys a practical message: even with aggressive adaptivity and large live sets, there is a fundamental rate limit on how quickly a system can keep up as the performance frontier shifts. Our upper bound matches this lower bound up to logarithmic factors with respect to the number of queries $T$, establishing near-optimality.

\begin{figure}[t]
    \centering
    \begin{subfigure}{\linewidth}
        \centering
        \includegraphics[width=0.9\linewidth]{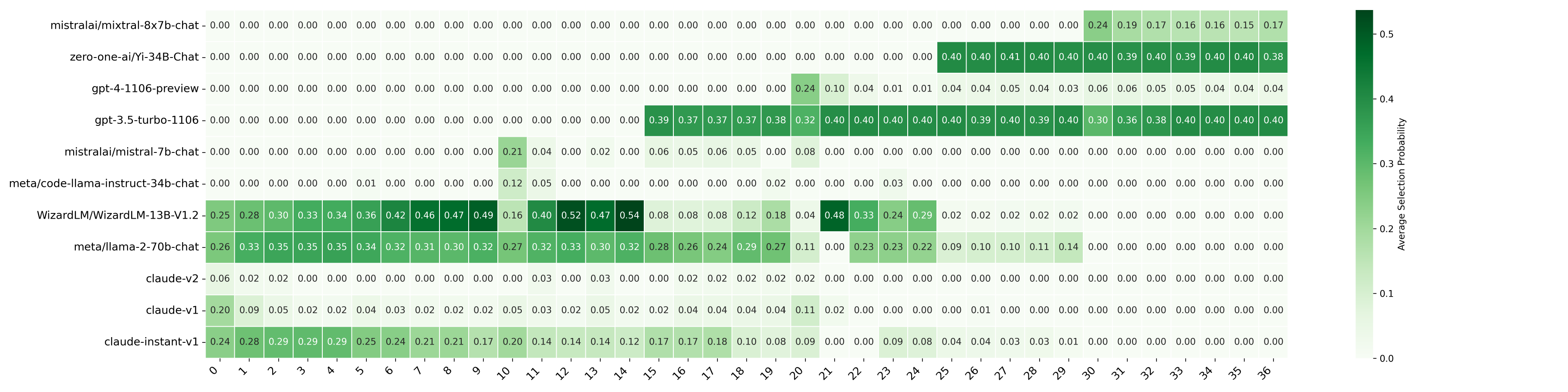}
        \caption{The decision heatmap of \texttt{StageRoute}.}
        \label{fig:heatmap_stage}
    \end{subfigure}
    \vspace{1em}
    \begin{subfigure}{\linewidth}
        \centering
        \includegraphics[width=0.9\linewidth]{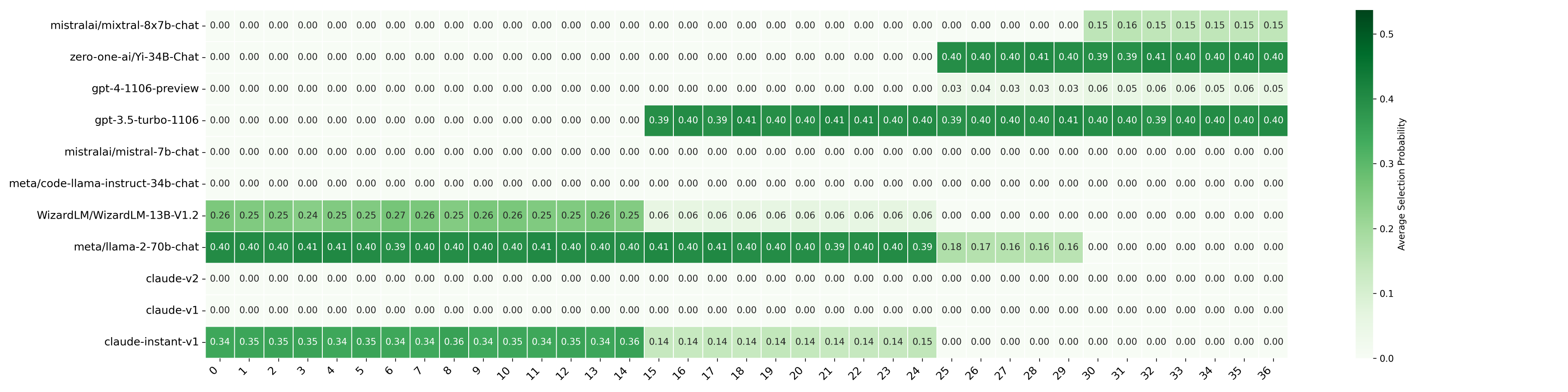}
        \caption{The decision heatmap of the optimal Oracle.}
        \label{fig:heatmap_oracle}
    \end{subfigure}
    \vspace{-0.2in}
    \caption{Comparison of decision heatmaps for \texttt{StageRoute} and the Oracle with 
    $M_{\max}=5, b=0.001$, update interval=1000. Darker colors indicate higher selection probabilities.}
    \label{fig:heatmap}
    \vspace{-0.2in}
\end{figure}

\section{Experiments}
\label{sec:experiments}

\textbf{Datasets and Candidate LLMs.} We evaluate on RouterBench \citep{hu2024routerbench}, covering 36,497 queries across eight datasets in English and Chinese (commonsense, knowledge, dialogue, math, code, and RAG). Each query includes responses from 11 LLMs with per-query scores and costs. Full dataset descriptions and the model list appear in Appendix~\ref{app:exp_details}.

\textbf{Baselines.} We compare \texttt{StageRoute} against three baselines. The first is an \emph{oracle} that, with full knowledge of all performance and cost statistics, always selects the optimal deployment set, serving as an upper bound on achievable performance. The second is a \emph{greedy} strategy that, at each update point, selects the $M_{\max}$ models with the highest utility, computed as the ratio between the UCB of performance and the LCB of cost. This approach can be viewed as a variant of a UCB algorithm where selection is based on the UCB of the utility metric. The third baseline is a \emph{uniform sampling} strategy, which randomly selects models for deployment and may substantially exceed the budget. To our knowledge, no existing methods are specifically designed for this LLM deployment problem.

\begin{wrapfigure}{r}{0.3\textwidth}
    \centering
    \vspace{-0.25in}
    \includegraphics[width=\linewidth]{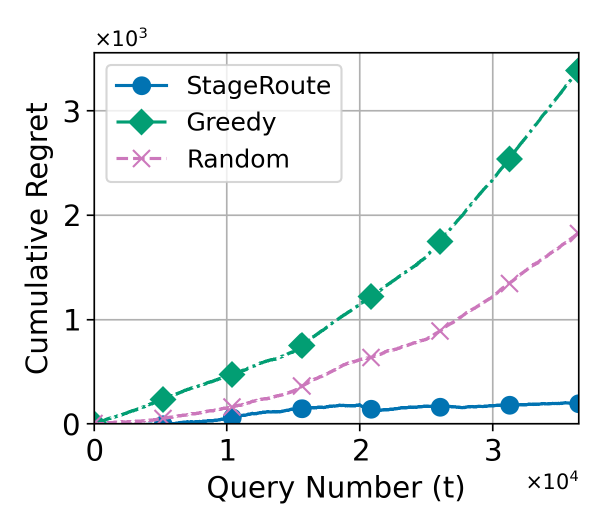}
    \caption{Cumulative regret.}
    \label{fig:mmax5}
    \vspace{-0.25in}
\end{wrapfigure}
\textbf{Implementation Details.} We simulate a total of $T = 36,\!497$ rounds. In each round, a query is sampled uniformly at random from the dataset. The algorithm then selects a model to serve the query and subsequently receives the performance score and associated cost. Initially, 5 models are available. Thereafter, for every 5,000 queries, a new model becomes eligible for deployment, following the release-date ordering in Table~\ref{tab:llms_reordered}. We set the confidence parameter $\gamma = 0.1$. All reported results are averaged over 10 independent runs. The experiments involve solving mixed-integer programming (MIP) subproblems using the Gurobi Optimizer (v12.0.1, academic license) on a machine equipped with a 12th Gen Intel(R) Core(TM) i9-12900HX processor.

\textbf{Computational Overhead.} Our two-stage design keeps the computational overhead low, allowing the entire experimental run to complete in under 10 minutes. The deployment MIP is solved only infrequently on small instances (taking sub-seconds), while the per-query routing involves a tiny LP over the active set that executes in milliseconds. Therefore, using either Gurobi or an open-source solver (e.g., CBC, GLPK, HiGHS) is feasible. Furthermore, since parameters change only slightly between iterations, we can leverage warm-starting to further accelerate computation. For the routing LP, we can reuse the previous basis or solution; similarly, for the deployment MIP, we can provide the prior active set as an initial feasible solution (MIP start), given the minimal changes to the candidate pool~\citep{zhang2025dont}.

\textbf{Overall Performance.}
We first present the main results under a representative setting ($M_{\max}=5, b=0.001$, update interval=1000), and then conduct a detailed sensitivity analysis across key hyperparameters. Figure~\ref{fig:mmax5} shows the cumulative regret under this default configuration. Across all settings we test (see Figure~\ref{fig:varying parameters} for a full overview), our algorithm exhibits consistently slow regret growth, substantially outperforming the baselines. Notably, while the uniform sampling strategy appears to outperform the greedy baseline in some cases, this is an artifact of its tendency to significantly overspend the budget, an issue we will analyze further in the performance-cost evolution.

\textbf{Optimal Model Set Identification.}
As our work emphasizes the importance of deployment, we first analyze whether \texttt{StageRoute} can identify the optimal model set. Figure~\ref{fig:heatmap} compares the decision probabilities of \texttt{StageRoute} and the Oracle under the representative setting mentioned above. The horizontal axis represents deployment intervals, while the vertical axis (bottom to top) corresponds to the model arrival order. It is evident that when a new model arrives, \texttt{StageRoute} initially explores it before quickly converging to the new optimal model set, closely mirroring the Oracle's behavior. This confirms that our deployment strategy is effective at tracking the optimal available model set.

\begin{wrapfigure}{r}{0.4\textwidth}
    \centering
    \vspace{-0.2in}
    \includegraphics[width=\linewidth]{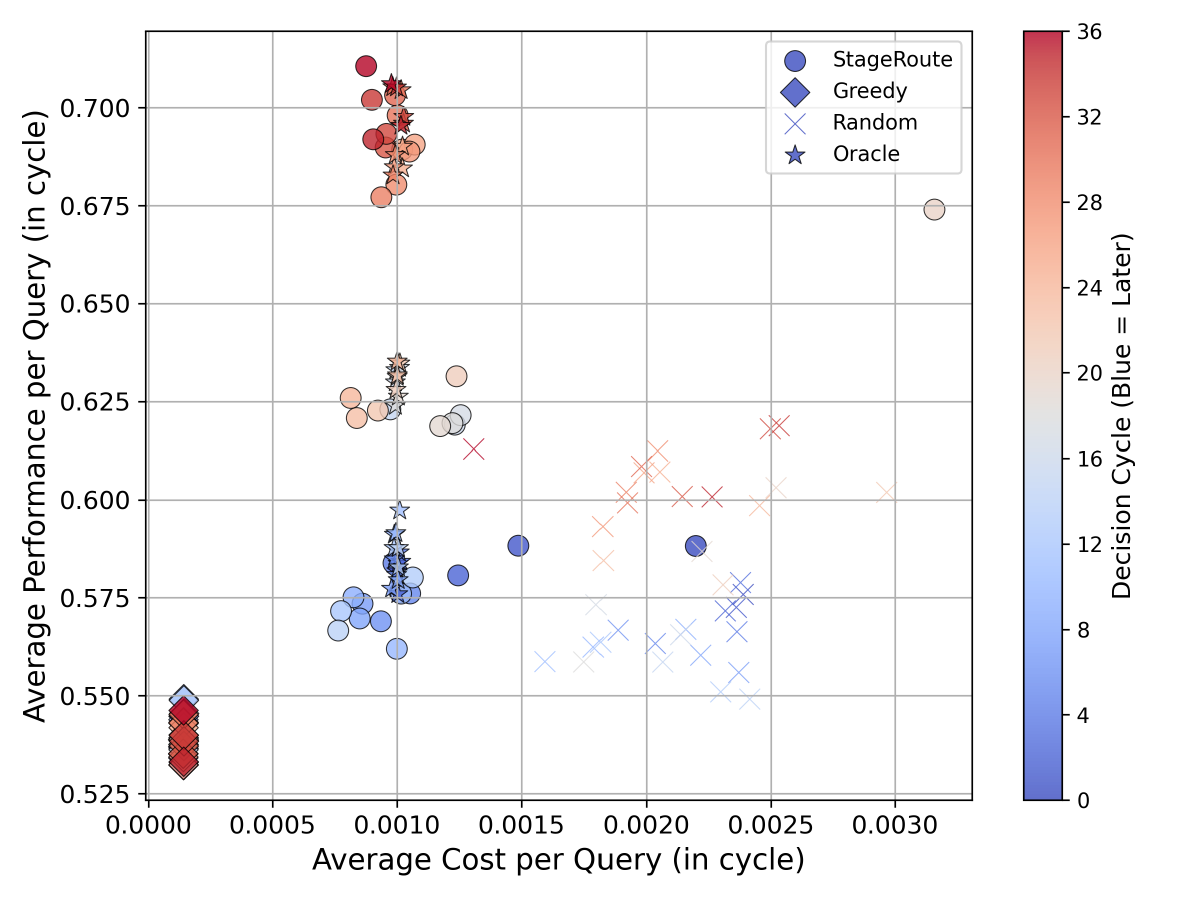}
    \caption{Performance-cost evolution of different algorithms.}
    \label{fig:performance-cost}
    \vspace{-0.1in}
\end{wrapfigure}

\textbf{Performance-Cost Evolution.}
To further validate our algorithm's efficiency, Figure~\ref{fig:performance-cost} illustrates the performance-cost trajectory for each algorithm, again under the same representative setting. Colors transition from blue (initial stages) to red (final stages). The figure shows that, except during initial exploration and periods when new models arrive, the operating points of our algorithm closely track those of the Oracle. In contrast, the greedy strategy proves overly conservative, while uniform sampling consistently violates the budget for suboptimal performance. These observations reinforce our central claim: selecting a high-quality set of models for deployment is fundamental to achieving efficient routing, and \texttt{StageRoute} successfully balances high performance with strict budget adherence.

\textbf{Sensitivity Analysis.}
We now analyze \texttt{StageRoute}'s sensitivity to key hyperparameters. 

(1) \emph{Impact of $M_{\max}$.}
Figures~\ref{fig:mmax5}, \ref{fig:mmax3}, and~\ref{fig:mmax10} present results for different $M_{\max}$ values under a budget of $b = 0.001$ and an update interval of 1000. The results demonstrate that \texttt{StageRoute} adapts well to this parameter, maintaining robust performance across all settings.

(2) \emph{Effect of Deployment Update Interval.}
Figures~\ref{fig:mmax5}, \ref{fig:i500}, and~\ref{fig:i2000} illustrate the impact of varying the deployment update interval with $M_{\max} = 5$ and $b = 0.001$. An interval of 1000 rounds yields the lowest regret, highlighting the importance of selecting an appropriate update frequency.

(3) \emph{Effect of Budget Constraint.}
Figures~\ref{fig:mmax5} and~\ref{fig:b002} compare performance under different budget constraints. Counterintuitively, a more relaxed budget leads to higher regret. This phenomenon can be attributed to two factors. First, a larger budget also raises the performance of the Oracle, making the benchmark more challenging. Second, we use a fixed confidence radius $\gamma$ for all settings; in practice, increasing $\gamma$ in proportion to the budget may be beneficial.

\textbf{Extending to State-of-the-Art Models.} To verify that our \texttt{StageRoute} framework applies to the latest, most powerful models, we conduct additional simulations incorporating recent LLMs. These results, detailed in Appendix~\ref{app:exp_details}, confirm that \texttt{StageRoute} continues to achieve minimal regret.

\begin{figure}[t]
    \centering

    \begin{subfigure}{0.19\textwidth}
        \includegraphics[width=\linewidth]{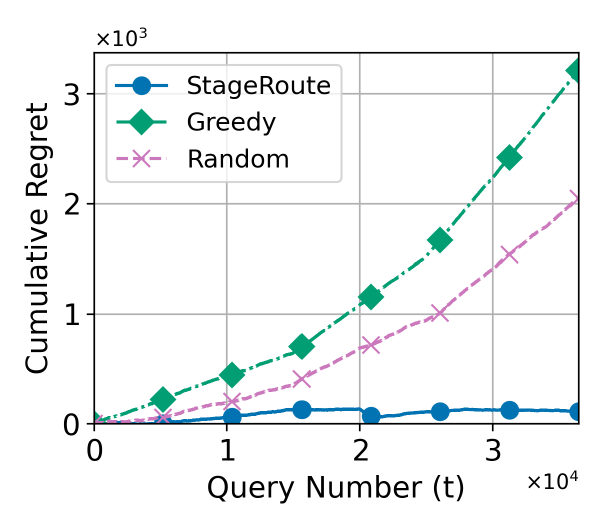}
        \caption{$M_{\max}$ = 3.}
        \label{fig:mmax3}
    \end{subfigure}
    \begin{subfigure}{0.19\textwidth}
        \includegraphics[width=\linewidth]{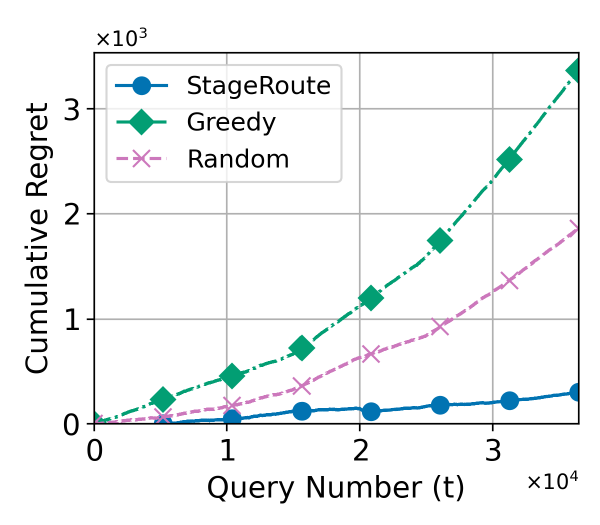}
        \caption{$M_{\max}$ = 10.}
        \label{fig:mmax10}
    \end{subfigure}
    \begin{subfigure}{0.19\textwidth}
        \includegraphics[width=\linewidth]{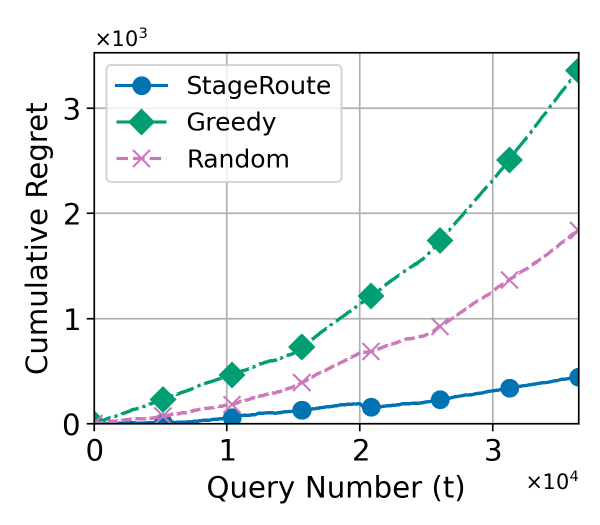}
        \caption{500 rounds.}
        \label{fig:i500}
    \end{subfigure}
    \begin{subfigure}{0.19\textwidth}
        \includegraphics[width=\linewidth]{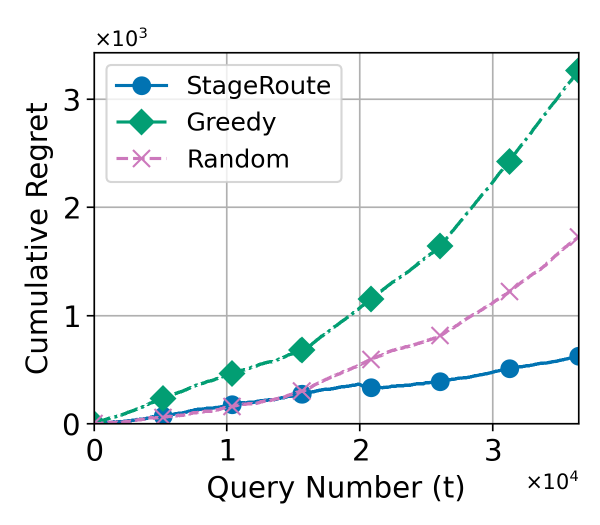}
        \caption{2000 rounds.}
        \label{fig:i2000}
    \end{subfigure}
    \begin{subfigure}{0.19\textwidth}
        \includegraphics[width=\linewidth]{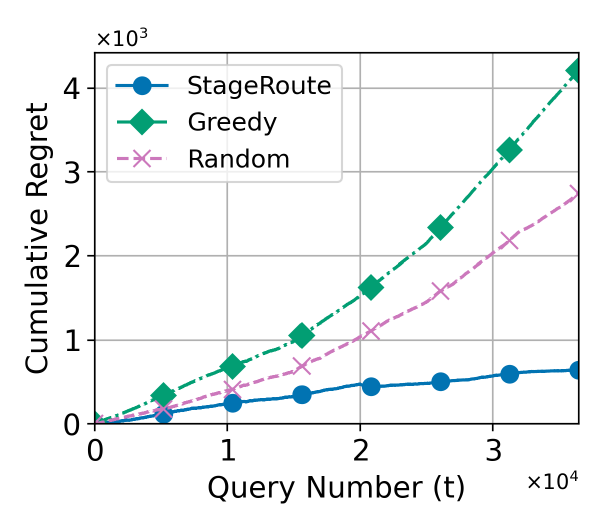}
        \caption{b=0.002.}
        \label{fig:b002}
    \end{subfigure} 
    \caption{Cumulative regret under varying hyperparameters. The default setting is $M_{\max}=5$, update interval = 1000 rounds, and $b=0.001$ (Figure \ref{fig:mmax5}).}
    \label{fig:varying parameters}
    \vspace{-0.1in}
\end{figure}

\section{Conclusion}

In this paper, we introduced \texttt{StageRoute}, a novel framework for online LLM deployment and routing. We are the first to formalize this problem in a dynamic setting with streaming LLM model arrivals, addressing the challenge of selecting an optimal deployment set under a strict concurrency cap. Our algorithm manages deployment at discrete stages while tactically routing queries in real time, respecting both budget and per-model throughput limits. We established the near-optimality of our algorithm with theoretical analysis, including matching upper and lower bounds, and demonstrated its practical effectiveness through extensive experiments on real-world benchmarks.

\section*{Acknowledgements} 
This work was supported in part by the National Science Foundation (NSF) grants 2148309, 2315614 and 2337914, and the National Institutes of Health (NIH) grant 1R01HL184139-01, and was supported in part by funds from OUSD R\&E, NIST, and industry partners as specified in the Resilient \& Intelligent NextG Systems (RINGS) program. Any opinions, findings, and conclusions or recommendations expressed in this material are those of the authors and do not necessarily reflect the views of the funding agencies.

\section*{Ethics Statement}

This research focuses on the operational efficiency of LLM systems. By making deployment and routing more cost-effective, our work can broaden access to AI technologies and reduce energy consumption. However, we acknowledge that increased accessibility may also lower the barrier for malicious use of LLMs. Our framework does not mitigate the inherent risks of language models, such as bias or misinformation generation, and should be implemented alongside robust safety and content moderation protocols.

\bibliography{mybib}
\bibliographystyle{iclr2026_conference}

\appendix

\newpage

\section{Related Work}\label{sec:related}

\subsection{LLM Routing}

The central aim of LLM routing is to strike the best balance between task performance (e.g., response quality or accuracy) and operational metrics such as cost and latency \citep{conf/iclr/DingM0SMRLA24, conf/nips/AggarwalMAPMZGR24}.  Existing work follows three main architectural patterns. Ensemble strategies query multiple models in parallel to boost robustness, at the expense of higher cost and latency \cite{wang2023fusing,conf/acl/Jiang0L23}. Cascade strategies issue queries sequentially—typically starting with a cheaper model and escalating only when necessary—thereby reducing cost but sometimes increasing latency \citep{chen2023frugalgpt,gupta2024language,yue2024large,conf/nips/AggarwalMAPMZGR24}.Direct‐routing strategies train a policy or classifier that selects a single LLM per query \citep{ong2025routellm,feng2025graphrouter,conf/aaai/zhang25,zhuang2025embedllm}. Benchmark suites such as RouterBench \citep{hu2024routerbench} facilitate systematic comparison of these approaches.  Related work on mixture-of-experts (MoE) models explores routing within a single large model \citep{pmlr-v162-du22c, DBLP:journals/jmlr/FedusZS22, DBLP:conf/nips/RiquelmePMNJPKH21}.  More recently, bandit formulations have been applied to static LLM routing \citep{wang2025mixllm,dai2024cost,DBLP:journals/corr/abs-2502-02743,nguyen2024metallm,journals/corr/abs-2506-17670}. Most prior studies, however, assume a fixed set of available models and focus solely on per-query decisions.  In contrast, our work model a dynamic model pool with streaming arrivals and introduce staged deployment updates, where the active model set is subject to online selection under a strict concurrency cap, cost budget, and throughput limits.
To our knowledge, we are the first to formalize and solve this more realistic and challenging problem.

\subsection{Multi-armed Bandits}

Our formulation builds on the multi-armed-bandit (MAB) paradigm, where an agent maximizes its cumulative payoff through exploration and exploitation in online environment \citep{auer2002finite, slivkins2019introduction}.  Three MAB extensions are especially pertinent: (i) \textit{Bandits with knapsacks (BwK)}. Here each arm pull yields a reward and consumes limited resources from one or more budgets; the objective is to maximize total reward without overspending \citep{badanidiyuru2013bandits, agrawal2014bandits, immorlica2019adversarial,pmlr-v125-kesselheim20a,conf/iclr/BernasconiCCF24,conf/iclr/GuoL25}. Our long-term cost constraint fits naturally into this framework. (2) \textit{Streaming bandits.}  In this setting new arms arrive over time—often under memory or attention limits—so the agent must adapt to a continually expanding action set \citep{DBLP:conf/stoc/AssadiW20, DBLP:conf/icml/JinH0X21, DBLP:conf/colt/AgarwalKP22, pmlr-v202-wang23a, li2023tight, shao2025linear, zhu2025lipschitz}. The steady appearance of new LLMs places our problem squarely in this category. (3) \textit{Combinatorial Multi-Armed Bandits (CMAB).} The algorithm faces a fixed, known set of base arms from which superarms (subsets) are chosen in each round \citep{conf/colt/Cesa-BianchiL09,conf/icml/ChenWY13,conf/sdm/QinCZ14,conf/icml/LiWZC16,conf/nips/ChenXL18,icml/00020ZZWW0HL024,liu2025offline}. 

The core distinctions arise from the two-level structure and unique constraints inherent to the realistic online LLM deployment and routing problem. Standard models like BwK and streaming bandits lack the combinatorial selection. While CMAB addresses superarm selection, it is fundamentally misaligned with our problem's dynamics: it assumes a static set of base arms and makes per-round decisions, whereas our core challenges are a dynamic model pool and staged, irreversible commitment, where a deployed set remains fixed for a long duration. The regret is thus a function not only of the chosen set but also of the tactical routing policy executed over thousands of queries within that stage. This stateful, hierarchical structure is beyond the scope of traditional bandit formulations. Due to these fundamental differences, existing algorithms and regret analyses are not applicable. Our work bridges this gap by developing a new framework and novel analytical tools tailored to the unique challenges of online LLM deployment and routing problem.

\newpage
\section{Summary of Notation} 
\label{sec:appendix_notation}

\begin{table}[ht!] 
    \centering
    \caption{Table of Notation Used in the System Model and Algorithms}
    \vspace{0.1in}
    \label{tab:notation}
    \begin{tabular}{@{}ll@{}} 
        \toprule
        Symbol & Description \\
        \midrule

        \multicolumn{2}{@{}l@{}}{\textit{General Parameters \& Indices}} \\
        $T$; $t \in [T]$ & Total queries (horizon $T$); $t$ is query index in $[T]=\{1, \dots, T\}$. \\
        $K$; $k \in [K]$ & Total deployment stages ($K$); $k$ is stage index. \\
        $\tau_k$ & Start time step of stage $k$; $k$-th deployment update point, $\tau_k=(k-1)T/K + 1$. \\
        $\tau_0$ & Initial time for the algorithm, typically $0$. \\
        $\mathcal{A}$ & The online deployment and routing algorithm. \\
        \addlinespace

        \midrule
        \multicolumn{2}{@{}l@{}}{\textit{LLMs: Availability \& Deployment}} \\
        $m$; $t_m$ & An individual LLM $m$; and its availability time $t_m$. \\
        $\mathcal{M}_t, \mathcal{M}_{\tau_k}$ & Set of LLMs available at time $t$, and specifically at start of stage $k$. \\
        $M_{\max}$ & Maximum number of LLMs that can be simultaneously deployed. \\
        $d=(d_m)$ & Deployment decision variable vector in \texttt{DeployOPT} over $\mathcal{M}_{\tau_k}$. \\
        $d^*=(d^*_m)$ & Optimal deployment decision vector from \texttt{DeployOPT} (Eq.~\ref{eq:deploy_opt_merged}) at $\tau_k$. \\
        $\mathcal{D}_0(\mathcal{A})$ & Initial set of deployed LLMs by algorithm $\mathcal{A}$ (typically empty). \\
        $\mathcal{D}_k(\mathcal{A})$ & Set of LLMs deployed by $\mathcal{A}$ in stage $k$ (derived from $d^*_m>0$). \\ 
        \addlinespace

        \midrule
        \multicolumn{2}{@{}l@{}}{\textit{LLM Performance \& Operational Costs}} \\
        $\nu_m(\cdot), \mu_m$ & Performance distribution for LLM $m$ (on $[0,1]$) and its mean $\mu_m = \mathbb{E}_{x \sim \nu_m(\cdot)}[x]$. \\
        $r_t$ & Realized performance from model $m_t$ for query $t$, $r_t \sim \nu_{m_t}(\cdot)$. \\
        $\bar{\mu}_m(\tau_k), \mu^U_m$ & Empirical mean performance (from $N_m(\tau_k)$ obs. up to $\tau_k$) and UCB for $\mu_m$. \\
        $p_{\mathrm{in}}, p_{\mathrm{out}}$ & Per-token prices for input and output. \\
        $c_{m_t}^{(\mathrm{in})}, c_{m_t}^{(\mathrm{out})}; c_{m_t}$ & Input cost, output cost; and total cost $c_{m_t} = c_{m_t}^{(\mathrm{in})} + c_{m_t}^{(\mathrm{out})}$ for $m_t$ on query $t$. \\
        $\xi_m(\cdot)$ & True (unknown) distribution of output token length for LLM $m$. \\
        $\mathbb{E}[c_m]$ & True expected operational cost of LLM $m$. \\
        $\bar{c}_m(\tau_k)$ & Empirical mean operational cost of LLM $m$ based on $N_m(\tau_k)$ selections up to $\tau_k$. \\
        $c^L_m$ & Lower Confidence Bound (LCB) on the expected operational cost $\mathbb{E}[c_m]$. \\
        $c_1, c_2$ & Fixed lower and upper bounds for any $c_{m_t}$, $0 < c_1 \le c_2 < \infty$. \\
        \addlinespace

        \midrule
        \multicolumn{2}{@{}l@{}}{\textit{Routing \& Constraints}} \\
        $m_t$ & LLM selected by the algorithm to handle query $t$. \\
        $p_t(m)$ & Probability assigned by a routing policy to LLM $m \in \mathcal{D}_k(\mathcal{A})$ for query $t$. \\
        $p_t^* = (p_t^*(m))$ & Optimal routing probabilities from \texttt{RouteOPT} (Eq.~\ref{eq:routing_opt_merged}). \\
        $b$ & Long-term average operational cost budget per query. \\
        $\alpha_m$ & Throughput limit (maximum load share / selection probability constraint) for LLM $m$. \\
        $\Delta(\mathcal{S})$ & Set of all probability distributions over a set of LLMs $\mathcal{S}$. \\
        \addlinespace

        \midrule
        \multicolumn{2}{@{}l@{}}{\textit{Parameter Estimation \& Confidence Bounds}} \\
        $N_m(\tau_k)$ & Number of times LLM $m$ has been selected and observed up to $\tau_k$. \\
        $f_{rad}(v,n), \gamma$ & Confidence radius function $f_{rad}(v,n) = \sqrt{\gamma v/n} + \gamma/n$ (with parameter $\gamma>0$). \\
        $\text{proj}_{[a,b]}(x)$ & Projection of value $x$ onto the interval $[a,b]$. \\
        \addlinespace
        
        \midrule
        \multicolumn{2}{@{}l@{}}{\textit{Offline Benchmark \& Regret}} \\
        $V(b, \mathcal{S})$ & Optimal Performance Rate Function: max expected performance from set $\mathcal{S}$. \\
        $\text{supp}(p)$ & Support of a probability distribution $p$. \\
        $\text{OPT}^*$ & Expected cumulative reward of the time-varying offline optimal policy. \\
        $\text{Regret}(\mathcal{A})$ & Regret of online algorithm $\mathcal{A}$. \\
        \bottomrule
    \end{tabular}
\end{table}

\newpage

\newpage
\section{Technical Analysis}
\label{app:upperbound}

In this section, we analyze the regret of \texttt{StageRoute} (Algorithm~\ref{alg:integrated_framework}). Recall that in our setting, let $N = |\mathcal{M}_T|$ denote the total number of models that may arrive over the course of the time horizon. We assume that $N$ is significantly larger than $M_{\max}$, the maximum number of models that can be deployed simultaneously. Moreover, the number of update points $K$ is assumed to be much smaller than the total number of queries $T$. 
These assumptions reflect practical constraints: it is typically infeasible to deploy all available models—including those released in the future—due to resource limitations, and continuously updating the deployed LLM pool in real time is operationally impractical.

\subsection{Concentration Inequality}

We employ the following standard concentration inequality and related supporting lemmas.

\begin{lemma}[\cite{DBLP:conf/stoc/KleinbergSU08,DBLP:journals/teco/BabaioffDKS15}]
\label{lem:concentration_adapted}
    Consider a sequence of random variables $x_1, x_2, \ldots , x_n$. Let $\Bar{x} = \frac{1}{n}\sum_{i=1}^n x_i$ be the empirical average and $v = \frac{1}{n} \sum_{i=1}^n \mathbb{E}[x_i | x_1, \ldots, x_{i-1}]$ (if $i=1$, the expectation is unconditional). If the values $x_i$ are in $[0,1]$ (e.g., performance $r_t$, or cost $c_t$ under our assumption), then for each $\gamma > 0$,
    \begin{equation}
    \label{equ:rad_adapted_in_proof} 
        \mathbb{P}[|v - \Bar{x}| \leq f_{rad}(\Bar{x}, n) \text{ and } f_{rad}(\Bar{x}, n) \leq 3 f_{rad}(v, n)] \geq 1 - \exp(-\Omega(\gamma)),
    \end{equation}
    where $f_{rad}(v,n) = \sqrt{\frac{\gamma v}{n}} + \frac{\gamma}{n}$.
    This result also holds if $x_1, \ldots, x_n$ are independent samples from a distribution with mean $v$ and values in $[0,1]$.
\end{lemma}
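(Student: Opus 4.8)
\textbf{Proof plan for Lemma~\ref{lem:concentration_adapted}.}

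The plan is to treat this as essentially a citation/restatement lemma: the two-sided concentration bound with the self-bounding property $f_{rad}(\bar x, n) \le 3 f_{rad}(v, n)$ is a known result from the bandits-with-knapsacks literature (Kleinberg--Slivkins--Upfal and Babaioff et al.), so the main task is to reconcile the martingale-difference formulation stated here with the i.i.d. statement that typically appears in those sources, and to track the constants. First I would recall that $f_{rad}(v,n) = \sqrt{\gamma v / n} + \gamma/n$ is, up to constants, the Bernstein-type confidence radius: for a sum of $[0,1]$-valued martingale differences $x_i - \mathbb{E}[x_i \mid x_1, \dots, x_{i-1}]$ with predictable quadratic variation controlled by $v$, Freedman's inequality gives $\Prob[|\bar x - v| > \sqrt{2 v \log(1/\eta)/n} + \log(1/\eta)/(3n)] \le 2\eta$. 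Setting $\gamma \asymp \log(1/\eta)$ converts this into the stated $1 - \exp(-\Omega(\gamma))$ failure probability for the event $\{|v - \bar x| \le f_{rad}(\bar x, n)\}$ — note the radius on the right is evaluated at the \emph{empirical} mean $\bar x$, which is what makes the bound usable by an algorithm that only observes $\bar x$.

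The second half of the statement, $f_{rad}(\bar x, n) \le 3 f_{rad}(v, n)$, is a deterministic consequence of the first half together with the algebraic fact that $v \mapsto f_{rad}(v, n)$ is monotone and "slowly varying": concretely, if $|\bar x - v| \le \sqrt{\gamma v/n} + \gamma/n = f_{rad}(v,n)$, then $\bar x \le v + f_{rad}(v,n) \le v + \sqrt{\gamma v/n} + \gamma/n$, and substituting this upper bound for $\bar x$ inside $f_{rad}(\cdot, n)$ and simplifying (using $\sqrt{a+b} \le \sqrt a + \sqrt b$ and $\sqrt{\gamma \cdot \gamma/n \cdot 1/n} = \gamma/n$) yields $f_{rad}(\bar x, n) \le 3 f_{rad}(v, n)$ after collecting terms. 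So on the single good event from Freedman's inequality, \emph{both} conjuncts hold simultaneously, which is exactly the intersection event whose probability is lower-bounded in \eqref{equ:rad_adapted_in_proof}. The final sentence of the lemma — that the bound also holds for i.i.d.\ samples from a mean-$v$ distribution — is the special case where the martingale differences are in fact an independent sequence, so nothing extra is needed.

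I would then just cite \cite{DBLP:conf/stoc/KleinbergSU08} and \cite{DBLP:journals/teco/BabaioffDKS15} for the precise constants and note that the adaptation to the martingale-difference (adapted-sequence) setting is immediate because Freedman's inequality is already stated for martingales; the i.i.d.\ case in those papers is obtained a fortiori. The only mild obstacle is bookkeeping: making sure the constant $3$ in the self-bounding inequality is not off (it depends on how one splits $\sqrt{\gamma(v + \sqrt{\gamma v/n} + \gamma/n)/n}$), and making sure the $\exp(-\Omega(\gamma))$ absorbs the factor-of-$2$ from the two-sided bound and any constant slack in the Bernstein exponent — both are routine and hidden inside the $\Omega(\cdot)$ notation. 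Since the downstream analysis (Theorem~\ref{thm:total_regret}) only ever uses this lemma through a union bound over at most $NT$ (model, round) pairs with $\gamma = \Theta(\log(NT/\delta))$, the precise constants are immaterial, so I would keep this proof to a short paragraph pointing to the references rather than reproving Freedman.
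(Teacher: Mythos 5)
The paper offers no proof of this lemma at all—it is stated purely as a citation to Kleinberg–Slivkins–Upfal and Babaioff et al.—so your plan of citing those references and sketching the standard Freedman-plus-algebra derivation is consistent with, and somewhat more detailed than, what the paper does. Your sketch of the self-bounding step and the absorption of constants into $\Omega(\gamma)$ is the correct standard argument, so there is nothing to flag.
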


For clarity and to simplify the application of concentration inequalities, we assume throughout this analysis that all operational costs $c_{m_t}$ are bounded such that $0 < c_1 \le c_{m_t} \le c_2 \le 1$. This ensures that costs, like rewards (which are in $[0,1]$), fall within a $[0,1]$ range (or a sub-interval thereof). This assumption does not affect the order of the regret bounds, as any scaling factors related to a broader cost range would typically be absorbed into the constants within the $\mathcal{O}(\cdot)$ notation.

\begin{lemma}[\cite{DBLP:journals/teco/BabaioffDKS15}, adapted]
\label{lem:sumrad_adapted}
Let $\mathcal{D}_k(\mathcal{A})$ be the set of deployed models in stage $k$. For any vectors $\boldsymbol{a} = (a_m)_{m \in \mathcal{D}_k(\mathcal{A})}$ and $\boldsymbol{n} = (n_m)_{m \in \mathcal{D}_k(\mathcal{A})}$ where $a_m, n_m \ge 0$,
\[
\sum_{m \in \mathcal{D}_k(\mathcal{A})} f_{rad}(a_m, n_m) n_m \le \sqrt{{\gamma} M_k \left(\sum_{m \in \mathcal{D}_k(\mathcal{A})} a_m n_m\right)} + {\gamma} M_k.
\]
where $M_k = |\mathcal{D}_k(\mathcal{A})|$.
\end{lemma}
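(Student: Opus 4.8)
The plan is to reduce the inequality to a single application of the Cauchy--Schwarz inequality after expanding the confidence-radius terms using their explicit form.

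First I would use the definition $f_{rad}(v,n) = \sqrt{\gamma v / n} + \gamma / n$ to rewrite each summand on the left-hand side. Multiplying through by $n_m$ produces the clean identity
\[
f_{rad}(a_m, n_m)\, n_m \;=\; \sqrt{\gamma\, a_m n_m} \;+\; \gamma,
\]
valid for every $m$ with $n_m \ge 1$; for any $m$ with $n_m = 0$ the product is taken to be $0$ (equivalently, one uses the $n_m + 1$ offset that appears throughout \texttt{StageRoute}, which only decreases the left-hand side). Summing over $m \in \mathcal{D}_k(\mathcal{A})$ then gives
\[
\sum_{m \in \mathcal{D}_k(\mathcal{A})} f_{rad}(a_m, n_m)\, n_m \;=\; \sqrt{\gamma}\sum_{m \in \mathcal{D}_k(\mathcal{A})} \sqrt{a_m n_m} \;+\; \gamma M_k ,
\]
so the task reduces to controlling the single sum $\sum_m \sqrt{a_m n_m}$.

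Next I would bound that sum by Cauchy--Schwarz, pairing the all-ones vector of length $M_k$ with the vector $\bigl(\sqrt{a_m n_m}\bigr)_{m}$:
\[
\sum_{m \in \mathcal{D}_k(\mathcal{A})} \sqrt{a_m n_m} \;\le\; \sqrt{M_k}\;\sqrt{\sum_{m \in \mathcal{D}_k(\mathcal{A})} a_m n_m} .
\]
Multiplying through by $\sqrt{\gamma}$ turns the right-hand side into $\sqrt{\gamma M_k \sum_{m} a_m n_m}$, and adding back the $\gamma M_k$ term recovers exactly the claimed bound.

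I do not anticipate any genuine obstacle here: once the algebraic form of $f_{rad}$ is substituted, the statement is an immediate consequence of one Cauchy--Schwarz step, and the only point needing a word of justification is the degenerate case $n_m = 0$, dispatched by the convention noted above. This is the standard argument underlying the analogous lemma in Babaioff et al., specialized to the deployed set $\mathcal{D}_k(\mathcal{A})$.
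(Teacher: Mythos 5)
Your proof is correct: substituting the explicit form of $f_{rad}$, noting $f_{rad}(a_m,n_m)\,n_m=\sqrt{\gamma a_m n_m}+\gamma$, and applying Cauchy--Schwarz against the all-ones vector yields exactly the stated bound, and your handling of the degenerate $n_m=0$ case is fine. The paper itself gives no proof of this lemma (it defers to the cited Babaioff et al.\ result), and your argument is precisely the standard one underlying that reference, so there is nothing to add.
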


\begin{lemma}[\cite{DBLP:journals/teco/BabaioffDKS15}, adapted]
\label{lem:empConc_adapted}
Let $\hat{\mu}_m(t) = (\sum_{s<t: m_s=m} r_s) / (N_m(t)+1)$ be the empirical average performance and $\hat{c}_m(t) = (\sum_{s<t: m_s=m} c_s) / (N_m(t)+1)$ be the empirical average cost for model $m \in \mathcal{D}_k(\mathcal{A})$ based on $N_m(t)$ plays before time $t$ within the current stage $k$. Then, for every $m \in \mathcal{D}_k(\mathcal{A})$ and time $t \in [\tau_k, \tau_{k+1})$, with probability $1-e^{-\Omega({\gamma})}$ (i.e., on the event $\mathcal{E}$):
\begin{align}
|\hat{\mu}_m(t) - \mu_m| &\le 2 f_{rad}(\hat{\mu}_m(t), N_m(t)+1) \label{eq:empConc_perf_full} \\
|\hat{c}_m(t) - \mathbb{E}[c_m]| &\le 2 f_{rad}(\hat{c}_m(t), N_m(t)+1) \label{eq:empConc_cost_full}
\end{align}
\end{lemma}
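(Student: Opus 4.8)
The plan is to reduce the lemma to the single-sequence concentration inequality of Lemma~\ref{lem:concentration_adapted}, applied model by model and play-count by play-count, followed by a union bound. The key structural fact is that, no matter \emph{when} the algorithm decides to invoke model $m$, each invocation returns a fresh draw from $\nu_m$ (respectively from the cost distribution of $m$); hence we may view the scores collected from $m$ as the prefix of an underlying i.i.d.\ tape $Y^m_1, Y^m_2, \dots$ with mean $\mu_m$ and values in $[0,1]$, where the $n$-th play of $m$ reveals $Y^m_n$. Under this reformulation the empirical average after $n$ plays is literally the average of $n$ i.i.d.\ samples, so the independent-samples version of Lemma~\ref{lem:concentration_adapted} applies directly.

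First I would fix $m \in \mathcal{M}_T$ and a play count $n \in \{1, \dots, T\}$, and let $\bar\mu$ be the plain empirical mean of $Y^m_1, \dots, Y^m_n$. Lemma~\ref{lem:concentration_adapted} then gives, with probability $1 - e^{-\Omega(\gamma)}$, that $|\mu_m - \bar\mu| \le f_{rad}(\bar\mu, n)$, and the same statement for costs (here I use the appendix normalization $0 < c_1 \le c_{m_t} \le c_2 \le 1$ so that cost samples also lie in $[0,1]$). The algorithm, however, uses the shifted estimator $\hat\mu_m = \bar\mu \cdot n/(n+1)$; since $|\hat\mu_m - \bar\mu| = \bar\mu/(n+1) \le 1/(n+1)$ and $\bar\mu = \hat\mu_m \cdot (n+1)/n \le 2\hat\mu_m$ for $n \ge 1$, both $f_{rad}(\bar\mu, n)$ and the off-by-one correction are each dominated by a constant multiple of $f_{rad}(\hat\mu_m, n+1)$ (using $1/n \le 2/(n+1)$, and $\gamma \ge 1$ without loss of generality). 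Collecting constants yields $|\mu_m - \hat\mu_m| \le 2 f_{rad}(\hat\mu_m, n+1)$, and identically for costs; the base case $n = 0$ is immediate because then $\hat\mu_m = 0$ while $f_{rad}(0,1) = \gamma \ge 1 \ge \mu_m$.

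It then remains to upgrade this to a statement that holds \emph{simultaneously} over all $m$ and all $t \in [\tau_k, \tau_{k+1})$. Within a stage, $\hat\mu_m(t)$ and $\hat c_m(t)$ move only when $m$ is played, so along the stage they take at most $T+1$ distinct values, one per play count. A union bound over $m \in \mathcal{M}_T$ ($N$ of them) and over play counts therefore inflates the failure probability to at most $NT\, e^{-\Omega(\gamma)}$, which is again of the form $e^{-\Omega(\gamma)}$ once the $\log(NT)$ factor is absorbed into $\gamma$ — exactly the regime $\gamma = \Theta(\log(NT/\delta))$ used in Theorem~\ref{thm:total_regret}. Letting $\mathcal{E}$ denote this good event, the two displayed inequalities hold on $\mathcal{E}$ for every deployed $m \in \mathcal{D}_k(\mathcal{A})$ and every $t$ in the stage, as claimed.

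The one genuine subtlety — the step I would guard most carefully — is that $N_m(t)$ is a random, history-dependent count rather than a fixed index, so the concentration bound must hold uniformly over every value it could take; the i.i.d.-tape reformulation together with the union bound over play counts is precisely what supplies that uniformity, and it is also the reason a $\log T$ term must enter $\gamma$. Everything else — the $+1$ shift in the denominator and bookkeeping the absolute constant $2$ — is routine and mirrors the arguments in the cited works.
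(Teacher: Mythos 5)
Your proposal is correct and follows essentially the same route as the paper: both reduce to Lemma~\ref{lem:concentration_adapted} by treating the observations of each model as i.i.d.\ draws with mean $\mu_m$ (resp.\ $\mathbb{E}[c_m]$), then absorb the $+1$ shift in the denominator into the factor of $2$ in front of $f_{rad}(\cdot, N_m(t)+1)$. Your explicit union bound over models and play counts (to handle the randomness of $N_m(t)$) is the same mechanism the paper invokes implicitly when it defines the good event $\mathcal{E}$ and sets $\gamma = \Theta(\log(NT/\delta))$; you have merely spelled out a step the paper leaves tacit.
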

\begin{proof}
Follows from applying Lemma~\ref{lem:concentration_adapted} to the sequence of observed performances $r_s$ (for $m_s=m$) and observed costs $c_s$ (for $m_s=m$). For a fixed model $m$, the rewards $r_s$ (when $m_s=m$) are i.i.d. samples from $\nu_m(\cdot)$ with mean $\mu_m$. Similarly, costs $c_s$ (when $m_s=m$) are effectively i.i.d. samples with mean $\mathbb{E}[c_m]$. Thus, the conditional expectation $\mathbb{E}[x_i | x_1, \ldots, x_{i-1}]$ in Lemma~\ref{lem:concentration_adapted} becomes the true mean $\mu_m$ (or $\mathbb{E}[c_m]$). The derivation is analogous to Lemma 4.3 of \cite{DBLP:journals/teco/BabaioffDKS15}. For instance, for performance:
\begin{align*}
|\hat{\mu}_m(t) - \mu_m|
& = \left| \frac{\sum_{s<t:m_s=m} r_s}{N_m(t)+1} - \frac{(N_m(t)+1)\mu_m}{N_m(t)+1} \right| \\
& \le \frac{N_m(t)}{N_m(t)+1} f_{rad}(\hat{\mu}_m(t), N_m(t)) + \frac{\mu_m}{N_m(t)+1} \quad (\text{from Lemma~\ref{lem:concentration_adapted} structure}) \\
& \le f_{rad}(\hat{\mu}_m(t), N_m(t)+1) + \frac{\mu_m}{N_m(t)+1} \\
& \le 2 f_{rad}(\hat{\mu}_m(t), N_m(t)+1)
\end{align*}
The argument for cost is similar due to the assumption $c_m \in [c_1, c_2] \subseteq [0,1]$.
\end{proof}

\subsection{Regret Decomposition with Time-Varying Benchmark}

To analyze the regret of \texttt{StageRoute} (Algorithm~\ref{alg:integrated_framework}) over the horizon $T$, we decompose the total regret into components corresponding to the model deployment and request routing phases.

\begin{definition}[Optimal Performance within Deployed Set]
For a given stage $k$ (time interval $[\tau_k, \tau_{k+1})$ of length $T_k = \tau_{k+1} - \tau_k$) where \textnormal{\texttt{StageRoute}} (during its Model Deployment Phase) deploys the set $\mathcal{D}_k = \mathcal{D}_k(\mathcal{A}) \subseteq \mathcal{M}_{\tau_k}$, $V(b, \mathcal{D}_k)$ represents the optimal expected reward per query achievable using only models from the deployed set $\mathcal{D}_k$. The total optimal expected performance within this stage using $\mathcal{D}_k$ is $\text{OPT}_k = T_k \cdot V(b, \mathcal{D}_k)$. Note that $\mathcal{D}_k$ is determined by \textnormal{\texttt{StageRoute}} based on $\mathcal{M}_{\tau_k}$ and estimates available at $\tau_k$. Thus, $\mathcal{D}_k$ and consequently $V(b, \mathcal{D}_k)$ (and $\text{OPT}_k$) are random variables, dependent on the algorithm's choices and observations up to time $\tau_k$.
\end{definition}

\begin{definition}[Algorithm Performance]
Let $\text{ALGO}$ be the total expected reward accumulated by the algorithm over the horizon $T$:
\[
\text{ALGO} = \sum_{t=1}^T r_t = \sum_{k=1}^K \sum_{t=\tau_k}^{\tau_{k+1}-1} r_t.
\]
Let $\text{ALGO}_k = \sum_{t=\tau_k}^{\tau_{k+1}-1} r_t$ be the reward accumulated by the algorithm during stage $k$.
\end{definition}

\begin{lemma}[Regret Decomposition with Time-Varying Benchmark]
\label{lem:regret_decomposition_time_varying}
The total expected regret $\mathcal{R}(T) = \text{OPT}^* - \mathbb{E}[\text{ALGO}]$ of \textnormal{\texttt{StageRoute}}, compared against the optimal time-varying benchmark $\text{OPT}^* = \sum_{k=1}^K \sum_{t=\tau_k}^{\tau_{k+1}-1} V(b, \mathcal{M}_{\tau_k})$, can be decomposed as:
\[
\mathcal{R}(T) = \underbrace{\mathbb{E} \left[ \sum_{k=1}^K \sum_{t=\tau_k}^{\tau_{k+1}-1} \left( V(b, \mathcal{D}_k) - r_t \right) \right]}_{\mathcal{R}_{\textnormal{routing}}(T)} + \underbrace{\mathbb{E} \left[ \sum_{k=1}^K \sum_{t=\tau_k}^{\tau_{k+1}-1} \left( V(b, \mathcal{M}_{\tau_k}) - V(b, \mathcal{D}_k) \right) \right]}_{\mathcal{R}_{\textnormal{deploy}}(T)}
\]
where:
\begin{itemize}
\item $\mathcal{R}_{\textnormal{routing}}(T)$ is the total expected routing regret, accumulating the per-query difference between the optimal expected performance with the deployed set $V(b, \mathcal{D}_k)$ and the realized reward $r_t$, summed over all queries and stages.
\item $\mathcal{R}_{\textnormal{deploy}}(T)$ is the total expected deployment regret, accumulating the per-query difference in optimal expected performance achievable with the full set of available models $V(b, \mathcal{M}_{\tau_k})$ versus the deployed set $V(b, \mathcal{D}_k)$, summed over all queries and stages.
\end{itemize}
\end{lemma}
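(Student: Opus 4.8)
The plan is to prove the stated identity by a one-line add-and-subtract argument, the only bookkeeping being the randomness of the deployed sets. I would start from the definition $\mathcal{R}(T) = \text{OPT}^* - \mathbb{E}[\text{ALGO}]$. Since the arrival schedule $\{t_m\}$ and the stage boundaries $\{\tau_k\}$ are fixed in advance, the benchmark $\text{OPT}^* = \sum_{k=1}^K \sum_{t=\tau_k}^{\tau_{k+1}-1} V(b,\mathcal{M}_{\tau_k})$ is a deterministic quantity, so $\text{OPT}^* = \mathbb{E}[\text{OPT}^*]$. Writing $\text{ALGO} = \sum_{k=1}^K\sum_{t=\tau_k}^{\tau_{k+1}-1} r_t$ and using linearity of expectation, I would combine the two into a single expectation
\[
\mathcal{R}(T) = \mathbb{E}\!\left[\sum_{k=1}^K \sum_{t=\tau_k}^{\tau_{k+1}-1}\bigl(V(b,\mathcal{M}_{\tau_k}) - r_t\bigr)\right].
\]

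Next I would insert the deployed-set value $V(b,\mathcal{D}_k)$ into each summand, writing $V(b,\mathcal{M}_{\tau_k}) - r_t = \bigl(V(b,\mathcal{M}_{\tau_k}) - V(b,\mathcal{D}_k)\bigr) + \bigl(V(b,\mathcal{D}_k) - r_t\bigr)$, and then split the double sum by linearity of expectation into the two named pieces $\mathcal{R}_{\textnormal{deploy}}(T)$ and $\mathcal{R}_{\textnormal{routing}}(T)$. The split is legitimate because all quantities are uniformly bounded: $r_t \in [0,1]$ by assumption, and $V(b,\cdot) \in [0,1]$ since it is an average of the mean rewards $\mu_m \in [0,1]$ against a probability distribution; hence every partial sum is integrable and the expectation distributes over the finite sum regardless of the dependence structure.

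The one point that deserves explicit mention — rather than being a genuine obstacle — is that $\mathcal{D}_k$ (and therefore $V(b,\mathcal{D}_k)$ and $\text{OPT}_k = T_k \cdot V(b,\mathcal{D}_k)$) is a random variable, measurable with respect to the algorithm's history up to $\tau_k$, as already flagged in the definition preceding the lemma. Since we only invoke linearity of expectation and never pull $V(b,\mathcal{D}_k)$ outside an expectation as if it were a constant, no independence or measurability subtlety arises: the identity $V(b,\mathcal{M}_{\tau_k}) - r_t = \bigl(V(b,\mathcal{M}_{\tau_k}) - V(b,\mathcal{D}_k)\bigr) + \bigl(V(b,\mathcal{D}_k) - r_t\bigr)$ holds pathwise before any expectation is taken. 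I would also record, as a consistency check with Eq.~\eqref{eq:regret}, that $\mathbb{E}[r_t \mid m_t] = \mu_{m_t}$ by the tower rule, so $\mathbb{E}[\text{ALGO}] = \mathbb{E}[\sum_{t} \mu_{m_t}]$ and the $r_t$-based and $\mu_{m_t}$-based definitions of regret coincide. There is no hard step here; the content of the lemma is purely organizational, isolating the two error sources — routing error within a fixed deployed set, and deployment (model-discovery) error across sets — that the subsequent subsections bound separately.
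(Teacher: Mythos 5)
Your proposal is correct and follows essentially the same route as the paper: both arguments note that $\text{OPT}^*$ is deterministic, add and subtract the intermediate term $V(b,\mathcal{D}_k)$, and split by linearity of expectation, with the randomness of $\mathcal{D}_k$ handled simply by keeping everything inside the expectation. Your added remarks on integrability and on the consistency of the $r_t$-based and $\mu_{m_t}$-based regret definitions via the tower rule are harmless extras the paper leaves implicit.
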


\begin{proof}
We start with the definition of the total expected regret:
\[
\mathcal{R}(T) = \text{OPT}^* - \mathbb{E}[\text{ALGO}].
\]
Using the definition $\text{OPT}^* = \sum_{k=1}^K \sum_{t=\tau_k}^{\tau_{k+1}-1} V(b, \mathcal{M}_{\tau_k})$ and $\text{ALGO} = \sum_{k=1}^K \sum_{t=\tau_k}^{\tau_{k+1}-1} r_t$:
\[
\mathcal{R}(T) = \sum_{k=1}^K \sum_{t=\tau_k}^{\tau_{k+1}-1} V(b, \mathcal{M}_{\tau_k}) - \mathbb{E}\left[\sum_{k=1}^K \sum_{t=\tau_k}^{\tau_{k+1}-1} r_t\right].
\]
Note that $\mathcal{M}_{\tau_k}$ (the set of models available at time $\tau_k$) depends on the fixed model arrival times $t_m$ and the stage start time $\tau_k$. According to the system model (Section~\ref{sec:system_model}), $\tau_k = (k-1)T/K + 1$ and the stage length $T_k = (\tau_{k+1}-\tau_k) = T/K$ are deterministic. Consequently, the set $\mathcal{M}_{\tau_k}$ and the benchmark value $V(b, \mathcal{M}_{\tau_k})$ are deterministic for each stage $k$. The randomness in the regret decomposition arises from the algorithm's choices, specifically the selection of $\mathcal{D}_k$ (which determines $V(b, \mathcal{D}_k)$) and the subsequent routing decisions leading to the realized rewards $r_t$.

Since $V(b, \mathcal{M}_{\tau_k})$ is deterministic for each $k$ and constant for $t \in [\tau_k, \tau_{k+1}-1)$, the sum $\sum_{k=1}^K \sum_{t=\tau_k}^{\tau_{k+1}-1} V(b, \mathcal{M}_{\tau_k})$ is also deterministic. Thus, it can be written as $\mathbb{E}\left[\sum_{k=1}^K \sum_{t=\tau_k}^{\tau_{k+1}-1} V(b, \mathcal{M}_{\tau_k})\right]$.
We add and subtract the term $\mathbb{E}\left[\sum_{k=1}^K \sum_{t=\tau_k}^{\tau_{k+1}-1} V(b, \mathcal{D}_k)\right]$:
\begin{align*}
\mathcal{R}(T) &= \mathbb{E}\left[\sum_{k=1}^K \sum_{t=\tau_k}^{\tau_{k+1}-1} V(b, \mathcal{M}_{\tau_k})\right] - \mathbb{E}\left[\sum_{k=1}^K \sum_{t=\tau_k}^{\tau_{k+1}-1} r_t\right] \\
&= \mathbb{E}\left[\sum_{k=1}^K \sum_{t=\tau_k}^{\tau_{k+1}-1} V(b, \mathcal{M}_{\tau_k})\right] - \mathbb{E}\left[\sum_{k=1}^K \sum_{t=\tau_k}^{\tau_{k+1}-1} V(b, \mathcal{D}_k)\right] \\
& \qquad + \mathbb{E}\left[\sum_{k=1}^K \sum_{t=\tau_k}^{\tau_{k+1}-1} V(b, \mathcal{D}_k)\right] - \mathbb{E}\left[\sum_{k=1}^K \sum_{t=\tau_k}^{\tau_{k+1}-1} r_t\right].
\end{align*}
Now, we combine terms using the linearity of expectation:
\begin{align*}
\mathcal{R}(T) &= \mathbb{E}\left[ \sum_{k=1}^K \sum_{t=\tau_k}^{\tau_{k+1}-1} \left( V(b, \mathcal{M}_{\tau_k}) - V(b, \mathcal{D}_k) \right) \right] \\
& \qquad + \mathbb{E}\left[ \sum_{k=1}^K \sum_{t=\tau_k}^{\tau_{k+1}-1} \left( V(b, \mathcal{D}_k) - r_t \right) \right].
\end{align*}
This expression matches the claimed decomposition, identifying the deployment regret $\mathcal{R}_{\textnormal{deploy}}(T)$ and the routing regret $\mathcal{R}_{\textnormal{routing}}(T)$ as defined in the lemma statement.
Alternatively, letting $\text{OPT}_k = T_k V(b, \mathcal{D}_k) = \sum_{t=\tau_k}^{\tau_{k+1}-1} V(b, \mathcal{D}_k)$ and $\text{ALGO}_k = \sum_{t=\tau_k}^{\tau_{k+1}-1} r_t$, the routing regret can be written as $\mathbb{E}[\sum_{k=1}^K (\text{OPT}_k - \text{ALGO}_k)]$. Similarly, the deployment regret can be written as $\mathbb{E}[\sum_{k=1}^K T_k (V(b, \mathcal{M}_{\tau_k}) - V(b, \mathcal{D}_k))]$.
\end{proof}

This decomposition provides an accurate picture of the algorithm's performance. The deployment regret $\mathcal{R}_{\textnormal{deploy}}(T)$ isolates the loss incurred specifically by \texttt{StageRoute}'s potentially suboptimal selection $\mathcal{D}_k$ (during its Model Deployment Phase) from the available set $\mathcal{M}_{\tau_k}$, measured against the best possible rate $V(b, \mathcal{M}_{\tau_k})$ achievable with those available models. Bounding $\mathcal{R}_{\textnormal{deploy}}(T)$ involves analyzing how effectively the Model Deployment Phase of \texttt{StageRoute} identifies the optimal subset of size at most $M_{\max}$ from $\mathcal{M}_{\tau_k}$ based on its estimates. The routing regret $\mathcal{R}_{\textnormal{routing}}(T)$ remains the sum of per-query differences between the optimal expected performance using the deployed models $\mathcal{D}_k$ and the actual realized rewards $r_t$. Lemma~\ref{lem:stage_regret} (or subsequent analysis) addresses the term $\mathbb{E}[\sum_{t=\tau_k}^{\tau_{k+1}-1} (V(b, \mathcal{D}_k) - r_t) \mid \mathcal{D}_k]$ which contributes to $\mathcal{R}_{\textnormal{routing}}(T)$.

\subsection{Analysis of Deployment Regret}

We now analyze the deployment regret component $\mathcal{R}_{\textnormal{deploy}}(T)$ as defined in Lemma~\ref{lem:regret_decomposition_time_varying}:
\[
\mathcal{R}_{\textnormal{deploy}}(T) = \mathbb{E} \left[ \sum_{k=1}^K T_k \left( V(b, \mathcal{M}_{\tau_k}) - V(b, \mathcal{D}_k) \right) \right].
\]
This quantity captures the cumulative expected performance loss across all stages, incurred when the \texttt{StageRoute} algorithm selects a subset $\mathcal{D}_k$ at stage $k$ based on estimated model statistics at time $\tau_k$, instead of deploying the optimal subset from the full set of available models $\mathcal{M}_{\tau_k}$.

The deployment regret arises from two complementary sources:
\begin{enumerate}
    \item \textbf{Parameter Uncertainty:} Inaccurate estimates of model performance ($\mu_m$) and cost ($\mathbb{E}[c_m]$) may result in suboptimal deployment decisions. This source corresponds to models that have already been deployed in one or more of the previous $k-1$ stages.
    \item \textbf{Model Discovery Bottleneck:} The constraint that at most $M_{\max}$ models can be deployed concurrently may exclude promising but underexplored models—particularly newly arrived ones—from being included in $\mathcal{D}_k$. This prevents timely evaluation and utilization, contributing to additional regret. This case pertains to models that have not been selected in any of the preceding $k-1$ stages.
\end{enumerate}
These two components are complementary in nature and together constitute the total deployment regret $\mathcal{R}_{\textnormal{deploy}}(T)$. In the following analysis, we will provide regret bounds for each source.

For the analysis, we recall that $V(b, \mathcal{S})$ is the optimal performance rate for a model set $\mathcal{S}$ (defined in Section~\ref{sec:system_model}). Let $V^*_k = V(b, \mathcal{M}_{\tau_k})$ be the optimal rate achievable using all models available at the start of stage $k$, and $V_k = V(b, \mathcal{D}_k)$ be the optimal rate achievable using the subset $\mathcal{D}_k \subseteq \mathcal{M}_{\tau_k}$ selected by \texttt{StageRoute}. $V^*_k$ is deterministic given $\tau_k$, while $V_k$ (through $\mathcal{D}_k$) is a random variable.
A key assumption for bounding the deployment regret due to parameter uncertainty involves the Lagrange multipliers associated with the budget constraint in the definition of $V^*_k$. Let $\lambda^*_k \ge 0$ be this optimal Lagrange multiplier. We assume that $\lambda^*_k$ is uniformly bounded by a constant $\Lambda$ for all $k$. This is a common assumption in the analysis of learning algorithms with budget constraints and is often justified when standard regularity conditions (such as Slater's condition, which we assume in Assumption~\ref{ass:feasible}) hold for the underlying optimization problems, particularly given that our problem parameters (rewards, costs, $\alpha_m$) are bounded and expected costs $\mathbb{E}[c_m]$ are lower-bounded by $c_1 > 0$.

\paragraph{Confidence Bounds and Good Event.}
Let $\tau_k$ be the start of stage $k$. $N_m(\tau_k)$ is the play count for model $m$ before $\tau_k$. Define confidence radii using Lemma~\ref{lem:concentration_adapted} and \ref{lem:empConc_adapted} with $\gamma = \Theta(\log(N T / \delta))$:
\begin{align*}
\text{rad}_{\mu}(m, \tau_k) &= 2 f_{rad}(\bar{\mu}_m(\tau_k), N_m(\tau_k)+1) \\
\text{rad}_{c}(m, \tau_k) &= 2 f_{rad}(\bar{c}_m(\tau_k), N_m(\tau_k)+1) \quad (\text{Since costs } c_m \in [c_1, c_2] \subseteq [0,1])
\end{align*}
Let $\mathcal{E}_k$ be the good event at time $\tau_k$ where, for all $m \in \mathcal{M}_{\tau_k}$, the confidence bounds based on $\gamma$ hold:
\begin{align*}
\mu_m &\le \mu^U_m(\tau_k) \quad \text{and} \quad \mu^U_m(\tau_k) \le \mu_m + \text{rad}_{\mu}(m, \tau_k) \\
\mathbb{E}[c_m] - \text{rad}_{c}(m, \tau_k) &\le c^L_m(\tau_k) \quad \text{and} \quad c^L_m(\tau_k) \le \mathbb{E}[c_m]
\end{align*}
Let $\mathcal{E} = \cap_{k=1}^K \mathcal{E}_k$. By a union bound over all $N$ models in the universe $\mathcal{M}_T$ and $K$ stages, $\mathbb{P}(\mathcal{E}) \ge 1-\delta$. We condition the analysis on $\mathcal{E}$. (Note: $\mu^U_m$ is $\bar{\mu}_m(\tau_k) + 2 f_{rad}(\bar{\mu}_m(\tau_k), N_m(\tau_k) + 1)$ and $c^L_m$ is $\bar{c}_m(\tau_k) - 2 f_{rad}(\bar{c}_m(\tau_k), N_m(\tau_k) + 1)$ as per Eq. (6,7) in the algorithm description, projected onto $[0,1]$ and $[c_1,c_2]$ respectively. The inequalities above capture the desired properties on the good event $\mathcal{E}_k$).

\paragraph{Bounding the Per-Stage Deployment Gap due to Estimation Uncertainty.}

\begin{lemma}[Per-Stage Deployment Gap Bound]
\label{lem:per_stage_deploy_gap_updated}
Let $V^*_k = V(b, \mathcal{M}_{\tau_k})$ be the optimal rate with available models at stage $k$. Let $\mathcal{D}_k = \textnormal{supp}(d^*)$ be the set selected by the Model Deployment Phase of \textnormal{\texttt{StageRoute}} based on $\mathcal{M}_{\tau_k}$ and estimates at $\tau_k$, via solution $d^*$ (from Eq.~\eqref{eq:deploy_opt_merged}). Let $V_k = V(b, \mathcal{D}_k)$. On the good event $\mathcal{E}_k$, the deployment gap for stage $k$ is bounded as:
\[
V^*_k - V_k \le \sum_{m \in \mathcal{M}_{\tau_k}} \left( \textnormal{rad}_{\mu}(m, \tau_k) + \lambda^*_k \textnormal{rad}_{c}(m, \tau_k) \right) d^*_m
\]
where $\lambda^*_k$ is the optimal dual variable for the budget constraint in the problem defining $V^*_k$, assumed to be $\le \Lambda$.
\end{lemma}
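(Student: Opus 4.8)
I would follow the classical \emph{optimism-plus-Lagrangian-perturbation} template from bandits-with-knapsacks analysis, applied at the level of a single stage's deployment LP. Fix stage $k$ and condition on the good event $\mathcal{E}_k$, so that for every $m \in \mathcal{M}_{\tau_k}$ we have $\mu_m \le \mu^U_m \le \mu_m + \textnormal{rad}_{\mu}(m,\tau_k)$ and $\mathbb{E}[c_m] - \textnormal{rad}_{c}(m,\tau_k) \le c^L_m \le \mathbb{E}[c_m]$. Let $p^*$ be an optimal distribution achieving $V^*_k = V(b,\mathcal{M}_{\tau_k})$, and let $d^*$ be the \texttt{DeployOPT} solution, so that $\mathcal{D}_k = \textnormal{supp}(d^*)$, $d^* \le \alpha$, $\sum_m c^L_m d^*_m \le b$, and $|\mathcal{D}_k| \le M_{\max}$.

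\emph{Step 1 (optimism).} I first establish $V^*_k \le \sum_m \mu^U_m d^*_m$. On $\mathcal{E}_k$ we have $\sum_m c^L_m p^*(m) \le \sum_m \mathbb{E}[c_m] p^*(m) \le b$, and $p^*$ also satisfies the simplex, capacity, and support-size restrictions of \texttt{DeployOPT} (after the routine handling of the exact-cardinality requirement, e.g.\ spreading infinitesimal mass onto extra models, which changes the objective arbitrarily little). Hence $p^*$ is feasible for \texttt{DeployOPT}, and by optimality of $d^*$ together with $\mu^U_m \ge \mu_m$, $\sum_m \mu^U_m d^*_m \ge \sum_m \mu^U_m p^*(m) \ge \sum_m \mu_m p^*(m) = V^*_k$.

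\emph{Step 2 (decouple the reward error).} Using $\mu^U_m \le \mu_m + \textnormal{rad}_{\mu}(m,\tau_k)$,
\[
\sum_m \mu^U_m d^*_m \;\le\; \sum_m \mu_m d^*_m \;+\; \sum_m \textnormal{rad}_{\mu}(m,\tau_k)\, d^*_m ,
\]
so it remains to bound $\sum_m \mu_m d^*_m$ by $V_k$ plus a cost-radius term. The crux is that $d^*$ need not be feasible for $V_k = V(b,\mathcal{D}_k)$ under the \emph{true} costs: from $\sum_m c^L_m d^*_m \le b$ and $\mathbb{E}[c_m] \le c^L_m + \textnormal{rad}_{c}(m,\tau_k)$ we only get $\sum_m \mathbb{E}[c_m] d^*_m \le b + \sum_m \textnormal{rad}_{c}(m,\tau_k)\, d^*_m =: b'_k$. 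Thus $d^*$ is feasible for $V(b'_k,\mathcal{D}_k)$ — it is a distribution over $\mathcal{D}_k$ respecting the $\alpha_m$ caps, and $|\mathcal{D}_k| \le M_{\max}$ makes the support constraint vacuous — so $\sum_m \mu_m d^*_m \le V(b'_k,\mathcal{D}_k)$.

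\emph{Step 3 (Lagrangian bridge and combine).} Since $|\mathcal{D}_k| \le M_{\max}$, the cardinality constraint is inactive inside $V(\cdot,\mathcal{D}_k)$, so $b \mapsto V(b,\mathcal{D}_k)$ is the value of a linear program, hence concave, with the optimal budget dual multiplier $\lambda_k \ge 0$ (which exists by Slater's condition, Assumption~\ref{ass:feasible}) acting as a supergradient at $b$; therefore $V(b'_k,\mathcal{D}_k) \le V_k + \lambda_k(b'_k - b) = V_k + \lambda_k \sum_m \textnormal{rad}_{c}(m,\tau_k)\, d^*_m$. Chaining Steps 1--3 gives
\[
V^*_k - V_k \;\le\; \sum_m \big( \textnormal{rad}_{\mu}(m,\tau_k) + \lambda_k\, \textnormal{rad}_{c}(m,\tau_k) \big)\, d^*_m ,
\]
which is the claimed inequality once $\lambda_k$ is replaced by the uniform bound $\Lambda$ (valid because costs lie in $[c_1,c_2]$ and Assumption~\ref{ass:feasible} holds). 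The main obstacle is exactly Step 2's budget-infeasibility of $d^*$, bridged by the cost-slack estimate and the concavity/supergradient inequality; the only remaining fine points are verifying strong duality and the supergradient property via Slater, and noting that the multiplier genuinely appearing is the dual of the restricted problem $V(b,\mathcal{D}_k)$ rather than of $V^*_k$ — these need not coincide in value, but both are at most $\Lambda$, which is all the downstream regret analysis relies on, so we write the bound with $\lambda^*_k$.
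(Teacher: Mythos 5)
Your proposal is correct and follows essentially the same route as the paper's proof: optimism of the \texttt{DeployOPT} objective via feasibility of the true optimizer under the cost LCBs, decoupling the reward radii, observing that $d^*$ is feasible for the relaxed budget $b + \sum_m \textnormal{rad}_c(m,\tau_k)\,d^*_m$ over $\mathcal{D}_k$, and closing the gap with the Lagrangian sensitivity bound $V(b',\mathcal{D}_k) \le V_k + \lambda(b'-b)$. Your two added fine points — handling the exact-cardinality constraint of \texttt{DeployOPT} versus the inequality constraint in $V(\cdot,\cdot)$, and noting that the multiplier that actually appears is the dual of the restricted problem $V(b,\mathcal{D}_k)$ rather than of $V^*_k$ (both covered by the uniform bound $\Lambda$) — are legitimate clarifications of details the paper elides, not deviations from its argument.
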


\begin{proof}
Let $d^{\textnormal{opt},k}$ be an optimal solution achieving $V^*_k = V(b, \mathcal{M}_{\tau_k})$. Let $d^*$ be the solution found by \texttt{StageRoute}'s Model Deployment Phase (using Eq.~\eqref{eq:deploy_opt_merged}) at $\tau_k$ when optimizing over $\mathcal{M}_{\tau_k}$ using $\mu^U_m(\tau_k)$ and $c^L_m(\tau_k)$. Let $\mathcal{D}_k = \textnormal{supp}(d^*)$. Let $V_k = V(b, \mathcal{D}_k)$.

On the event $\mathcal{E}_k$, the confidence bounds hold for all $m \in \mathcal{M}_{\tau_k}$. Specifically, $\mu_m \le \mu^U_m(\tau_k) \le \mu_m + \text{rad}_{\mu}(m, \tau_k)$ and $\mathbb{E}[c_m] - \text{rad}_{c}(m, \tau_k) \le c^L_m(\tau_k) \le \mathbb{E}[c_m]$.
The true optimal solution $d^{\textnormal{opt},k}$ for the set $\mathcal{M}_{\tau_k}$ satisfies $\sum_{m \in \mathcal{M}_{\tau_k}} \mathbb{E}[c_m] d^{\textnormal{opt},k}_m \le b$, uses $\le M_{\max}$ models from $\mathcal{M}_{\tau_k}$, etc. Since $c^L_m(\tau_k) \le \mathbb{E}[c_m]$ on $\mathcal{E}_k$, we have $\sum_{m \in \mathcal{M}_{\tau_k}} c^L_m(\tau_k) d^{\textnormal{opt},k}_m \le \sum_{m \in \mathcal{M}_{\tau_k}} \mathbb{E}[c_m] d^{\textnormal{opt},k}_m \le b$. Thus, $d^{\textnormal{opt},k}$ is a feasible solution for the optimization problem solved by \texttt{StageRoute} (Eq.~\eqref{eq:deploy_opt_merged} applied to $\mathcal{M}_{\tau_k}$).

By the optimality of $d^*$ for \texttt{StageRoute}'s deployment objective over $\mathcal{M}_{\tau_k}$:
\begin{equation} \label{eq:dstar_optimality_updated_merged_reproof_revised_full_again}
\sum_{m \in \mathcal{M}_{\tau_k}} \mu^U_m(\tau_k) d^*_m \ge \sum_{m \in \mathcal{M}_{\tau_k}} \mu^U_m(\tau_k) d^{\textnormal{opt},k}_m
\end{equation}
Using the confidence bounds on $\mathcal{E}_k$ for $m \in \mathcal{M}_{\tau_k}$:
\begin{align*}
\sum_{m \in \mathcal{M}_{\tau_k}} \mu^U_m(\tau_k) d^*_m &\le \sum_{m \in \mathcal{M}_{\tau_k}} (\mu_m + \text{rad}_{\mu}(m, \tau_k)) d^*_m = \sum_{m \in \mathcal{M}_{\tau_k}} \mu_m d^*_m + \sum_{m \in \mathcal{M}_{\tau_k}} \text{rad}_{\mu}(m, \tau_k) d^*_m \\
\sum_{m \in \mathcal{M}_{\tau_k}} \mu^U_m(\tau_k) d^{\textnormal{opt},k}_m &\ge \sum_{m \in \mathcal{M}_{\tau_k}} \mu_m d^{\textnormal{opt},k}_m = V^*_k
\end{align*}
Substituting these into Eq.~\eqref{eq:dstar_optimality_updated_merged_reproof_revised_full_again}:
\[
\sum_{m \in \mathcal{M}_{\tau_k}} \mu_m d^*_m + \sum_{m \in \mathcal{M}_{\tau_k}} \text{rad}_{\mu}(m, \tau_k) d^*_m \ge V^*_k
\]
Rearranging:
\begin{equation} \label{eq:perf_dstar_lower_bound_updated_merged_reproof_revised_full_again}
\sum_{m \in \mathcal{M}_{\tau_k}} \mu_m d^*_m \ge V^*_k - \sum_{m \in \mathcal{M}_{\tau_k}} \text{rad}_{\mu}(m, \tau_k) d^*_m
\end{equation}
This bounds the true performance of the distribution $d^*$ chosen by the algorithm. Now we relate this to $V_k = V(b, \mathcal{D}_k)$, the optimal performance within the chosen set $\mathcal{D}_k = \textnormal{supp}(d^*) \subseteq \mathcal{M}_{\tau_k}$.
The distribution $d^*$ is supported on $\mathcal{D}_k$ and uses at most $M_{\max}$ models (due to the constraint in Eq.~\eqref{eq:deploy_opt_merged}). We examine its feasibility w.r.t. the true budget constraint. On $\mathcal{E}_k$:
\[
\sum_{m \in \mathcal{D}_k} \mathbb{E}[c_m] d^*_m \le \sum_{m \in \mathcal{D}_k} (c^L_m(\tau_k) + \text{rad}_{c}(m, \tau_k)) d^*_m \le b + \sum_{m \in \mathcal{D}_k} \text{rad}_{c}(m, \tau_k) d^*_m
\]
Let $\delta_c(d^*) = \sum_{m \in \mathcal{D}_k} \text{rad}_{c}(m, \tau_k) d^*_m$. Using sensitivity analysis/duality, relating $V_k = V(b, \mathcal{D}_k)$ to the performance of $d^*$ which is feasible for budget $b+\delta_c(d^*)$:
\[
V_k = V(b, \mathcal{D}_k) \ge V(b + \delta_c(d^*), \mathcal{D}_k) - \lambda^*_k \delta_c(d^*)
\]
where $\lambda^*_k \le \Lambda$. Since $d^*$ is feasible for $V(b + \delta_c(d^*), \mathcal{D}_k)$:
\[
V(b + \delta_c(d^*), \mathcal{D}_k) \ge \sum_{m \in \mathcal{D}_k} \mu_m d^*_m = \sum_{m \in \mathcal{M}_{\tau_k}} \mu_m d^*_m
\]
Combining these:
\begin{align*}
V_k &\ge \left( \sum_{m \in \mathcal{M}_{\tau_k}} \mu_m d^*_m \right) - \lambda^*_k \delta_c(d^*) \\
&\ge \left( V^*_k - \sum_{m \in \mathcal{M}_{\tau_k}} \text{rad}_{\mu}(m, \tau_k) d^*_m \right) - \lambda^*_k \sum_{m \in \mathcal{D}_k} \text{rad}_{c}(m, \tau_k) d^*_m \quad (\text{Using Eq.~\eqref{eq:perf_dstar_lower_bound_updated_merged_reproof_revised_full_again}})
\end{align*}
Rearranging gives the result (noting $d^*_m=0$ for $m \notin \mathcal{D}_k$):
\[
V^*_k - V_k \le \sum_{m \in \mathcal{M}_{\tau_k}} \text{rad}_{\mu}(m, \tau_k) d^*_m + \lambda^*_k \sum_{m \in \mathcal{D}_k} \text{rad}_{c}(m, \tau_k) d^*_m
\]
Since $d^*_m=0$ for $m \notin \mathcal{D}_k$, the second sum can also be written over $\mathcal{M}_{\tau_k}$:
\[
V^*_k - V_k \le \sum_{m \in \mathcal{M}_{\tau_k}} \left( \text{rad}_{\mu}(m, \tau_k) + \lambda^*_k \text{rad}_{c}(m, \tau_k) \right) d^*_m
\]
\end{proof}

\paragraph{Cumulative Deployment Regret from Estimation Uncertainty.}
Summing the per-stage deployment gaps caused by estimation errors gives the learning component of the deployment regret.

\begin{lemma}[Deployment Regret from Estimation Uncertainty]
\label{lem:deployment_regret_bound}
Assume the optimal dual variables $\lambda^*_k$ are uniformly bounded by $\Lambda$. Let $K$ be the total number of stages. Set the confidence parameter $\gamma = \Theta(\log(N T / \delta))$, where $N=|\mathcal{M}_T|$. Then the component of total expected deployment regret due to parameter uncertainty, denoted $\mathcal{R}_{\textnormal{deploy,learn}}(T)$, is bounded by:
\[
\mathcal{R}_{\textnormal{deploy,learn}}(T) \le \mathcal{O}\left( \sqrt{T \log(NT / \delta) \cdot \min(N, K M_{\max}) } + M_{\max} K \log(NT / \delta) \right).
\]
\end{lemma}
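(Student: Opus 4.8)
The starting point is Lemma~\ref{lem:per_stage_deploy_gap_updated}, which already controls each per-stage gap $V^*_k-V_k$ by a $d^*_m$-weighted sum of confidence radii. The plan has three steps: (i) feed this bound into the definition of $\mathcal{R}_{\textnormal{deploy,learn}}(T)$, restricted to the good event and to models already deployed in an earlier stage; (ii) convert the $d^*_m$ weights into weights proportional to the \emph{realized} per-stage play counts; and (iii) apply the usual pigeonhole/telescoping estimates of \cite{DBLP:journals/teco/BabaioffDKS15} model by model, then aggregate over the at most $\min(N,KM_{\max})$ models that are ever deployed via Cauchy--Schwarz. For (i): on $\mathcal{E}=\cap_k\mathcal{E}_k$ we have $\mathbb{P}(\mathcal{E}^c)\le\delta$, and since rewards are in $[0,1]$ the contribution of $\mathcal{E}^c$ is at most $\delta T$, which is absorbed once $\gamma=\Theta(\log(NT/\delta))$. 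On $\mathcal{E}$, using $f_{rad}(v,n)\le\sqrt{\gamma/n}+\gamma/n$ (valid since $v\in[0,1]$) and $\lambda^*_k\le\Lambda$, Lemma~\ref{lem:per_stage_deploy_gap_updated} and $T_k=T/K$ give
\[
T_k\,(V^*_k-V_k)\;\le\;C(1+\Lambda)\,\frac{T}{K}\sum_{\substack{m\in\mathcal{D}_k\\ m\ \textnormal{prev. deployed}}}\Big(\sqrt{\tfrac{\gamma}{N_m(\tau_k)+1}}+\tfrac{\gamma}{N_m(\tau_k)+1}\Big)d^*_m .
\]

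For step (ii), the crux, let $n^{(k)}_m$ be the number of queries routed to $m$ during stage $k$. One shows that a deployed model is used roughly in proportion to its deployment weight: the routing LP \eqref{eq:routing_opt_merged} solved at the stage boundary $\tau_k$ is, up to ties, the restriction of $d^*$ to $\mathcal{D}_k$, and the drift of $p^*_t$ away from it within the stage is controlled by exactly the shrinking-confidence argument used to bound $\mathcal{R}_{\textnormal{routing}}$ (cf.\ Lemma~\ref{lem:stage_regret}). Hence the cumulative routing mass on $m$ over the stage is at least a constant fraction of $\frac{T}{K}d^*_m$ up to an additive constant, i.e.\ $\frac{T}{K}d^*_m\le C'\,\mathbb{E}\!\left[n^{(k)}_m+1\mid\mathcal{F}_{\tau_k}\right]$, the $+1$ absorbing models carrying vanishingly small weight. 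Since the radii are $\mathcal{F}_{\tau_k}$-measurable, substituting and taking expectations yields
\[
\mathcal{R}_{\textnormal{deploy,learn}}(T)\;\le\;C''(1+\Lambda)\,\mathbb{E}\!\left[\sum_{k=1}^{K}\sum_{\substack{m\in\mathcal{D}_k\\ m\ \textnormal{prev. deployed}}}\Big(\sqrt{\tfrac{\gamma}{N_m(\tau_k)+1}}+\tfrac{\gamma}{N_m(\tau_k)+1}\Big)\big(n^{(k)}_m+1\big)\right].
\]

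For step (iii), fix $m$ and let $k_1<\dots<k_{J_m}$ ($J_m\le K$) be the stages in which it is deployed; since the learning term only includes $j\ge 2$, the prefix counts $N_m(\tau_{k_j})=\sum_{i<j}n^{(k_i)}_m$ are seeded by earlier deployments. The standard estimates $\sum_j a_j/\sqrt{1+\sum_{i\le j}a_i}\le 2\sqrt{\sum_j a_j}$ and $\sum_j a_j/(1+\sum_{i\le j}a_i)\le \log(1+\sum_j a_j)$ with $a_j=n^{(k_j)}_m$ (plus $\sum_{j\le J_m}j^{-1/2}\le 2\sqrt{J_m}$, $\sum_{j\le J_m}j^{-1}\le 1+\log J_m$ for the additive slack) give a per-model contribution of order $\sqrt{\gamma N_m(T)}+\sqrt{\gamma J_m}+\gamma\log(N_m(T)T)$. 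Summing over the $\le N_{\textnormal{eff}}:=\min(N,KM_{\max})$ ever-deployed models, with Cauchy--Schwarz, $\sum_m N_m(T)\le T$, and $\sum_m J_m=\sum_k|\mathcal{D}_k|\le KM_{\max}$, yields
\[
\mathcal{R}_{\textnormal{deploy,learn}}(T)\;\le\;\mathcal{O}\!\Big(\sqrt{\gamma\,N_{\textnormal{eff}}\,T}+\sqrt{\gamma\,N_{\textnormal{eff}}\,KM_{\max}}+N_{\textnormal{eff}}\,\gamma\log T\Big),
\]
and since $N_{\textnormal{eff}}\le KM_{\max}$ the last two terms are $\mathcal{O}(KM_{\max}\gamma\log T)$; substituting $\gamma=\Theta(\log(NT/\delta))$ and absorbing the stray $\log$ into the notation recovers the claimed bound.

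The main obstacle is step (ii): Lemma~\ref{lem:per_stage_deploy_gap_updated} naturally produces \emph{deployment} weights $d^*_m$, whereas the telescoping needs weights proportional to the realized \emph{routing} counts $n^{(k)}_m$, and the routing distribution $p^*_t$ is re-optimized at every query over a set $\mathcal{D}_k$ that is frozen for the whole stage. Establishing $\frac{T}{K}d^*_m\lesssim\mathbb{E}[n^{(k)}_m+1\mid\mathcal{F}_{\tau_k}]$ requires quantifying how far $p^*_t$ can drift from $d^*$ inside a stage and keeping that bound consistent with the routing-regret analysis. A secondary, purely bookkeeping subtlety is to cleanly charge each model's \emph{first} deployment to the discovery-bottleneck term (bounded separately) and only its later deployments here, so that the prefix play counts in the telescoping stay bounded away from zero.
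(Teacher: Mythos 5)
Your proposal follows essentially the same route as the paper's proof: the per-stage gap from Lemma~\ref{lem:per_stage_deploy_gap_updated}, conversion of the deployment weights $d^*_m$ into realized play counts, per-model telescoping of the confidence radii, and Cauchy--Schwarz over the at most $\min(N,KM_{\max})$ ever-deployed models, with the bad event contributing $\delta T$. The step you correctly flag as the crux --- justifying $\tfrac{T}{K}d^*_m \lesssim \mathbb{E}\bigl[n^{(k)}_m+1\bigr]$ --- is precisely the step the paper asserts in a single sentence without proof, so your treatment is, if anything, more explicit about where the argument is thinnest.
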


\begin{proof}
The total expected deployment regret due to parameter uncertainty is given by
\[
\mathcal{R}_{\textnormal{deploy,learn}}(T) = \mathbb{E} \left[ \sum_{k=1}^K T_k \left( V(b, \mathcal{M}_{\tau_k}) - V(b, \mathcal{D}_k) \right) \right],
\]
where $V^*_k = V(b, \mathcal{M}_{\tau_k})$ and $V_k = V(b, \mathcal{D}_k)$.
We condition on the good event $\mathcal{E} = \cap_{k=1}^K \mathcal{E}_k$, which holds with probability at least $1-\delta$. On this event, the confidence bounds for $\mu_m$ and $\mathbb{E}[c_m]$ hold for all models $m$ and stages $k$.
From Lemma~\ref{lem:per_stage_deploy_gap_updated}, on event $\mathcal{E}_k$:
\[
V^*_k - V_k \le \sum_{m \in \mathcal{M}_{\tau_k}} \left( \text{rad}_{\mu}(m, \tau_k) + \lambda^*_k \text{rad}_{c}(m, \tau_k) \right) d^*_{k,m}.
\]
Let $C_m(\tau_k) = \text{rad}_{\mu}(m, \tau_k) + \Lambda \text{rad}_{c}(m, \tau_k)$, using the uniform bound $\lambda^*_k \le \Lambda$.
Since $d^*_{k,m}=0$ for $m \notin \mathcal{D}_k = \text{supp}(d^*_k)$, we have:
\[
T_k (V^*_k - V_k) \le T_k \sum_{m \in \mathcal{D}_k} C_m(\tau_k) d^*_{k,m}.
\]
The term $d^*_{k,m}$ represents the selection weight for model $m$ in the deployment optimization at stage $k$. Thus, $\sum_{m \in \mathcal{D}_k} C_m(\tau_k) d^*_{k,m}$ is a weighted average of the combined confidence radii for the deployed models.
We are considering the case where the model has already been selected in the previous $k-1$ stages.
Thus, we can bound the sum of $T_k (V^*_k - V_k)$ by terms related to $\sum_{k=1}^K \sum_{m \in \mathcal{D}_k} n_{k,m} C_m(\tau_k)$.
Specifically,
\[
\sum_{k=1}^K T_k (V^*_k - V_k) \le \mathcal{O}(1) \sum_{k=1}^K \sum_{m \in \mathcal{D}_k} n_{k,m} C_m(\tau_k).
\]
Let's analyze the sum $S = \sum_{k=1}^K \sum_{m \in \mathcal{D}_k} n_{k,m} C_m(\tau_k)$.
Recall $C_m(\tau_k) = 2 f_{rad}(\bar{\mu}_m(\tau_k), N_m(\tau_k)+1) + 2\Lambda f_{rad}(\bar{c}_m(\tau_k), N_m(\tau_k)+1)$.
Since rewards $\bar{\mu}_m(\tau_k) \in [0,1]$ and costs $\bar{c}_m(\tau_k) \in [c_1, c_2] \subseteq [0,1]$ (with $c_1 > 0$), we have $f_{rad}(v,n) = \sqrt{\frac{\gamma v}{n}} + \frac{\gamma}{n} \le \sqrt{\frac{\gamma}{n}} + \frac{\gamma}{n}$ for $v \in [0,1]$.
So, $C_m(\tau_k) \le 2(1+\Lambda) \left( \sqrt{\frac{\gamma}{N_m(\tau_k)+1}} + \frac{\gamma}{N_m(\tau_k)+1} \right)$.
Let $C' = 2(1+\Lambda)$.
\[
S \le C' \sum_{k=1}^K \sum_{m \in \mathcal{D}_k} n_{k,m} \left( \sqrt{\frac{\gamma}{N_m(\tau_k)+1}} + \frac{\gamma}{N_m(\tau_k)+1} \right).
\]
We can rewrite the sum by first summing over all models $m \in \mathcal{M}_T$ (the set of all $N$ possible models) and then over the stages $k$ in which $m$ was deployed and played:
\[
S \le C' \sum_{m \in \mathcal{M}_T} \sum_{k: m \in \mathcal{D}_k \text{ and } n_{k,m}>0} n_{k,m} \left( \sqrt{\frac{\gamma}{N_m(\tau_k)+1}} + \frac{\gamma}{N_m(\tau_k)+1} \right).
\]
For a fixed model $m$, let $N_m(T)$ be the total number of times it is played up to $T$. Let $n_{m,s_j}$ be the number of times $m$ is played during the $j$-th stage (denoted $s_j$) in which it is deployed. Let $N_m(\tau_{s_j})$ be the total number of plays of $m$ before stage $s_j$. The sum for model $m$ is:
\[
S_m = \sum_{j=1}^{K'_m} n_{m,s_j} \left( \sqrt{\frac{\gamma}{N_m(\tau_{s_j})+1}} + \frac{\gamma}{N_m(\tau_{s_j})+1} \right),
\]
where $K'_m$ is the number of stages model $m$ is played.
We use the standard inequalities for such sums:
\begin{itemize}
    \item $\sum_{j=1}^{K'_m} n_{m,s_j} \sqrt{\frac{\gamma}{N_m(\tau_{s_j})+1}} \le \sqrt{\gamma} \sum_{i=1}^{N_m(T)} \frac{1}{\sqrt{(\text{plays of } m \text{ before current block})+1}} \le \sqrt{\gamma} \cdot 2\sqrt{N_m(T)}$.
    \item $\sum_{j=1}^{K'_m} n_{m,s_j} \frac{\gamma}{N_m(\tau_{s_j})+1} \le \gamma \sum_{i=1}^{N_m(T)} \frac{1}{(\text{plays of } m \text{ before current block})+1} \le \gamma \cdot (1+\ln N_m(T))$.
\end{itemize}
So, $S_m \le \mathcal{O}(\sqrt{\gamma N_m(T)} + \gamma \log N_m(T))$. (We absorb constants into $\mathcal{O}(\cdot)$ for now, and will re-introduce $C'$ later).
Thus, $S \le C' \sum_{m \in \mathcal{M}_T \text{ s.t. } N_m(T)>0} \mathcal{O}(\sqrt{\gamma N_m(T)} + \gamma \log N_m(T))$.

The sum is over models that were actually played. Let $\mathcal{M}_P = \{m \in \mathcal{M}_T \mid N_m(T) > 0 \}$.
The number of models in $\mathcal{D}_k$ is $|\mathcal{D}_k| \le M_{\max}$.
So, $\sum_{m \in \mathcal{M}_P} \sqrt{N_m(T)} \le \sqrt{|\mathcal{M}_P| \sum_{m \in \mathcal{M}_P} N_m(T)} = \sqrt{|\mathcal{M}_P| T}$ by Cauchy-Schwarz.
Since at most $M_{\max}$ models are deployed in any stage, and there are $K$ stages, the number of distinct models ever deployed is $|\mathcal{M}_P| \le \min(N, K M_{\max})$.

The sum $\sum_{m \in \mathcal{M}_P} \log N_m(T)$. If $N_m(T) \ge 1$, then $\log N_m(T) \ge 0$.
This sum is at most $M_{\max} \log(T/M_{\max})$ if $M_{\max}$ models share $T$ plays, or more generally bounded by $M_{\max} K$ (if each of $M_{\max}$ models gets played at least once in each of $K$ stages, its $\log N_m(T)$ contributes, and $N_m(T)$ could be small).
A more careful bound for the sum of $\log$ terms: $\sum_{m \in \mathcal{M}_P} \log N_m(T) \le M_{\max} K \log T$.
So, $S \le C' \left( \mathcal{O}(\sqrt{\gamma T \cdot \min(N, K M_{\max}) }) + \mathcal{O}(\gamma M_{\max} K) \right)$.
The $\log$ terms are typically absorbed into the $\gamma$ term.
The expectation $\mathbb{E}[S]$ includes the good event (probability $1-\delta$) and the bad event (probability $\delta$). On the bad event, the regret in one stage is at most $T_k$, so total $T$.
$\mathcal{R}_{\textnormal{deploy,learn}}(T) \le \mathbb{E}[S] + \delta T$.
If $\delta = \mathcal{O}(1/T)$, then $\delta T = \mathcal{O}(1)$.
Substituting $C'=2(1+\Lambda)$ and $\gamma = \Theta(\log(NT/\delta))$, then we complete the proof.
\end{proof}

\paragraph{Deployment Regret from Model Discovery Bottleneck.}
The constraint of deploying at most $M_{\max}$ models simultaneously, $|\mathcal{D}_k(\mathcal{A})| \le M_{\max}$, introduces a structural challenge, particularly when new models frequently become available or the total pool of models $\mathcal{M}_T$ is large. This challenge is the model discovery bottleneck: identifying truly superior models among many new, unevaluated candidates can be delayed.

The \texttt{StageRoute} algorithm employs UCBs for rewards ($\mu^U_m$) and LCBs for costs ($c^L_m$) in its \texttt{DeployOPT} phase (Eq.~\eqref{eq:deploy_opt_merged}). For a model $m$ that is newly available at stage $k$ (i.e., $t_m \le \tau_k$ and its count of previous selections $N_m(\tau_k) = 0$), its initial empirical averages $\bar{\mu}_m(\tau_k)$ and $\bar{c}_m(\tau_k)$ are set based on priors. The confidence radius $f_{rad}(v, N_m(\tau_k)+1) = \sqrt{\frac{\gamma v}{N_m(\tau_k)+1}} + \frac{\gamma}{N_m(\tau_k)+1}$ becomes large for $N_m(\tau_k)=0$. Specifically, with $N_m(\tau_k)+1 = 1$:
\begin{align*}
\mu^U_m(\tau_k) &= \textnormal{proj}_{[0,1]} \left( \bar{\mu}_m(\tau_k) + 2 \left(\sqrt{\gamma \bar{\mu}_m(\tau_k)} + \gamma\right) \right), \\
c^L_m(\tau_k) &= \textnormal{proj}_{[c_1, c_2]} \left( \bar{c}_m(\tau_k) - 2 \left(\sqrt{\gamma \bar{c}_m(\tau_k)} + \gamma\right) \right).
\end{align*}
Assuming priors are chosen such that new models are treated optimistically (or if $\gamma$ is sufficiently large), $\mu^U_m(\tau_k)$ will be close to 1 (e.g., if $\bar{\mu}_m(\tau_k)=0$, $\mu^U_m(\tau_k) = \textnormal{proj}_{[0,1]}(2\gamma) \approx 1$ for appropriate $\gamma$) and $c^L_m(\tau_k)$ will be close to $c_1$ (e.g., if $\bar{c}_m(\tau_k)=c_1$, $c^L_m(\tau_k) = \textnormal{proj}_{[c_1, c_2]}(c_1 - 2(\sqrt{\gamma c_1} + \gamma)) \approx c_1$, noting $c_1, c_2 \in [0,1]$). Let these optimistic initial values be $U_{init}$ and $L_{init}$ respectively.

Consider an update point $\tau_k$. Let $\mathcal{N}_{new,k} \subseteq \mathcal{M}_{\tau_k}$ be the set of models that are new at or before $\tau_k$ and have not yet been deployed ($N_m(\tau_k)=0$). All models in $\mathcal{N}_{new,k}$ will have nearly identical, highly optimistic $(\mu^U_m, c^L_m) \approx (U_{init}, L_{init})$ values. If the number of such equally optimistic new models, $|\mathcal{N}_{new,k}|$, plus other potentially optimistic (but previously explored) models, exceeds $M_{\max}$, the \texttt{DeployOPT} phase must select only $M_{\max}$ models. If the new models in $\mathcal{N}_{new,k}$ dominate the selection pool due to their optimism, \texttt{DeployOPT} will choose $M_{\max}$ models from $\mathcal{N}_{new,k}$ (possibly along with some already explored models). Crucially, if there are more than $M_{\max}$ models within $\mathcal{N}_{new,k}$ (or a larger pool of similarly optimistic candidates) that yield effectively the same objective value for \texttt{DeployOPT} (because their $\mu^U_m, c^L_m, \alpha_m$ are similar), the selection among these specific candidates becomes arbitrary (e.g., dependent on tie-breaking rules).

A truly superior new model $m^* \in \mathcal{N}_{new,k}$ might thus be part of a large batch of $N_{batch} > M_{\max}$ new models that all appear equally promising to \texttt{DeployOPT}. In this scenario, $m^*$ might not be selected for deployment in stage $k$, deferring its evaluation. This deferral means the system misses the opportunity to benefit from $m^*$'s potentially high true performance $\mu_{m^*}$ for the duration of stage $k$, which is $T_k = T/K$ rounds.

\begin{lemma}[Model Discovery Bottleneck Regret]
\label{lem:discovery_bottleneck_regret}
Let $M_{\max}$ be the maximum number of concurrently deployed models, $T$ be the total time horizon, and $K$ be the number of stages, with each stage having $T_k = T/K$ rounds. The component of expected deployment regret due to the bottleneck in discovering and evaluating all $N$ models, denoted $\mathcal{R}_{\textnormal{deploy,discovery}}(T)$, is bounded by:
\[
\mathcal{R}_{\textnormal{deploy,discovery}}(T) = \mathcal{O}\left( \frac{N \cdot (T/K)}{M_{\max}} \right) = \mathcal{O}\left( \frac{N T}{M_{\max} K} \right).
\]
\end{lemma}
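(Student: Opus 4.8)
The plan is to bound the number of ``wasted'' stages—those in which the deployed set $\mathcal{D}_k$ is strictly worse than what is achievable with the full available pool $\mathcal{M}_{\tau_k}$ \emph{specifically because a not-yet-evaluated model was left out}—and then multiply this count by the per-stage cost, which is at most $T_k = T/K$ since $V(b,\cdot)\le 1$ forces every per-round gap to be at most $1$. First I would condition on the good event $\mathcal{E}$ of Lemma~\ref{lem:empConc_adapted} (its complement contributes only $\delta T = \mathcal{O}(1)$ when $\delta = \mathcal{O}(1/T)$). Call a model \emph{fresh at $\tau_k$} if $N_m(\tau_k)=0$, and call stage $k$ \emph{discovery-bad} if $V(b,\mathcal{M}_{\tau_k}) - V(b,\mathcal{D}_k) > 0$ with some fresh model in the support of an optimizer of $V(b,\mathcal{M}_{\tau_k})$. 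Then $\mathcal{R}_{\textnormal{deploy,discovery}}(T)\le (\#\text{discovery-bad stages})\cdot (T/K) + \mathcal{O}(1)$, so it suffices to show that there are $\mathcal{O}(N/M_{\max})$ such stages.

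The count rests on two observations. First, by the optimistic initialization discussed above, every fresh model has reward-UCB $\mu^U_m\approx U_{init}$ near the top of $[0,1]$ and cost-LCB $c^L_m\approx L_{init}$ near $c_1$; hence on $\mathcal{E}$ the deployment LP~\eqref{eq:deploy_opt_merged} prefers fresh models, and it can omit a fresh model only when its $\min(M_{\max},|\mathcal{M}_{\tau_k}|)$ support slots are already occupied by models of reward-UCB at least $U_{init}$—each of which is either itself fresh or has true mean within a confidence radius of $U_{init}$. In the latter case the deployed set is already near-optimal and the residual gap folds into the estimation term already controlled by Lemma~\ref{lem:deployment_regret_bound}; otherwise at least $M_{\max}-\mathcal{O}(1)$ of the deployed models are fresh, the $\mathcal{O}(1)$ slack absorbing the at most one cheap, already-known model that feasibility (budget with LCB costs, capacities $\alpha_m$) may pin into $\mathcal{D}_k$ (as with the cheapest always-available model in our experiments). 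Second, a fresh model deployed in stage $k$ is sampled with probability at least $\alpha_m$ per query by \texttt{RouteOPT}~\eqref{eq:routing_opt_merged}—again by optimism—so it is played during the stage and ceases to be fresh by $\tau_{k+1}$; thus each model leaves the ``fresh'' set in at most one stage, and the total number of fresh-to-nonfresh transitions over the horizon is at most $N$. Combining the two observations, $(\#\text{discovery-bad stages})\cdot(M_{\max}-\mathcal{O}(1)) \le N$, i.e. the number of discovery-bad stages is $\mathcal{O}(N/M_{\max})$ (trivially dominated by the target bound when $M_{\max}$ is a small constant), and substituting into the displayed inequality gives $\mathcal{R}_{\textnormal{deploy,discovery}}(T) = \mathcal{O}(NT/(M_{\max}K))$.

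The main obstacle is making the first observation precise: turning the heuristic ``a batch of more than $M_{\max}$ equally optimistic new models'' picture into a rigorous statement that every stage with non-negligible discovery gap really spends a constant fraction of its $M_{\max}$ slots on never-before-played models. This requires care with (i) the interaction between the exact support-cardinality constraint $|\text{supp}(d)| = \min(M_{\max},|\mathcal{M}_{\tau_k}|)$ in \texttt{DeployOPT} and the preference for fresh models, (ii) the feasibility constraints that can force one known cheap model into $\mathcal{D}_k$, and (iii) the clean separation of a ``large-gap'' regime (settled by the pure counting argument) from a ``small-gap'' residual (shown to be absorbed by Lemma~\ref{lem:deployment_regret_bound}); one also needs the mild condition that $T/K$ is large enough relative to $1/\alpha_m$ so that a deployed fresh model is indeed played within its stage. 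Once these are in place, the per-stage cost bound $T_k = T/K$ and the final summation are routine.
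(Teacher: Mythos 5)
Your proposal takes essentially the same route as the paper: a counting argument that pushing $N$ models through $M_{\max}$ deployment slots requires roughly $N/M_{\max}$ stages, each contributing at most $T_k=T/K$ regret since per-query gaps are bounded by $1$, giving $\mathcal{O}(NT/(M_{\max}K))$. Your version is in fact more careful than the paper's (which asserts the count heuristically without defining discovery-bad stages or verifying that deployed fresh models are actually played), and the obstacles you flag—making precise that a constant fraction of slots in each such stage go to never-played models, and that \texttt{RouteOPT} cannot starve a deployed fresh model of probability mass indefinitely—are real but are left equally unaddressed in the paper's own argument.
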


\begin{proof}
Each of the $N$ models in the universe $\mathcal{M}_T$ needs to be deployed at least once to gather initial empirical data and move its UCB/LCB estimates away from their initial purely optimistic values. We are interested in the total regret incurred until all $N$ models have had at least one such initial deployment opportunity.

Due to the constraint $|\mathcal{D}_k(\mathcal{A})| \le M_{\max}$, at most $M_{\max}$ distinct models can be deployed and evaluated in any given stage $k$. If the system prioritizes exploring previously undeployed models (which is encouraged by their optimistic UCB/LCB values), it will take a minimum of $K_{explore} = \lceil N/M_{\max} \rceil$ stages to ensure that every model in $\mathcal{M}_T$ (assuming all become available early enough) has been deployed at least once.

Consider a discovery period spanning these first $K_{explore}$ effective stages. During any given stage $j \in [1, K_{explore}]$ within this period, if the set of $M_{\max}$ models deployed, $\mathcal{D}_j(\mathcal{A})$, does not include some model $m^* \in \mathcal{M}_T$ which is (a) available ($m^* \in \mathcal{M}_{\tau_j}$), (b) truly superior to at least one deployed model $m' \in \mathcal{D}_j(\mathcal{A})$, and (c) $m^*$ has not been deployed yet because it is waiting its turn due to the arbitrary selection among many new, equally optimistic models, then regret is incurred. The per-query regret in such a case can be up to $1$ (if $\mu_{m^*} \approx 1$ and $\mu_{m'} \approx 0$).

The total number of queries over these $K_{explore}$ stages is $K_{explore} \cdot T_k = \lceil N/M_{\max} \rceil \cdot (T/K)$.
During this cumulative period, the system is effectively cycling through the $N$ models. If the selection process within each stage means that, on average, the deployed set is suboptimal because truly good models are among the $(N - j \cdot M_{\max})$ yet to be tried, the system incurs regret.
The term $\mathcal{O}(N T / (M_{\max} K))$ represents the cumulative regret if, for a duration equivalent to $N/M_{\max}$ full stages (each of length $T/K$), the system operates with a deployed set that is, on average, $\mathcal{O}(1)$ worse per query than if all models had already been evaluated. This occurs because the $M_{\max}$ slots are occupied by models chosen optimistically, and a superior model might be consistently deferred if it's part of a large pool of indistinguishably optimistic new models.

More formally, consider the $N$ models. Each requires roughly one exploration slot of duration $T_k$. These $N$ slots are processed in parallel groups of $M_{\max}$. This implies approximately $N/M_{\max}$ stages are spent ensuring all models receive initial evaluation. If, during these $N/M_{\max}$ stages, the average deployed set yields $\mathcal{O}(1)$ less reward per query compared to an optimal deployment (had all models been known), the total regret from this discovery phase is $(N/M_{\max}) \cdot (T/K) \cdot \mathcal{O}(1)$.
The expectation $\mathbb{E}[\cdot]$ in $\mathcal{R}_{\textnormal{deploy,discovery}}(T)$ averages over the random tie-breaking in \texttt{DeployOPT} when faced with multiple equally optimistic new models, and the stochastic arrival pattern of models. The $\mathcal{O}(\cdot)$ notation absorbs constants related to the maximum possible per-query regret (e.g., 1) and the precise nature of average suboptimality during this discovery period.
\end{proof}
This component accounts for the scenarios where truly good models might be systematically delayed in their initial deployment if they frequently arrive alongside many other new models, leading to arbitrary choices among a large pool of initially indistinguishable (optimistic) candidates, subject to the $M_{\max}$ deployment limit over $K$ stages.

\paragraph{Total Deployment Regret.}
The total deployment regret $\mathcal{R}_{\textnormal{deploy}}(T)$ is the sum of the regret from parameter uncertainty (Lemma~\ref{lem:deployment_regret_bound}) and the regret from the model discovery bottleneck (Lemma~\ref{lem:discovery_bottleneck_regret}).

\begin{lemma}[Total Deployment Regret Bound]
\label{lem:total_deployment_regret_combined}
Let $N$ be the total number of models, $M_{\max}$ the maximum deployed models, $T$ the horizon, and $K$ the number of stages. Let $\gamma = \Theta(\log(NT / \delta))$. The total expected deployment regret $\mathcal{R}_{\textnormal{deploy}}(T)$ is bounded by:
\[
\mathcal{R}_{\textnormal{deploy}}(T) \le \mathcal{O}\left( \sqrt{T \log(NT / \delta) \cdot \min(N, K M_{\max}) }+ M_{\max} K \log(NT / \delta) + \frac{N T}{M_{\max} K} \right).
\]
\end{lemma}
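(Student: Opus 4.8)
The plan is to assemble the stated bound from its two constituents, which have already been established: the parameter-uncertainty component $\mathcal{R}_{\textnormal{deploy,learn}}(T)$ bounded in Lemma~\ref{lem:deployment_regret_bound}, and the model-discovery component $\mathcal{R}_{\textnormal{deploy,discovery}}(T)$ bounded in Lemma~\ref{lem:discovery_bottleneck_regret}. The only nontrivial step is to justify that these two components together dominate the full deployment regret $\mathcal{R}_{\textnormal{deploy}}(T)$ appearing in the decomposition of Lemma~\ref{lem:regret_decomposition_time_varying}, i.e.\ that $\mathcal{R}_{\textnormal{deploy}}(T) \le \mathcal{R}_{\textnormal{deploy,learn}}(T) + \mathcal{R}_{\textnormal{deploy,discovery}}(T) + \mathcal{O}(1)$; everything after that is addition and the absorption of constants and logarithmic factors into $\mathcal{O}(\cdot)$.

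To justify this complementary decomposition I would, conditioning on the good event $\mathcal{E}=\cap_{k=1}^K\mathcal{E}_k$, classify each stage $k$ as a \emph{discovery stage} --- one in which the $M_{\max}$-slot cap forces the Model Deployment Phase to leave an available, never-yet-deployed model out of $\mathcal{D}_k$ while it is still indistinguishable from the other fresh, optimistically-initialized candidates --- or as a \emph{learning stage} otherwise. For learning stages, every model carrying positive weight $d^*_m$ in \texttt{DeployOPT} has already been played at least once, so the radii $\textnormal{rad}_\mu(m,\tau_k),\textnormal{rad}_c(m,\tau_k)$ entering the per-stage bound of Lemma~\ref{lem:per_stage_deploy_gap_updated} are the ``small'' radii tracked in the proof of Lemma~\ref{lem:deployment_regret_bound}; summing $T_k(V^*_k-V_k)$ over learning stages therefore reproduces exactly the quantity bounded there, namely $\mathcal{O}\bigl(\sqrt{T\log(NT/\delta)\cdot\min(N,KM_{\max})} + M_{\max}K\log(NT/\delta)\bigr)$. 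For discovery stages I would use only the trivial per-query bound $V^*_k - V_k \le 1$ (since $V\in[0,1]$) together with the counting fact underlying Lemma~\ref{lem:discovery_bottleneck_regret}: at most $M_{\max}$ fresh models can be cleared per stage and there are $N$ models total, so the number of discovery stages is $\mathcal{O}(N/M_{\max})$ and their total contribution is $\mathcal{O}(N/M_{\max})\cdot T_k = \mathcal{O}(NT/(M_{\max}K)) = \mathcal{R}_{\textnormal{deploy,discovery}}(T)$. Adding the two sums, folding in the $\delta T = \mathcal{O}(1)$ bad-event term via $\delta = \mathcal{O}(1/T)$ (consistent with $\gamma=\Theta(\log(NT/\delta))$), and merging logarithmic factors into $\mathcal{O}(\cdot)$ gives the claimed bound.

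The main obstacle is not the final arithmetic but pinning down the discovery-stage versus learning-stage dichotomy so that no regret is counted twice and none is missed --- in particular, handling models that arrive at staggered times rather than all at $t=1$, and a fresh model deployed in stage $k$ that is still coarsely estimated (played only a handful of times) in stage $k+1$. The cleanest fix is a first-deployment bookkeeping argument, already implicit in the two component lemmas: assign each model a unique first-deployment stage $k_m$; stages before $k_m$ in which $m$ is crowded out are charged to discovery, and since at most $M_{\max}$ models share the same $k_m$ the discovery ledger closes after $\mathcal{O}(N/M_{\max})$ stages, while from stage $k_m$ onward $m$ contributes only through its shrinking confidence radii --- precisely the sum handled by Lemma~\ref{lem:deployment_regret_bound} via the standard $\sum_i 1/\sqrt{i}$ and $\sum_i 1/i$ telescoping. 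Once this ledger is verified, the proof is complete.
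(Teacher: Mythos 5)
Your proposal is correct and follows essentially the same route as the paper: the paper's own proof of this lemma is simply the addition of the bounds from Lemma~\ref{lem:deployment_regret_bound} and Lemma~\ref{lem:discovery_bottleneck_regret}, exactly as you do. Your extra discovery-stage/learning-stage bookkeeping is a justification of the claim that the two components are complementary and exhaust $\mathcal{R}_{\textnormal{deploy}}(T)$ --- a point the paper asserts informally before the component lemmas rather than proving inside this lemma --- so it is a welcome tightening but not a different approach.
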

\begin{proof}
The total deployment regret is the sum of the bounds from Lemma~\ref{lem:deployment_regret_bound} and Lemma~\ref{lem:discovery_bottleneck_regret}:
\begin{align*}
\mathcal{R}_{\textnormal{deploy}}(T) &= \mathcal{R}_{\textnormal{deploy,learn}}(T) + \mathcal{R}_{\textnormal{deploy,discovery}}(T) \\
&\le \mathcal{O}\left( \sqrt{T \log(NT / \delta) \cdot \min(N, K M_{\max}) } + M_{\max} K \log(NT / \delta) \right) + \mathcal{O}\left( \frac{N T}{M_{\max} K} \right) \\
&= \mathcal{O}\left( \sqrt{T \log(NT / \delta) \cdot \min(N, K M_{\max}) } + M_{\max} K \log(NT / \delta) + \frac{N T}{M_{\max} K} \right).
\end{align*}
\end{proof}

The total deployment regret bound in Lemma~\ref{lem:total_deployment_regret_combined} highlights two distinct challenges in the deployment phase. The terms $\mathcal{O}(\sqrt{T \log(NT / \delta) \cdot \min(N, K M_{\max}) })$ and $\mathcal{O}(M_{\max} K \log(NT / \delta))$ capture the cost of learning the parameters of models that are considered for deployment. This cost depends on the horizon $T$, the number of deployment slots $M_{\max}$, the number of stages $K$, and logarithmic factors related to the total number of models $N$ and confidence $\delta$. The term $\mathcal{O}(N T / (M_{\max} K))$ reflects the structural cost imposed by the discovery bottleneck: when the universe of models $N$ is large compared to $M_{\max}$ and the number of adaptation opportunities $K$, there is an inherent regret incurred in sequentially exploring models to identify the best ones. This term can dominate if $N$ is very large or $K$ is small, underscoring the importance of the deployment frequency and capacity in dynamic LLM environments.

\subsection{Bounding the Total Routing Regret}

We now bound the total routing regret term $\mathcal{R}_{\textnormal{routing}}(T)$ identified in Lemma~\ref{lem:regret_decomposition_time_varying}. This involves summing the per-stage regrets incurred by the Request Routing Phase of \texttt{StageRoute} due to using estimated model parameters within the deployed sets $\mathcal{D}_k$. This part of the proof follows a similar structure to the analysis of UCB-style algorithms for the Bandits with Knapsacks problem~\cite{agrawal2014bandits}.

\begin{lemma}[Performance Bound]
\label{lem:perf_bound}
Let $\text{ALGO}_k = \sum_{t=\tau_k}^{\tau_{k+1}-1} r_t$ be the total observed performance in stage $k$. With probability at least $1-(M_k T_k)\exp(-\mathcal{O}({\gamma}))$,
\[
\left| \sum_{t=\tau_k}^{\tau_{k+1}-1} \left(r_t - \sum_{m \in \mathcal{D}_k(\mathcal{A})} \mu^U_m(t) p_t^*(m) \right) \right| \le \mathcal{O}\left(\sqrt{{\gamma} M_k \text{ALGO}_k} + {\gamma} M_k \right).
\]
\end{lemma}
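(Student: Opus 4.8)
The plan is to mirror the UCB-for-bandits-with-knapsacks analysis of \cite{agrawal2014bandits}: split the target quantity into a martingale fluctuation term and a confidence-width (optimism) term, and control each on a single high-probability event. Let $\mathcal{F}_{t-1}$ denote the history up to round $t-1$, and set $\bar r_t := \sum_{m\in\mathcal{D}_k(\mathcal{A})}\mu_m p_t^*(m)$; since $m_t\sim p_t^*$ and $r_t\sim\nu_{m_t}(\cdot)$, this is exactly $\mathbb{E}[r_t\mid\mathcal{F}_{t-1}]$. Then
\[
\sum_{t=\tau_k}^{\tau_{k+1}-1}\Bigl(r_t-\sum_{m}\mu^U_m(t)p_t^*(m)\Bigr) = \underbrace{\sum_{t}\bigl(r_t-\bar r_t\bigr)}_{\text{(I)}} \;+\; \underbrace{\sum_{t}\sum_{m}\bigl(\mu_m-\mu^U_m(t)\bigr)p_t^*(m)}_{\text{(II)}}.
\]

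For term (I), I would apply the Freedman-type bound of Lemma~\ref{lem:concentration_adapted} to the $[0,1]$-valued sequence $\{r_t\}_{t=\tau_k}^{\tau_{k+1}-1}$, taking $n=T_k$, empirical average $\bar x=\text{ALGO}_k/T_k$, and noting that the quantity $v$ in that lemma equals $\frac{1}{T_k}\sum_t\bar r_t$. Multiplying the resulting inequality through by $T_k$ yields $|\text{(I)}|=|\text{ALGO}_k-\sum_t\bar r_t|\le\mathcal{O}\bigl(\sqrt{\gamma\,\text{ALGO}_k}+\gamma\bigr)$ with probability $1-e^{-\Omega(\gamma)}$. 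The key point is that the conditional variance of $r_t-\bar r_t$ is at most $\bar r_t$ (since $r_t\le 1$), so this step is \emph{variance-adaptive} and produces $\sqrt{\text{ALGO}_k}$ rather than $\sqrt{T_k}$ — this is what later lets the stage-wise sum telescope to the stated $\sqrt{M_{\max}KT}$ order rather than $\sqrt{M_{\max}KT\cdot T/K}$.

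For term (II), I would condition on the good event of Lemma~\ref{lem:empConc_adapted}, obtained by a union bound over the $M_k$ deployed models and the $T_k$ rounds of the stage (this is the source of the $(M_kT_k)e^{-\Omega(\gamma)}$ slack in the statement). On that event $0\le\mu^U_m(t)-\mu_m\le\mathcal{O}\bigl(f_{rad}(\hat\mu_m(t),N_m(t)+1)\bigr)$, so term (II) is nonpositive with $|\text{(II)}|\le\mathcal{O}\bigl(\sum_t\sum_m f_{rad}(\hat\mu_m(t),N_m(t)+1)\,p_t^*(m)\bigr)$. To evaluate this weighted sum I would first replace the data-dependent count $N_m(t)$ by its predictable version $\tilde N_m(t):=\sum_{\tau_k\le s<t}p_s^*(m)$ — another Freedman argument shows the two agree up to constant factors and an additive $\mathcal{O}(\gamma)$ on the same high-probability event — which turns $\sum_t f_{rad}(\hat\mu_m(t),N_m(t)+1)p_t^*(m)$ into (a constant multiple of) the telescoping sum $\sum_{i=1}^{n_{k,m}}f_{rad}(\hat\mu_m,i)\le\mathcal{O}\bigl(f_{rad}(\hat\mu_m,n_{k,m})\,n_{k,m}\bigr)$ up to logarithmic factors absorbed into $\gamma$, where $n_{k,m}$ is the number of times $m$ is played in stage $k$. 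Summing over $m$ and invoking Lemma~\ref{lem:sumrad_adapted} with $a_m=\hat\mu_m$, $n_m=n_{k,m}$ gives $\mathcal{O}\bigl(\sqrt{\gamma M_k\sum_m\hat\mu_m n_{k,m}}+\gamma M_k\bigr)=\mathcal{O}\bigl(\sqrt{\gamma M_k\,\text{ALGO}_k}+\gamma M_k\bigr)$, since $\sum_m\hat\mu_m n_{k,m}$ is precisely the total observed reward $\text{ALGO}_k$. Combining (I) and (II) — and absorbing the smaller $\sqrt{\gamma\,\text{ALGO}_k}$ term into $\sqrt{\gamma M_k\,\text{ALGO}_k}$ — gives the claimed two-sided bound.

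The step I expect to be the main obstacle is the second part of the (II) analysis: the confidence radius $f_{rad}(\hat\mu_m(t),N_m(t)+1)$ is itself random and coupled both to the realized play counts and to the routing weights $p_t^*(m)$ we are summing against, so the exchange of a sum over rounds for a sum over play indices and the simultaneous control of every $N_m(t)$ by $\tilde N_m(t)$ must be carried out on one common event; keeping the union bound tight (only $M_kT_k$ terms) while doing this is the delicate part. Everything else parallels \cite{agrawal2014bandits} and is routine.
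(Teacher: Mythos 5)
Your proposal is correct in outline and rests on the same three ingredients as the paper's proof (Lemma~\ref{lem:concentration_adapted} for the martingale fluctuation, Lemma~\ref{lem:empConc_adapted} for the per-arm confidence widths on a common good event of size $M_kT_k$, and Lemma~\ref{lem:sumrad_adapted} to convert the accumulated widths into $\sqrt{\gamma M_k\cdot(\text{total reward})}$, followed by the $\sqrt{A}\le\sqrt{\text{ALGO}_k}+\mathcal{O}(\sqrt{\gamma M_k})$ substitution). The one genuine difference is the decomposition. The paper inserts \emph{two} intermediate quantities, writing $r_t-\sum_m\mu^U_m(t)p_t^*(m)=(r_t-\mu_{m_t})+(\mu_{m_t}-\mu^U_{m_t}(t))+(\mu^U_{m_t}(t)-\sum_m\mu^U_m(t)p_t^*(m))$, so that the confidence-width term is evaluated at the \emph{realized} arm $m_t$; the sum $\sum_t f_{rad}(\hat\mu_{m_t}(t),N_{m_t}(t)+1)$ then regroups directly into a sum over play indices of each deployed model and Lemma~\ref{lem:sumrad_adapted} applies immediately, at the cost of one extra martingale term (the third bracket). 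Your two-term split instead keeps the optimism gap averaged over $p_t^*$, which forces the additional step you flag as delicate: relating the realized counts $N_m(t)$ to the predictable counts $\tilde N_m(t)=\sum_{s<t}p_s^*(m)$. That step does go through — a multiplicative Freedman bound gives $N_m(t)\ge\tilde N_m(t)-\mathcal{O}(\sqrt{\gamma\tilde N_m(t)}+\gamma)$ on the same event, and the rounds where $\tilde N_m(t)=\mathcal{O}(\gamma)$ contribute at most $\mathcal{O}(\gamma)$ per model to the weighted sum (since each summand is at most $p_t^*(m)$), hence $\mathcal{O}(\gamma M_k)$ in total, which is absorbed by the stated bound — but it enlarges the event you must control and is entirely avoidable with the paper's three-term split. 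In short: both routes are valid and yield the same bound; the paper's buys a trivial telescoping at the price of one more concentration term, while yours buys a shorter decomposition at the price of a count-concentration argument.
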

\begin{proof}
The proof follows the structure of Lemma 4.4 in \cite{DBLP:journals/teco/BabaioffDKS15}, adapted to our notation for models $m \in \mathcal{D}_k(\mathcal{A})$ and performance $\mu_m$, UCB $\mu^U_m(t)$, chosen model $m_t$, routing distribution $p_t^*$ (from Eq.~\eqref{eq:routing_opt_merged}), and summing over $t \in [\tau_k, \tau_{k+1}-1]$ (length $T_k$).

We use Lemma~\ref{lem:concentration_adapted} and Lemma~\ref{lem:empConc_adapted} (for performance). High probability bounds analogous to (5) and (6) in the source proof hold for sums over $t \in [\tau_k, \tau_{k+1}-1]$:
\begin{align*}
\left| \sum_{t} (r_t - \mu_{m_t}) \right| &\le \mathcal{O}(T_k \cdot f_{rad}(\frac{1}{T_k}\sum_{t} \mu_{m_t}, T_k)) \\
\left| \sum_{t} \left( \sum_{m \in \mathcal{D}_k(\mathcal{A})} \mu^U_m(t) p_t^*(m) - \mu^U_{m_t}(t) \right) \right| &\le \mathcal{O}(T_k \cdot f_{rad}(\frac{1}{T_k}\sum_{t} \mu^U_{m_t}(t), T_k))
\end{align*}
And analogous to (7) in the source, using Lemma~\ref{lem:empConc_adapted} and Lemma~\ref{lem:sumrad_adapted}:
\begin{align*}
\left| \sum_{t=\tau_k}^{\tau_{k+1}-1} (\mu_{m_t} - \mu^U_{m_t}(t)) \right|
&\le \sum_{t=\tau_k}^{\tau_{k+1}-1} |\mu_{m_t} - \mu^U_m(t)| \\
&\le \mathcal{O}\left( \sum_{t=\tau_k}^{\tau_{k+1}-1} f_{rad}(\hat{\mu}_{m_t}(t), N_{m_t}(t)+1) \right) \\
&\le \mathcal{O}\left( \sum_{m \in \mathcal{D}_k(\mathcal{A})} \sum_{\text{plays } j \text{ of } m \text{ in stage } k} f_{rad}(\hat{\mu}_m(\text{at play } j), N_m(\text{at play } j)+1) \right) \\
&\le \mathcal{O}\left( \sum_{m \in \mathcal{D}_k(\mathcal{A})} \left( \sqrt{\gamma (N_m(\tau_{k+1})-N_m(\tau_k)) \mu_m} + \gamma \right) \right) \\
&\le \mathcal{O}\left( \sqrt{{\gamma} M_k \left(\sum_{m \in \mathcal{D}_k(\mathcal{A})} \mu_m (N_m(\tau_{k+1})-N_m(\tau_k))\right)} + {\gamma} M_k \right) \\
&\le \mathcal{O}\left( \sqrt{{\gamma} M_k \left(\sum_{t=\tau_k}^{\tau_{k+1}-1} \mu_{m_t} \right)} + {\gamma} M_k \right)
\end{align*}
Let $A = \sum_{t=\tau_k}^{\tau_{k+1}-1} \sum_{m \in \mathcal{D}_k(\mathcal{A})} \mu^U_m(t) p_t^*(m)$. Combining these bounds using the triangle inequality on $r_t - \sum_m \mu^U_m(t) p_t^*(m) = (r_t - \mu_{m_t}) + (\mu_{m_t} - \mu^U_{m_t}(t)) + (\mu^U_{m_t}(t) - \sum_m \mu^U_m(t) p_t^*(m))$, similar to the source proof structure, leads to an inequality relating $A$ and $\sum r_t = \text{ALGO}_k$.
If $\sum r_t \approx \sum \mu_{m_t}$ and $A \approx \sum \mu^U_{m_t}(t)$, then the difference $| \sum (\mu_{m_t} - \mu^U_{m_t}(t)) |$ dominates.
This typically leads to $A - \mathcal{O}(\sqrt{{\gamma} M_k A} + {\gamma} M_k) \le \text{ALGO}_k$ (assuming $\mu^U$ are UCBs and $A \approx \sum \mu^U_{m_t}(t)$).
This implies $\sqrt{A} \le \sqrt{\text{ALGO}_k} + \mathcal{O}(\sqrt{{\gamma} M_k})$.
Substituting this back into the bounds for the difference $| \text{ALGO}_k - A |$ yields the claimed result.
\[ \left| \sum_{t=\tau_k}^{\tau_{k+1}-1} r_t - \sum_{t=\tau_k}^{\tau_{k+1}-1} \sum_{m \in \mathcal{D}_k(\mathcal{A})} \mu^U_m(t) p_t^*(m) \right| \le \mathcal{O}(\sqrt{{\gamma} M_k \text{ALGO}_k} + {\gamma} M_k). \]
\end{proof}

\begin{lemma}[Cost Bound]
\label{lem:cost_bound}
Let $\sum_{t=\tau_k}^{\tau_{k+1}-1} c_t$ be the total observed cost in stage $k$. Let $B_k = b \cdot T_k$ be the effective expected cost budget for the stage. With probability at least $1-(M_k T_k)\exp(-\mathcal{O}({\gamma}))$ (i.e., on event $\mathcal{E}$),
\[
\left| \sum_{t=\tau_k}^{\tau_{k+1}-1} \left(c_t - \sum_{m \in \mathcal{D}_k(\mathcal{A})} c^L_m(t) p_t^*(m) \right) \right| \le \mathcal{O}\left(\sqrt{{\gamma} M_k B_k} + {\gamma} M_k \right).
\]
\end{lemma}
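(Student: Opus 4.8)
The plan is to follow the template of Lemma~\ref{lem:perf_bound} almost verbatim, substituting the realized cost $c_t$ for the realized reward $r_t$, the cost lower-confidence bounds $c^L_m(t)$ for the reward upper-confidence bounds $\mu^U_m(t)$, and the per-stage budget $B_k = b\,T_k$ for the realized stage reward $\mathrm{ALGO}_k$. Concretely, I would work on the good event $\mathcal{E}$ and write the per-query discrepancy as a telescoping sum of three pieces,
\[
c_t - \sum_{m \in \mathcal{D}_k(\mathcal{A})} c^L_m(t)\,p_t^*(m)
= \bigl(c_t - \mathbb{E}[c_{m_t}]\bigr)
+ \bigl(\mathbb{E}[c_{m_t}] - c^L_{m_t}(t)\bigr)
+ \Bigl(c^L_{m_t}(t) - \sum_{m \in \mathcal{D}_k(\mathcal{A})} c^L_m(t)\,p_t^*(m)\Bigr),
\]
bound each of the three summed pieces over $t \in [\tau_k,\tau_{k+1})$ separately, and recombine by the triangle inequality.

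For the first piece, $\{c_t - \mathbb{E}[c_{m_t}]\}_t$ is a martingale difference sequence with values in $[c_1,c_2]\subseteq[0,1]$, so Lemma~\ref{lem:concentration_adapted} gives $|\sum_t (c_t - \mathbb{E}[c_{m_t}])| \le \mathcal{O}(\sqrt{\gamma\,C_k}+\gamma)$, where $C_k := \sum_{t=\tau_k}^{\tau_{k+1}-1}\mathbb{E}[c_{m_t}]$ is the realized expected cost of the stage. The third piece is also a martingale difference sequence, since $c^L_m(t)$ is determined before $m_t\sim p_t^*$ is drawn and $\mathbb{E}_{m_t}[c^L_{m_t}(t)] = \sum_m c^L_m(t)p_t^*(m)$; moreover the budget constraint in \texttt{RouteOPT} (Eq.~\eqref{eq:routing_opt_merged}) forces $\sum_m c^L_m(t)p_t^*(m) \le b$ at every $t$, so the parameter $v$ in Lemma~\ref{lem:concentration_adapted} satisfies $v\,T_k \le b\,T_k = B_k$ and the piece is $\mathcal{O}(\sqrt{\gamma B_k}+\gamma)$. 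For the second piece — the purely statistical term — Lemma~\ref{lem:empConc_adapted} gives $0 \le \mathbb{E}[c_m] - c^L_m(t) \le \mathcal{O}(f_{rad}(\mathbb{E}[c_m], N_m(t)+1))$ on $\mathcal{E}$; summing the per-play radii over the $n_m := N_m(\tau_{k+1})-N_m(\tau_k)$ plays of model $m$ inside stage $k$ (using $\sum_{j\le n}j^{-1/2}=\mathcal{O}(\sqrt n)$ and $\sum_{j\le n}j^{-1}=\mathcal{O}(\log n)$) yields $\mathcal{O}(\sqrt{\gamma\,n_m\,\mathbb{E}[c_m]}+\gamma\log n_m)$, and then Cauchy--Schwarz across $m\in\mathcal{D}_k(\mathcal{A})$ (equivalently Lemma~\ref{lem:sumrad_adapted}) together with $\sum_m n_m\mathbb{E}[c_m]=C_k$ bounds the second piece by $\mathcal{O}(\sqrt{\gamma M_k C_k}+\gamma M_k)$, the $\log$ terms being absorbed into $\gamma M_k$ exactly as in Lemma~\ref{lem:perf_bound}.

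It remains to trade $C_k$ for $B_k$. Using $\mathbb{E}[c_{m_t}] \le c^L_{m_t}(t) + \mathcal{O}(f_{rad}(\mathbb{E}[c_{m_t}],N_{m_t}(t)+1))$ on $\mathcal{E}$, summing over $t$, and plugging in both the bound on the third piece and the \texttt{RouteOPT} inequality $\sum_t\sum_m c^L_m(t)p_t^*(m)\le B_k$, I obtain a self-referential estimate $C_k \le B_k + \mathcal{O}(\sqrt{\gamma B_k}) + \mathcal{O}(\sqrt{\gamma M_k C_k}+\gamma M_k)$; reading this as a quadratic inequality in $\sqrt{C_k}$ and solving gives $C_k = \mathcal{O}(B_k + \gamma M_k)$, hence $\sqrt{\gamma M_k C_k} = \mathcal{O}(\sqrt{\gamma M_k B_k}+\gamma M_k)$. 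Substituting back into the three-piece bound and taking a union bound over the $\mathcal{O}(M_k T_k)$ concentration events (each failing with probability $e^{-\Omega(\gamma)}$) produces the claimed inequality. I also note that under Assumption~\ref{ass:feasible} one has $b \ge c_1 > 0$, so $B_k = \Theta(T_k)$ and the crude bound $C_k \le c_2 T_k = \mathcal{O}(B_k)$ already suffices; the self-bounding step is only needed if one wants a bound that stays tight as $b \to 0$.

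The main obstacle I anticipate is precisely this circularity: the quantity $C_k$ that governs the dominant statistical error is the very quantity being bounded, and the loop must be closed through feasibility of the routing LP rather than through any a priori control on realized costs — this is the cost-side analogue of the self-bounding step $\sqrt{A}\le\sqrt{\mathrm{ALGO}_k}+\mathcal{O}(\sqrt{\gamma M_k})$ used for rewards in Lemma~\ref{lem:perf_bound}. A secondary nuisance is the bookkeeping of the $\gamma\log n_m$ terms and of the constants $c_1,c_2$ (so that costs genuinely lie in a sub-interval of $[0,1]$ and Lemma~\ref{lem:concentration_adapted} applies), but these are routine and do not affect the stated order.
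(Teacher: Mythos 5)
Your proposal is correct and follows essentially the same route as the paper's proof: the same three-term decomposition (sampling noise in $c_t$, the statistical gap $\mathbb{E}[c_{m_t}]-c^L_{m_t}(t)$ bounded via Lemmas~\ref{lem:empConc_adapted} and~\ref{lem:sumrad_adapted}, and the martingale term against $\sum_m c^L_m(t)p_t^*(m)$), followed by the identical self-bounding step $A' \le B_k + \mathcal{O}(\sqrt{\gamma M_k A'}+\gamma M_k) \Rightarrow \sqrt{A'} \le \sqrt{B_k}+\mathcal{O}(\sqrt{\gamma M_k})$ closed via feasibility of \texttt{RouteOPT}. Your added observation that $b \ge c_1$ would make the crude bound $A' \le c_2 T_k = \mathcal{O}(B_k)$ suffice is a valid (if inessential) shortcut the paper does not mention.
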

\begin{proof}
The proof mirrors that of Lemma~\ref{lem:perf_bound} (and Lemma 4.5 in \cite{DBLP:journals/teco/BabaioffDKS15}), replacing performance with cost, $\mu_m$ with $\mathbb{E}[c_m]$, $\mu^U_m(t)$ with $c^L_m(t)$, and $r_t$ with $c_t$. The probability distribution $p_t^*$ is from Eq.~\eqref{eq:routing_opt_merged}. Key steps involve bounding $|\sum c_t - \sum \mathbb{E}[c_{m_t}]|$, $|\sum (\sum_m c^L_m(t) p_t^*(m)) - \sum c^L_{m_t}(t)|$, and $|\sum \mathbb{E}[c_{m_t}] - \sum c^L_{m_t}(t)|$. The last term is bounded similarly using Lemma~\ref{lem:empConc_adapted} and Lemma~\ref{lem:sumrad_adapted}:
\[
\left| \sum_{t=\tau_k}^{\tau_{k+1}-1} (\mathbb{E}[c_{m_t}] - c^L_{m_t}(t)) \right| \le \mathcal{O}\left( \sqrt{{\gamma} M_k \left(\sum_{t=\tau_k}^{\tau_{k+1}-1} \mathbb{E}[c_{m_t}] \right)} + {\gamma} M_k \right).
\]
Let $A' = \sum_{t=\tau_k}^{\tau_{k+1}-1} \mathbb{E}[c_{m_t}]$. The algorithm ensures $\sum_{m \in \mathcal{D}_k(\mathcal{A})} c^L_m(t) p_t^*(m) \le b$ at each step $t$. Summing over the stage gives $\sum_{t=\tau_k}^{\tau_{k+1}-1} \sum_{m \in \mathcal{D}_k(\mathcal{A})} c^L_m(t) p_t^*(m) \le b \cdot T_k = B_k$.
Let $X_c = \sum_{t=\tau_k}^{\tau_{k+1}-1} \sum_{m \in \mathcal{D}_k(\mathcal{A})} c^L_m(t) p_t^*(m)$.
Then $\left| \sum_{t=\tau_k}^{\tau_{k+1}-1} c_t - X_c \right| \le \mathcal{O}(\sqrt{{\gamma} M_k A'} + {\gamma} M_k)$ where $A'$ is the sum of true expected costs.
On event $\mathcal{E}$, $c^L_m(t) \le \mathbb{E}[c_m]$, so $X_c \le A'$. Also, $A' \le X_c + \mathcal{O}(\sqrt{{\gamma} M_k A'} + {\gamma} M_k)$.
Since $X_c \le B_k$, it follows that $A' \le B_k + \mathcal{O}(\sqrt{{\gamma} M_k A'} + {\gamma} M_k)$.
This implies $\sqrt{A'} \le \sqrt{B_k} + \mathcal{O}(\sqrt{{\gamma} M_k})$. Substituting this back into the concentration bounds for $| \sum c_t - X_c |$ (and noting $A' \approx B_k$ in the error term's leading order) yields the final result.
\end{proof}

\begin{lemma}[Per-Stage Routing Regret Bound]
\label{lem:stage_regret}
Let $\text{OPT}_k^{val} = \max_{p} \{ \sum_{m \in \mathcal{D}_k(\mathcal{A})} \mu_m p(m) \mid \sum_{m \in \mathcal{D}_k(\mathcal{A})} \mathbb{E}[c_m] p(m) \le b, \sum p(m)=1, 0 \le p(m) \le \alpha_m \}$ be the optimal expected reward rate within stage $k$ using $\mathcal{D}_k(\mathcal{A})$ and true parameters. Let $\text{OPT}_k = T_k \cdot \text{OPT}_k^{val}$ be the total optimal expected performance in stage $k$. Let $\text{ALGO}_k = \sum_{t=\tau_k}^{\tau_{k+1}-1} r_t$. Assume $M_k \gamma \le \mathcal{O}(B_k)$. Then, on the event $\mathcal{E}$ (implying high probability for the bounds within the stage), the routing regret for stage $k$, conditioned on $\mathcal{D}_k$, is bounded by:
\[
\mathbb{E}[\text{OPT}_k - \text{ALGO}_k \mid \mathcal{D}_k] \le \mathcal{O}\left( \sqrt{{\gamma} M_k \text{OPT}_k} + {\gamma} M_k \right)
\]
(The expectation is conditioned on the choice of $\mathcal{D}_k$, which itself depends on information up to $\tau_k$).
\end{lemma}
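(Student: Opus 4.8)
This follows the by-now-standard optimism-based analysis of UCB algorithms for bandits-with-knapsacks, instantiated within a single stage over the fixed arm set $\mathcal{D}_k$; the ingredients are already packaged in Lemma~\ref{lem:perf_bound} and Lemma~\ref{lem:cost_bound}. Throughout I condition on the good event $\mathcal{E}$, under which the conclusions of Lemmas~\ref{lem:empConc_adapted}, \ref{lem:perf_bound} and \ref{lem:cost_bound} hold for stage $k$; its complement has probability at most $(M_k T_k)\exp(-\Omega(\gamma))$, so with $\gamma=\Theta(\log(NT/\delta))$ the contribution of $\mathcal{E}^c$ to $\mathbb{E}[\text{OPT}_k-\text{ALGO}_k\mid\mathcal{D}_k]$ is at most $T_k\cdot (M_k T_k)\exp(-\Omega(\gamma))=\mathcal{O}(1)$, which is absorbed into the stated bound. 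It remains to argue the bound on $\mathcal{E}$.

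\emph{Step 1 (optimism of \texttt{RouteOPT}).} Let $p^{\mathrm{opt}}$ be an optimal distribution over $\mathcal{D}_k$ for $\text{OPT}_k^{val}$, so $\sum_{m\in\mathcal{D}_k}\mathbb{E}[c_m]p^{\mathrm{opt}}(m)\le b$, $\sum_m p^{\mathrm{opt}}(m)=1$, and $0\le p^{\mathrm{opt}}(m)\le\alpha_m$. On $\mathcal{E}$ we have $c^L_m(t)\le\mathbb{E}[c_m]$ for every $m\in\mathcal{D}_k$ and every $t\in[\tau_k,\tau_{k+1})$, hence $\sum_m c^L_m(t)p^{\mathrm{opt}}(m)\le b$, so $p^{\mathrm{opt}}$ is feasible for \texttt{RouteOPT}~\eqref{eq:routing_opt_merged} at every round $t$. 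Using also $\mu^U_m(t)\ge\mu_m$ on $\mathcal{E}$, the optimal value $p_t^*$ of \eqref{eq:routing_opt_merged} satisfies $\sum_{m}\mu^U_m(t)p_t^*(m)\ge\sum_m\mu^U_m(t)p^{\mathrm{opt}}(m)\ge\sum_m\mu_m p^{\mathrm{opt}}(m)=\text{OPT}_k^{val}$. Summing over $t\in[\tau_k,\tau_{k+1})$ gives
\[
\sum_{t=\tau_k}^{\tau_{k+1}-1}\sum_{m\in\mathcal{D}_k}\mu^U_m(t)p_t^*(m)\;\ge\;T_k\cdot\text{OPT}_k^{val}\;=\;\text{OPT}_k .
\]

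\emph{Step 2 (concentration and self-bounding).} By Lemma~\ref{lem:perf_bound}, on $\mathcal{E}$, $\text{ALGO}_k\ge\sum_{t}\sum_{m}\mu^U_m(t)p_t^*(m)-\mathcal{O}(\sqrt{\gamma M_k\text{ALGO}_k}+\gamma M_k)\ge\text{OPT}_k-\mathcal{O}(\sqrt{\gamma M_k\text{ALGO}_k}+\gamma M_k)$. If $\text{ALGO}_k\ge\text{OPT}_k$ the per-stage routing regret is non-positive and we are done; otherwise $\text{ALGO}_k<\text{OPT}_k$, so $\sqrt{\gamma M_k\text{ALGO}_k}\le\sqrt{\gamma M_k\text{OPT}_k}$ and therefore $\text{OPT}_k-\text{ALGO}_k\le\mathcal{O}(\sqrt{\gamma M_k\text{OPT}_k}+\gamma M_k)$ pointwise on $\mathcal{E}$. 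In parallel, Lemma~\ref{lem:cost_bound} shows the realized (hence expected) cost over the stage is at most $B_k+\mathcal{O}(\sqrt{\gamma M_k B_k}+\gamma M_k)$, which under the hypothesis $M_k\gamma\le\mathcal{O}(B_k)$ is $(1+o(1))B_k$; this confirms that the comparison against the budget-$b$ benchmark $\text{OPT}_k$ is legitimate and that the error terms above are genuinely of order $\sqrt{\gamma M_k\text{OPT}_k}+\gamma M_k$. Finally, $\text{OPT}_k=T_k\cdot\text{OPT}_k^{val}$ is measurable with respect to $\mathcal{D}_k$ (it depends only on $\mathcal{D}_k$ and the fixed true parameters), so the right-hand side $\mathcal{O}(\sqrt{\gamma M_k\text{OPT}_k}+\gamma M_k)$ is deterministic once we condition on $\mathcal{D}_k$; taking $\mathbb{E}[\cdot\mid\mathcal{D}_k]$ of the pointwise bound on $\mathcal{E}$ and adding the $\mathcal{O}(1)$ contribution from $\mathcal{E}^c$ yields the claim.

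\emph{Main obstacle.} The delicate part is Step 1: one must verify that routing with lower-confidence costs keeps the \emph{true} optimal distribution feasible for \texttt{RouteOPT} at every round (so that optimism transfers from the inflated objective to the benchmark value $\text{OPT}_k^{val}$), and then close the self-referential inequality $\text{ALGO}_k\gtrsim\text{OPT}_k-\sqrt{\gamma M_k\text{ALGO}_k}$ without losing a constant factor. A secondary subtlety, handled via Lemma~\ref{lem:cost_bound} together with the assumption $M_k\gamma\le\mathcal{O}(B_k)$, is that using $c^L_m(t)\le\mathbb{E}[c_m]$ can cause a mild budget overshoot, which must be shown to be $o(B_k)$ so that it neither invalidates the benchmark comparison nor inflates the regret beyond the stated order. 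The remaining manipulations are the standard bandits-with-knapsacks bookkeeping of \cite{DBLP:journals/teco/BabaioffDKS15, agrawal2014bandits}.
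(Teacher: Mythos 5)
Your proposal follows essentially the same route as the paper's proof: establish optimism of \texttt{RouteOPT} (the true optimal distribution over $\mathcal{D}_k$ remains feasible on $\mathcal{E}$ because $c^L_m(t)\le\mathbb{E}[c_m]$, and $\mu^U_m(t)\ge\mu_m$ lifts the objective above $\text{OPT}_k^{val}$), then invoke Lemma~\ref{lem:perf_bound} and close the self-bounding inequality $\text{ALGO}_k\ge\text{OPT}_k-\mathcal{O}(\sqrt{\gamma M_k\text{ALGO}_k}+\gamma M_k)$ via the case split $\text{ALGO}_k\lessgtr\text{OPT}_k$. Your write-up is in fact slightly more explicit than the paper's (spelling out the feasibility argument and the $\mathcal{E}^c$ contribution), but the substance is identical and correct.
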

\begin{proof}
On event $\mathcal{E}$, the following hold:
\begin{enumerate}
\item $\mu^U_m(t) \ge \mu_m$ and $c^L_m(t) \le \mathbb{E}[c_m]$ for all relevant $m, t$.
\item Lemma~\ref{lem:perf_bound}: $| \sum_{t} (r_t - \sum_m \mu^U_m(t) p_t^*(m)) | \le \mathcal{O}(\sqrt{{\gamma} M_k \text{ALGO}_k} + {\gamma} M_k)$.
\item Lemma~\ref{lem:cost_bound}: $| \sum_{t} (c_t - \sum_m c^L_m(t) p_t^*(m)) | \le \mathcal{O}(\sqrt{{\gamma} M_k B_k} + {\gamma} M_k)$. Also, $\sum_t c_t \le B_k + \mathcal{O}(\sqrt{{\gamma} M_k B_k} + {\gamma} M_k)$.
\end{enumerate}
From the algorithm's choice of $p_t^*$ (solving Eq.~\eqref{eq:routing_opt_merged}) and property (1):
\begin{align*}
\sum_{t=\tau_k}^{\tau_{k+1}-1} \sum_{m \in \mathcal{D}_k(\mathcal{A})} \mu^U_m(t) p_t^*(m)
\ge \text{OPT}_k
\end{align*}
Combining this with property (2):
\[
\text{ALGO}_k = \sum_t r_t \ge \sum_t \sum_m \mu^U_m(t) p_t^*(m) - \mathcal{O}(\sqrt{{\gamma} M_k \text{ALGO}_k} + {\gamma} M_k)
\]
\[
\text{ALGO}_k \ge \text{OPT}_k - \mathcal{O}(\sqrt{{\gamma} M_k \text{ALGO}_k} + {\gamma} M_k)
\]
Rearranging and assuming $\text{ALGO}_k \le \text{OPT}_k$ (regret is non-negative):
\[ \text{OPT}_k - \text{ALGO}_k \le \mathcal{O}(\sqrt{{\gamma} M_k \text{ALGO}_k} + {\gamma} M_k) \]
If $\text{ALGO}_k \le \text{OPT}_k$, then $\sqrt{\text{ALGO}_k} \le \sqrt{\text{OPT}_k}$.
\[ \text{OPT}_k - \text{ALGO}_k \le \mathcal{O}(\sqrt{{\gamma} M_k \text{OPT}_k} + {\gamma} M_k) \]
Taking expectation conditioned on $\mathcal{D}_k$ (and implicitly on $\mathcal{E}$ for the bounds to hold), the result follows. The assumption $M_k \gamma \le \mathcal{O}(B_k)$ is used in concentration bounds for costs. Moreover, since the estimation errors for both performance and cost are governed by the same concentration inequalities, the expected budget violation is on the same order as the regret.
\end{proof}

\begin{lemma}[Total Routing Regret Bound]
\label{lem:total_routing_regret}
Let $K$ be the total number of stages. Let $M_k = |\mathcal{D}_k| \le M_{\max}$. Let $\delta \in (0, 1)$ be the desired overall confidence. Set $\gamma = \Theta(\log(N T / \delta))$. Then the total expected routing regret is bounded by:
\[
\mathcal{R}_{\textnormal{routing}}(T) = \mathbb{E} \left[ \sum_{k=1}^K (\text{OPT}_k - \text{ALGO}_k) \right] \le \mathcal{O}\left( \sqrt{\gamma M_{\max} K T} + K \gamma M_{\max} \right)
\]
Substituting $\gamma = \Theta(\log(NT / \delta))$, this becomes:
\[
\mathcal{R}_{\textnormal{routing}}(T) \le \mathcal{O}\left( \sqrt{M_{\max} K T \log(NT / \delta)} + K M_{\max} \log(NT / \delta) \right)
\]
\end{lemma}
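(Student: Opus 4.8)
The plan is to sum the per-stage routing regret of Lemma~\ref{lem:stage_regret} over the $K$ stages, control the resulting sum by Cauchy--Schwarz, and then fold in the low-probability failure of the good event $\mathcal{E}$.

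First, work on the good event $\mathcal{E} = \cap_{k=1}^K \mathcal{E}_k$. By the union bound over the $N=|\mathcal{M}_T|$ models and the $K$ stages, $\mathbb{P}(\mathcal{E}) \ge 1-\delta$ for $\gamma = \Theta(\log(NT/\delta))$. Recalling the decomposition in Lemma~\ref{lem:regret_decomposition_time_varying}, $\mathcal{R}_{\textnormal{routing}}(T) = \mathbb{E}\big[\sum_{k=1}^K (\text{OPT}_k - \text{ALGO}_k)\big]$ with $\text{OPT}_k = T_k \cdot \text{OPT}_k^{val}$ and $\text{OPT}_k^{val} = V(b,\mathcal{D}_k)$ (the support cap is vacuous since $|\mathcal{D}_k|\le M_{\max}$). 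Lemma~\ref{lem:stage_regret} gives, conditioned on the realized deployed set $\mathcal{D}_k$ and on $\mathcal{E}$, that $\mathbb{E}[\text{OPT}_k - \text{ALGO}_k \mid \mathcal{D}_k] \le \mathcal{O}(\sqrt{\gamma M_k \text{OPT}_k} + \gamma M_k)$, where $M_k = |\mathcal{D}_k| \le M_{\max}$. Since performance lies in $[0,1]$, $\text{OPT}_k^{val}\le 1$, hence $\text{OPT}_k \le T_k$ and $\sum_{k=1}^K \text{OPT}_k \le \sum_{k=1}^K T_k = T$ pathwise. Summing the per-stage bound over $k$, replacing $M_k$ by $M_{\max}$, and taking the tower expectation over the random sequence $\{\mathcal{D}_k\}$:
\[
\mathcal{R}_{\textnormal{routing}}(T) \;\le\; \mathcal{O}\!\left( \mathbb{E}\Big[ \textstyle\sum_{k=1}^K \sqrt{\gamma M_{\max}\, \text{OPT}_k}\, \Big] + \gamma M_{\max} K \right).
\]
For the first term, Cauchy--Schwarz gives $\sum_{k=1}^K \sqrt{\gamma M_{\max}\, \text{OPT}_k} \le \sqrt{\gamma M_{\max} K \cdot \sum_{k=1}^K \text{OPT}_k} \le \sqrt{\gamma M_{\max} K T}$ pathwise, so the same bound holds in expectation, yielding $\mathcal{R}_{\textnormal{routing}}(T) \le \mathcal{O}(\sqrt{\gamma M_{\max} K T} + \gamma M_{\max} K)$ on $\mathcal{E}$.

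It remains to account for $\mathcal{E}^c$, which occurs with probability at most $\delta$. On $\mathcal{E}^c$ each per-query gap $V(b,\mathcal{D}_k) - r_t$ is at most $V(b,\mathcal{D}_k) \le 1$, so the routing regret there is at most $T$, contributing at most $\delta T$ to the expectation. Taking $\delta$ polynomially small in $T$ (e.g.\ $\delta = 1/T$, the regime in which $\gamma = \Theta(\log(NT/\delta)) = \Theta(\log(NT))$) makes this term $\mathcal{O}(1)$, absorbed into the bound. Substituting $\gamma = \Theta(\log(NT/\delta))$ then gives the second displayed inequality in the lemma.

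The main subtlety is the interplay between conditioning and the randomness of $\mathcal{D}_k$: Lemma~\ref{lem:stage_regret} is a conditional statement valid given $\mathcal{D}_k$ on $\mathcal{E}$, so one must (i) invoke it inside the tower expectation, and (ii) check that its hypothesis $M_k \gamma \le \mathcal{O}(B_k)$ holds uniformly. Here $B_k = b T_k = bT/K$, so the hypothesis reduces to $M_{\max}\gamma = \mathcal{O}(bT/K)$, which is satisfied in the parameter regime where the bound is non-trivial (in particular the near-optimal choices $K = \Theta(T^{1/3})$, $M_{\max}=\Omega(N^{2/3})$). Everything else is routine Cauchy--Schwarz and union-bound bookkeeping.
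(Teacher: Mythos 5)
Your proposal is correct and follows essentially the same route as the paper's proof: apply the per-stage bound of Lemma~\ref{lem:stage_regret} conditionally on $\mathcal{D}_k$ under the good event, sum over stages with $M_k \le M_{\max}$, control $\sum_k \sqrt{\text{OPT}_k}$ via Cauchy--Schwarz together with $\sum_k \text{OPT}_k \le T$, and absorb the $\delta T$ contribution from $\mathcal{E}^c$. The only cosmetic difference is that you apply Cauchy--Schwarz pathwise before taking expectations while the paper uses Jensen's inequality followed by Cauchy--Schwarz on $\sum_k \sqrt{\mathbb{E}[\text{OPT}_k]}$; your explicit check of the hypothesis $M_k\gamma \le \mathcal{O}(B_k)$ is a welcome addition but does not change the argument.
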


\begin{proof}
From Lemma~\ref{lem:regret_decomposition_time_varying}, $\mathcal{R}_{\textnormal{routing}}(T) = \sum_{k=1}^K \mathbb{E} [\text{OPT}_k - \text{ALGO}_k]$.
Here $\text{OPT}_k = T_k V(b, \mathcal{D}_k)$ and $\text{ALGO}_k$ is the algorithm's reward in stage $k$. The outer expectation $\mathbb{E}[\cdot]$ averages over all randomness, including $\mathcal{D}_k$ and the failure of event $\mathcal{E}$ (which occurs with probability $\le \delta$).
Using law of total expectation: $\mathbb{E}[\text{OPT}_k - \text{ALGO}_k] = \mathbb{E}[\mathbb{E}[\text{OPT}_k - \text{ALGO}_k \mid \mathcal{D}_k]]$.

We apply Lemma~\ref{lem:stage_regret}. Choosing $\gamma = \Theta(\log(N T / \delta))$ ensures that event $\mathcal{E}$ holds with probability at least $1-\delta$. On $\mathcal{E}$:
\[
\mathbb{E}[\text{OPT}_k - \text{ALGO}_k \mid \mathcal{D}_k] \le \mathcal{O}\left( \sqrt{{\gamma} M_k \text{OPT}_k} + {\gamma} M_k \right)
\]
Taking expectation over $\mathcal{D}_k$:
\[
\mathbb{E}[\text{OPT}_k - \text{ALGO}_k] \le \mathcal{O}\left( \mathbb{E}\left[\sqrt{{\gamma} M_k \text{OPT}_k}\right] + {\gamma} \mathbb{E}[M_k] \right) + \delta \cdot T_k 
\]
Summing this bound over all $K$ stages:
\[
\mathcal{R}_{\textnormal{routing}}(T) \le \sum_{k=1}^K \mathcal{O}\left( \mathbb{E}\left[\sqrt{{\gamma} M_k \text{OPT}_k}\right] + {\gamma} \mathbb{E}[M_k] \right) + \delta T
\]
Using linearity of expectation, $M_k \le M_{\max}$, Jensen's inequality ($\mathbb{E}[\sqrt{X}] \le \sqrt{\mathbb{E}[X]}$), and $\mathbb{E}[M_k] \le M_{\max}$:
\begin{align*}
\mathcal{R}_{\textnormal{routing}}(T) &\le \mathcal{O}\left( \sum_{k=1}^K \mathbb{E}\left[\sqrt{{\gamma} M_{\max} \text{OPT}_k}\right] + \sum_{k=1}^K {\gamma} M_{\max} \right) + \delta T \\
&\le \mathcal{O}\left( \sqrt{\gamma M_{\max}} \sum_{k=1}^K \sqrt{\mathbb{E}[\text{OPT}_k]} + K \gamma M_{\max} \right) + \delta T
\end{align*}
Applying Cauchy-Schwarz: $(\sum_{k=1}^K \sqrt{\mathbb{E}[\text{OPT}_k]} )^2 \le K \sum_{k=1}^K \mathbb{E}[\text{OPT}_k]$.
Thus, $\sum_{k=1}^K \sqrt{\mathbb{E}[\text{OPT}_k]} \le \sqrt{K \sum_{k=1}^K \mathbb{E}[\text{OPT}_k]}$.
Since rewards are in $[0,1]$, $V(b, \mathcal{D}_k) \le 1$, so $\text{OPT}_k = T_k V(b, \mathcal{D}_k) \le T_k$.
Summing over $k$: $\sum_{k=1}^K \text{OPT}_k \le \sum_{k=1}^K T_k = T$. So, $\sum_{k=1}^K \mathbb{E}[\text{OPT}_k] \le T$.
Substituting this upper bound:
\[
\mathcal{R}_{\textnormal{routing}}(T) \le \mathcal{O}\left( \sqrt{\gamma M_{\max} K T} + K \gamma M_{\max} \right) + \delta T
\]
If $\delta$ is chosen small enough, the $\delta T$ term is absorbed.
This establishes the first form of the bound. Substituting $\gamma = \Theta(\log(NT / \delta))$ yields the second form.
\end{proof}

\subsection{Total Regret Bound}

The overall regret of \texttt{StageRoute} accounts for several sources of suboptimality. The total expected regret $\mathcal{R}(T)$ can now be understood as the sum of two main components:
\begin{enumerate}
    \item $\mathcal{R}_{\textnormal{routing}}(T)$: The routing regret within deployed sets (Lemma~\ref{lem:total_routing_regret}).
    \item $\mathcal{R}_{\textnormal{deploy}}(T)$: The total deployment regret, encompassing both learning uncertainty and model discovery bottleneck (Lemma~\ref{lem:total_deployment_regret_combined}).
\end{enumerate}

Summing these bounds:
\begin{align*}
\mathcal{R}(T) &= \mathcal{R}_{\textnormal{routing}}(T) + \mathcal{R}_{\textnormal{deploy}}(T) \\
&\le \mathcal{O}\left( \sqrt{M_{\max} K T \log(NT / \delta)} + K M_{\max} \log(NT / \delta) \right) \\
&\quad + \mathcal{O}\left( \sqrt{T \log(NT / \delta) \cdot \min(N, K M_{\max}) } + M_{\max} K \log(NT / \delta) + \frac{N T}{M_{\max} K} \right)
\end{align*}
Combining terms, and noting that for $K \ge 1$, $\sqrt{M_{\max} K T \log(NT / \delta)}$ dominates or is equivalent to $\sqrt{T \log(NT / \delta) \cdot \min(N, K M_{\max}) }$ and $K M_{\max} \log(NT / \delta)$:
\[
\mathcal{R}(T) \le \mathcal{O}\left( \sqrt{M_{\max} K T \log(NT / \delta)} + \frac{N T}{M_{\max} K} \right).
\]

\section{Lower Bound for Online LLM Routing}
\label{app:lowerbound}

We establish a lower bound on the regret for any online LLM routing policy. The construction considers a scenario where the set of competitive models and their performances can evolve over time, divided into batches. Within each batch, the algorithm faces a sequential decision problem of selecting the best among $M$ available models, where the identity of the best model is unknown and can change from batch to batch.
To isolate the learning challenge, we make several simplifications. We assume each model invocation incurs a unit cost ($c_{m_t}=1$), rendering the budget constraint trivial if the per-query budget $b \ge 1$. We also assume that the system can always deploy any of the $M$ models under consideration in a given batch ($M_{\max} \ge M$) and that there are no per-model capacity limits ($\alpha_m=1$ for all $m$). The core difficulty then lies in continuously learning and adapting to the best-performing model(s) in each batch.

The proof strategy is to construct a class of adversarial problem instances. We will demonstrate that any online algorithm $\mathcal{A}$ must incur significant regret on at least one instance within this class. This argument adapts a batch-based structure common in analyses of learning problems with non-stationary environments.

\paragraph{Step 1: Construction of Hard Problem Instances.}
Let $\epsilon > 0$ be a small parameter, which will be determined later. The total time horizon of $T$ queries is partitioned into $N_B$ contiguous batches, denoted $\mathcal{B}_1, \dots, \mathcal{B}_{N_B}$. Each batch $j$ consists of $\Delta = T/N_B$ queries. For simplicity, we assume $T$ is an integer multiple of $N_B$.

For each batch $j \in \{1, \dots, N_B\}$, we consider a set of $M$ models available to the algorithm $\mathcal{A}$. These models are indexed $i \in \{1, \dots, M\}$ specifically for the current batch $j$. The performance characteristics of these $M$ models are defined as follows: one model is designated as the ``strong'' model, and the remaining $M-1$ models are ``weak''. Let $s_j \in \{1, \dots, M\}$ be the index of this strong model in batch $j$, chosen uniformly at random by the adversary and unknown to the algorithm.
The reward $r_t$ obtained from selecting a model at query $t \in \mathcal{B}_j$ is drawn from a Bernoulli distribution:
\begin{itemize}
    \item If model $m_{s_j}$ (the strong model for batch $j$, with index $s_j$) is chosen, $r_t \sim \text{Bernoulli}(\mu_H^{(j)})$, where the mean reward is $\mu_H^{(j)} = \frac{1}{2} + j\epsilon$.
    \item If any other model $m_i$ (where $i \in \{1, \dots, M\}$ and $i \neq s_j$) from the set of $M$ models for batch $j$ is chosen, $r_t \sim \text{Bernoulli}(\mu_L^{(j)})$, where the mean reward is $\mu_L^{(j)} = \frac{1}{2} + (j-1)\epsilon$.
\end{itemize}
Rewards from different queries are assumed to be independent. The crucial gap in expected reward between the strong model and any weak model in batch $j$ is $\mu_H^{(j)} - \mu_L^{(j)} = \epsilon$.
To ensure that all mean rewards $\mu_H^{(j)}$ and $\mu_L^{(j)}$ lie comfortably within the interval $[0,1]$, we impose the condition $N_B\epsilon \le \frac{1}{4}$. This ensures $\mu_H^{(N_B)} = \frac{1}{2} + N_B\epsilon \le \frac{1}{2} + \frac{1}{4} = \frac{3}{4} < 1$, and the smallest mean, $\mu_L^{(1)} = \frac{1}{2} + (1-1)\epsilon = \frac{1}{2}$, is also valid.

A complete problem instance is characterized by a sequence of strong model indices $(s_1, s_2, \dots, s_{N_B})$. The actual underlying LLMs corresponding to these indices could differ from batch to batch, but in each batch $j$, the algorithm $\mathcal{A}$ faces a choice among $M$ options with the specified reward structure, and $s_j$ is unknown.

\paragraph{Step 2: Per-Batch Regret from Identification Difficulty.}
Fix a batch $j \in \{1, \dots, N_B\}$, and let $s_j^*$ denote the true index of the strong model, unknown to algorithm $\mathcal{A}$. To identify $m_{s_j^*}$ (mean $\mu_H^{(j)}$) from the $M-1$ weak models (mean $\mu_L^{(j)}$, gap $\epsilon$) with a constant probability $p_{succ} < 1$ (e.g., $p_{succ}=3/4$), any algorithm requires a query complexity of $\Omega(M\epsilon^{-2})$ \cite{DBLP:conf/colt/Agarwal0AK17,li2023tight}. This implies there is a universal constant $c_S > 0$ such that at least $c_S M \epsilon^{-2}$ queries are necessary to achieve success probability $p_{succ}$.

We set the batch length $\Delta = (c_S/2) M \epsilon^{-2}$. Since $c_S/2 < c_S$, this choice of $\Delta$ is insufficient for reliable identification with probability $p_{succ}$. Consequently, algorithm $\mathcal{A}$ fails to identify $m_{s_j^*}$ within $\Delta$ queries with at least a constant probability $p_{fail} \ge 1-p_{succ}$ (e.g., setting $p_{succ}=3/4$ gives $p_{fail} \ge 1/4$). Let $\mathcal{E}_{fail}$ denote this event of identification failure.

Let $N_{j,weak}$ be the number of queries to weak models in batch $j$. Conditional on $\mathcal{E}_{fail}$, the algorithm lacks knowledge of $s_j^*$. In such a state of confusion (especially when $M \ge 2$), it is expected to select weak models a significant fraction of the time. For instance, if choices upon failure are made nearly uniformly among the $M$ models, weak models are selected $\frac{M-1}{M}\Delta$ times in expectation. Since $M \ge 2$, $(M-1)/M \ge 1/2$. Thus, we can state that $\mathbb{E}[N_{j,weak} | \mathcal{E}_{fail}] \ge c_F \Delta$ for a chosen constant $c_F \ge 1/2$.
By the law of total expectation, and since $\mathbb{E}[N_{j,weak} | \mathcal{E}_{fail}^c] \ge 0$:
\[
\mathbb{E}[N_{j,weak}] = \mathbb{P}(\mathcal{E}_{fail}) \mathbb{E}[N_{j,weak} | \mathcal{E}_{fail}] + \mathbb{P}(\mathcal{E}_{fail}^c) \mathbb{E}[N_{j,weak} | \mathcal{E}_{fail}^c] \ge p_{fail} \cdot c_F \Delta.
\]
Using $p_{fail} \ge 1/4$ and $c_F \ge 1/2$, this gives $\mathbb{E}[N_{j,weak}] \ge \frac{1}{4} \cdot \frac{1}{2} \Delta = \frac{\Delta}{8}$.
The expected regret in batch $j$ (for a fixed $s_j^*$, averaged over $\mathcal{A}$'s randomness) is $R_j(s_j^*) = \epsilon \mathbb{E}[N_{j,weak}] \ge \epsilon (\Delta/8)$.
Since this lower bound does not depend on the specific index $s_j^*$, and the adversary chooses $s_j^*$ uniformly, the expected regret in batch $j$ (averaged over $s_j^*$ and $\mathcal{A}$) is $\mathbb{E}_{\tilde{s}_j}[R_j] \ge \frac{1}{8} \epsilon \Delta$.

\paragraph{Step 3: Regret Along the Horizon and Parameter Choice.}
The expected regret in each batch $j$ is $\mathbb{E}_{\tilde{s}_j}[R_j] \ge \frac{1}{8} \epsilon \Delta$.
The total expected regret over $N_B$ batches for algorithm $\mathcal{A}$ is $R_T(\mathcal{A}) = \sum_{j=1}^{N_B} \mathbb{E}_{\tilde{s}_j}[R_j] \ge N_B \frac{1}{8} \epsilon \Delta$.
Since $N_B = T/\Delta$, we have $R_T(\mathcal{A}) \ge (T/\Delta) \frac{1}{8} \epsilon \Delta = \frac{1}{8} T \epsilon$.

We have set the batch length $\Delta = (c_S/2) M \epsilon^{-2}$ in Step 2.
The other primary constraint, from Step 1, is $N_B\epsilon \le 1/4$, which implies $(T/\Delta)\epsilon \le 1/4$.
Substituting $\Delta = (c_S/2) M \epsilon^{-2}$ into this constraint:
\[
\frac{T}{(c_S/2) M \epsilon^{-2}}\epsilon \le \frac{1}{4} \implies \frac{2T\epsilon^3}{c_S M} \le \frac{1}{4} \implies \epsilon^3 \le \frac{c_S M}{8T}.
\]
To maximize the lower bound on regret $\frac{1}{8} T \epsilon$, we choose $\epsilon$ to be as large as possible, subject to this constraint. We set $\epsilon = \mleft(\frac{c_S M}{8T}\mright)^{1/3}$. This choice ensures $\epsilon^3 = \frac{c_S M}{8T}$.

Substituting this $\epsilon$ into the total regret expression:
\[
R_T(\mathcal{A}) \ge \frac{1}{8} T \mleft(\frac{c_S M}{8T}\mright)^{1/3} = \frac{1}{8} T \frac{c_S^{1/3} M^{1/3}}{8^{1/3} T^{1/3}} = \frac{1}{8} \frac{c_S^{1/3}}{2} M^{1/3} T^{2/3} = \frac{c_S^{1/3}}{16} M^{1/3} T^{2/3}.
\]
Since $c_S > 0$ is a universal constant from the identification complexity, let $C = c_S^{1/3}/16$. Then $C > 0$.
Thus, $R_T(\mathcal{A}) \ge C M^{1/3} T^{2/3}$.

Finally, we verify the conditions on our parameter choices:
\begin{enumerate}
    \item Value of $\Delta$:
    $\epsilon^2 = \mleft(\frac{c_S M}{8T}\mright)^{2/3}$.
    $\Delta = (c_S/2) M \epsilon^{-2} = (c_S/2) M \mleft(\frac{c_S M}{8T}\mright)^{-2/3} = (c_S/2) M \frac{(8T)^{2/3}}{(c_S M)^{2/3}}$
    $= (c_S/2) M \frac{8^{2/3} T^{2/3}}{c_S^{2/3} M^{2/3}} = (c_S/2) M \frac{4 T^{2/3}}{c_S^{2/3} M^{2/3}} = \frac{c_S}{2} \frac{4 M^{1/3} T^{2/3}}{c_S^{2/3}} = 2 c_S^{1/3} M^{1/3}T^{2/3}$.
    Let $c_\Delta = 2 c_S^{1/3}$. So $\Delta = c_\Delta M^{1/3}T^{2/3}$.
    We need $1 \le \Delta \le T$ for $\Delta$ to be a valid batch length.
    $\Delta \le T \implies c_\Delta M^{1/3}T^{2/3} \le T \implies M^{1/3} \le T^{1/3}/c_\Delta \implies T \ge (c_\Delta M^{1/3})^3 = c_\Delta^3 M$.
    This implies $T$ must be sufficiently large relative to $M$.
    $\Delta \ge 1$ generally holds for large $T$ if $M \ge 1$.

    \item Constraint $N_B\epsilon \le 1/4$:
    This constraint was used to determine the choice of $\epsilon$. With $\epsilon^3 = \frac{c_S M}{8T}$:
    \[
    \frac{T\epsilon^3}{(c_S/2) M} = \frac{T \mleft(\frac{c_S M}{8T}\mright)}{(c_S/2) M} = \frac{c_S M/8}{c_S M/2} = \frac{1/8}{1/2} = \frac{1}{4}.
    \]
    Thus, $(T/\Delta)\epsilon = N_B\epsilon = 1/4$ is satisfied by this construction.
\end{enumerate}
All conditions are met for suitable choices of $T$ relative to $M$, given the universal constant $c_S > 0$ and our choices for $p_{fail}$ (e.g., $1/4$) and $c_F$ (e.g., $1/2$) which determine the factor $1/8$ in the per-batch regret.
Thus, for any online routing algorithm $\mathcal{A}$, there exists a problem instance in our constructed class for which its expected regret is bounded below by $R_T(\mathcal{A}) \ge C M^{1/3}T^{2/3}$ for some constant $C > 0$ (specifically $C = c_S^{1/3}/16$).
For a fixed number of models $M \ge 2$. In this case, the lower bound becomes $\Omega(T^{2/3})$, matching the statement in Theorem~\ref{thm:lower_bound_llm}.

This completes the proof.
\newpage
\section{Experimental Details}
\label{app:exp_details}

This appendix provides additional details on the experimental setup, benchmarks, and results presented in Section~\ref{sec:experiments}.

\subsection{Real-World Benchmark: RouterBench}

\textbf{Dataset Details.} Our primary evaluation is based on the RouterBench dataset~\citep{hu2024routerbench}, a comprehensive benchmark with 36,497 queries sampled from eight diverse NLP datasets. These datasets cover both Chinese and English and span a broad spectrum of tasks: commonsense reasoning (HellaSwag~\citep{hellaswag}, WinoGrande~\citep{winogrande}, ARC Challenge~\citep{arcc}), knowledge-based understanding (MMLU~\citep{mmlu}), open-domain dialogue (MT-Bench~\citep{mtbench}), mathematical reasoning (GSM8K~\citep{gsm8k}), code generation (MBPP~\citep{mbpp}), and retrieval-augmented generation (RAG).

\textbf{Candidate LLMs and Metrics.} For each query, RouterBench provides pre-computed responses from the 11 LLMs listed in Table~\ref{tab:llms_reordered}. The table shows the models ordered by their release date, along with their average performance score and average cost per query across the entire dataset.

\begin{table}[h]
\renewcommand{\arraystretch}{1.35}
    \centering
    \caption{List of LLMs sorted by release date, along with their per-query average performance score and average cost.}
    \vspace{0.1in}
    \label{tab:llms_reordered}
    \begin{tabular}{ccc}
    \hline
    \textbf{Model} & \textbf{Avg Performance} & \textbf{Avg Cost} \\
    \hline
    claude-instant-v1 & 0.5900 & \$0.001236 \\
    \hline
    claude-v1 & 0.6480 & \$0.005870 \\
    \hline
    claude-v2 & 0.5116 & \$0.006153 \\
    \hline
    meta/llama-2-70b-chat & 0.6059 & \$0.001337 \\
    \hline
    WizardLM/WizardLM-13B-V1.2 & 0.5392 & \$0.000142 \\
    \hline
    meta/code-llama-instruct-34b-chat & 0.5040 & \$0.000550 \\
    \hline
    mistralai/mistral-7b-chat & 0.4999 & \$0.000139 \\
    \hline
    gpt-3.5-turbo-1106 & 0.6867 & \$0.000709 \\
    \hline
    gpt-4-1106-preview & 0.8048 & \$0.007943 \\
    \hline
    zero-one-ai/Yi-34B-Chat & 0.7153 & \$0.000558 \\
    \hline
    mistralai/mixtral-8x7b-chat & 0.6504 & \$0.000414 \\
    \hline
    \vspace{-0.25in}
    \end{tabular}
\end{table}

\subsection{Additional Simulation Results}

To validate our framework's applicability to the rapidly evolving frontier of SOTA models, we constructed a synthetic benchmark using 15 recent, high-performance LLMs.

\textbf{Candidate LLMs and Metrics.}
We consider a set of 15 LLMs, summarized in Table~\ref{tab:llms}, ordered by their official release dates. For each model, we report both performance metrics ~\cite{chiang2024chatbot} and associated costs. No single metric can capture the full complexity of LLM performance. We chose Elo / area scores as they are community standards (e.g., Chatbot Arena), ensuring transparency and reproducibility. Importantly, our framework is metric-agnostic. The performance signal can be replaced by any other quantifiable performance measure, such as task-specific accuracy, user satisfaction scores, or a composite utility function combining multiple objectives.

Note that the naming of AI models can be highly nuanced, and multiple versions may exist under a similar label. For example, for Gemini 2.5 Pro, we used data corresponding to the \texttt{Gemini-2.5-Pro-Preview-05-06} version. Additionally, performance scores may fluctuate over time due to model updates, shifts in the user voting population, or changes in evaluation benchmarks. Similarly, costs may vary across time or across versions. The data reported here corresponds to the specific versions and conditions used in our experiments.

\textbf{Simulation Setup.}
These 15 models are introduced sequentially, with the first 5 available at time $t = 1$ and one additional model becoming available every 1,000 rounds until all are accessible. Each model's performance is normalized to lie within the $[0, 1]$ interval. For each model $m$, we set its budget weight parameter $\alpha_m = 0.4$, except for Yi-Lightning, for which we set $\alpha_m = 1$. This is because Yi-Lightning is the cheapest model initially available to the algorithm, ensuring the feasibility of the mixed-integer optimization problem in every round.
At each round, if the algorithm selects a model, it receives a noisy performance and cost observation, where the noise is sampled from a Gaussian distribution centered around the true value. In any real-world application, LLM performance metrics are inherently bounded. Our simulation implicitly respects the $[0,1]$ range by clipping noisy rewards to this interval, with the noise variance chosen such that the probability of generating values outside the range is negligible.

\begin{figure}[t]
    \centering %

    \begin{subfigure}{0.32\textwidth}
        \includegraphics[width=\linewidth]{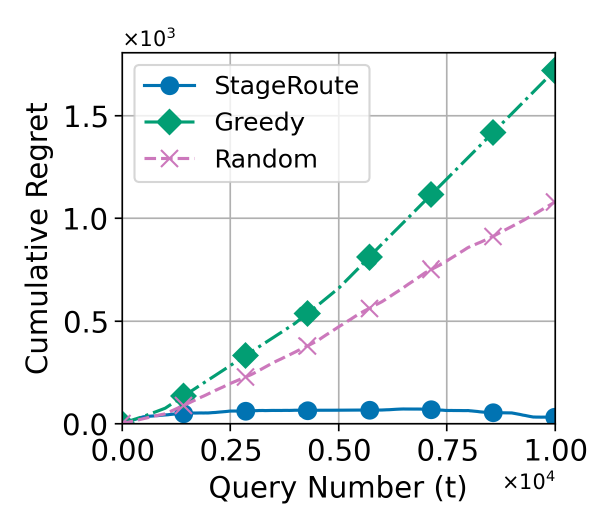}
        \caption{$M_{\max}$ = 3.}
        \label{fig:mmax3_syn}
    \end{subfigure}
    \hfill %
    \begin{subfigure}{0.32\textwidth}
        \includegraphics[width=\linewidth]{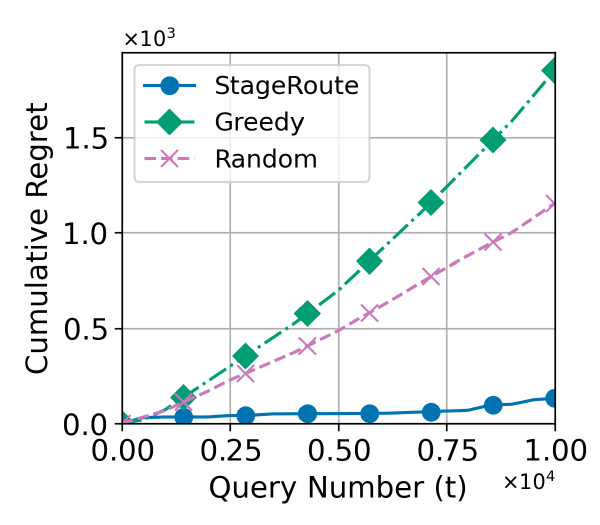}
        \caption{$M_{\max}$ = 5.}
        \label{fig:mmax5_syn}
    \end{subfigure}
    \hfill %
    \begin{subfigure}{0.32\textwidth}
        \includegraphics[width=\linewidth]{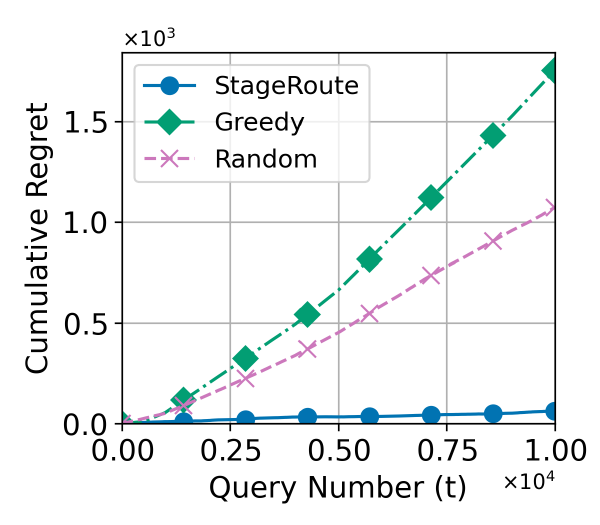}
        \caption{$M_{\max}$ = 10.}
        \label{fig:mmax10_syn}
    \end{subfigure}

    \begin{subfigure}{0.32\textwidth}
        \includegraphics[width=\linewidth]{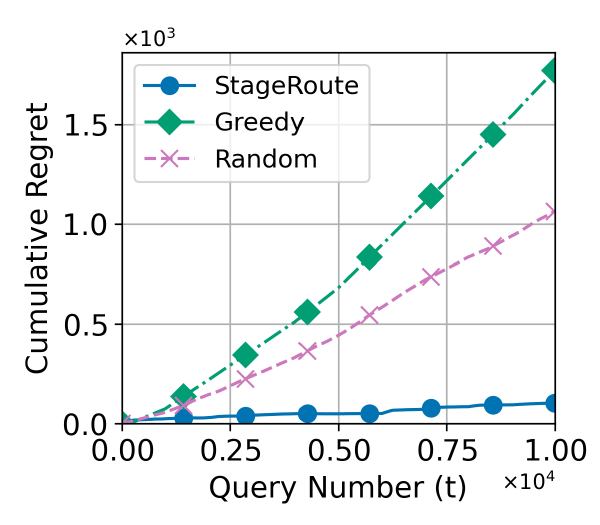} %
        \caption{Update interval = 250 rounds.}
        \label{fig:i250_syn}
    \end{subfigure}
    \hfill %
    \begin{subfigure}{0.32\textwidth}
        \includegraphics[width=\linewidth]{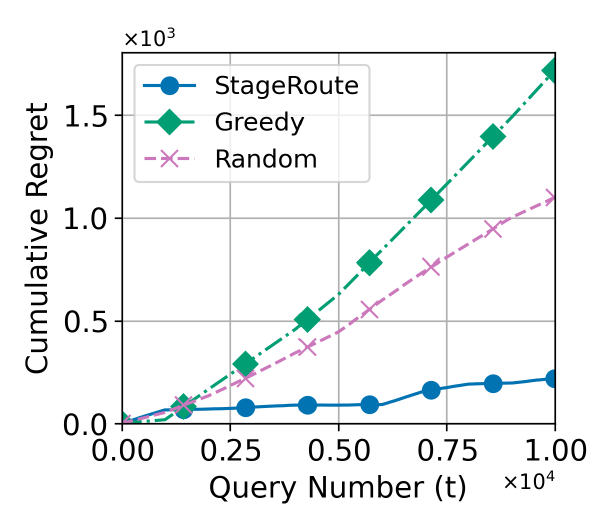} %
        \caption{Update interval = 1000 rounds.}
        \label{fig:i1000_syn}
    \end{subfigure}
    \hfill %
    \begin{subfigure}{0.32\textwidth}
        \includegraphics[width=\linewidth]{Fig_synthetic/CUM_REGRET_syn_scan_mmax5_b5_ui500.pdf} %
        \caption{$b$ = 5.}
        \label{fig:b5_syn}
    \end{subfigure}

    \caption{Regret with varying parameters.}
    \label{fig:varying_parameters_syn}
\end{figure}

\textbf{Sensitivity Analysis.}
Figure~\ref{fig:varying_parameters_syn} presents the sensitivity analysis for \texttt{StageRoute} on the synthetic benchmark. The results are consistent with our findings on RouterBench, demonstrating the robustness of our algorithm across different model suites and settings. \texttt{StageRoute} consistently achieves low regret, adapting effectively to the concurrency cap, update interval, and budget.

\begin{table}[t!]
\renewcommand{\arraystretch}{1.35}
    \centering
    \caption{LLM comparison by release date: performance (Arena Score/Elo~\cite{chiang2024chatbot}) and output cost (per 1M tokens).}
    \label{tab:llms}
    \begin{tabular}{ccc}
    \hline
    \textbf{Model} & \textbf{Performance} & \textbf{Cost} \\
    \hline
    GPT-4o & 1336 & \$10.00 \\
    \hline
    Gemini-1.5-pro-002 & 1302 & \$5.00 \\
    \hline
    Yi-Lightning & 1287 & \$0.14 \\
    \hline
    o1-mini & 1304 & \$0.60 \\
    \hline
    Llama 3.3 70B Instruct & 1257 & \$0.40 \\
    \hline
    DeepSeek-R1 & 1358 & \$0.55 \\
    \hline
    Gemini 2.0 Flash & 1380 & \$1.50 \\
    \hline
    Claude 3.7 Sonnet & 1300 & \$15.00 \\
    \hline
    Hunyuan-turbos & 1296 & \$0.28 \\
    \hline
    Deepseek-v3 & 1373 & \$0.28 \\
    \hline
    Llama-4-Maveric & 1417 & \$0.82 \\
    \hline
    GPT-4.1 & 1366 & \$8.00 \\
    \hline
    Grok-3-preview & 1403 & \$15.00 \\
    \hline
    o3 by OpenAI & 1413 & \$40.00 \\
    \hline
    Gemini-2.5-Pro & 1446 & \$10.00 \\
    \hline
    \vspace{-0.15in}
    \end{tabular}
\end{table}

\end{document}